\newtheorem{theorem}{Theorem}
\newtheorem*{lemma*}{Lemma}
\newtheorem{Proposition}{Proposition}
\newtheorem*{Proposition*}{Proposition}
\newtheorem{Lemma}{Lemma}
\newcommand\redout{\bgroup\markoverwith
{\textcolor{red}{\rule[0.5ex]{2pt}{0.8pt}}}\ULon}
\def\HiLi{\leavevmode\rlap{\hbox to \hsize{\color{yellow!50}\leaders\hrule height .8\baselineskip depth .5ex\hfill}}}
\newcommand{\mathcolorbox}[2]{\colorbox{#1}{$\displaystyle #2$}}
\title{Regret Bounds and Reinforcement Learning Exploration of EXP-based Algorithms}
\author{
  Mengfan Xu\\
  Northwestern University\\
  Evanston, IL 60208\\
  \texttt{MengfanXu2023@u.northwestern.edu} \\
  \And
  Diego Klabjan \\
  Northwestern University \\
  Evanston, IL 60208 \\
  \texttt{d-klabjan@northwestern.edu} \\
}
\begin{document}

\maketitle

\begin{abstract}
We study the challenging exploration incentive problem in both bandit and reinforcement learning, where the rewards are scale-free and potentially unbounded, driven by real-world scenarios and differing from existing work. Past works in reinforcement learning either assume costly interactions with an environment or propose algorithms finding potentially low quality local maxima. Motivated by EXP-type methods that integrate multiple agents (experts) for exploration in bandits with the assumption that rewards are bounded, we propose new algorithms, namely EXP4.P and EXP4-RL for exploration in the unbounded reward case, and demonstrate their effectiveness in these new settings. Unbounded rewards introduce challenges as the regret cannot be limited by the number of trials, and selecting suboptimal arms may lead to infinite regret.  Specifically, we establish EXP4.P's regret upper bounds in both bounded and unbounded linear and stochastic contextual bandits. Surprisingly, we also find that by including one sufficiently competent expert, EXP4.P can achieve global optimality in the linear case. This unbounded reward result is also applicable to a revised version of EXP3.P in the Multi-armed Bandit scenario. In EXP4-RL, we extend EXP4.P from bandit scenarios to reinforcement learning to incentivize exploration by multiple agents, including one high-performing agent, for both efficiency and excellence. This algorithm has been tested on difficult-to-explore games and shows significant improvements in exploration compared to state-of-the-art.
\end{abstract}

\section{Introduction}
\label{sec:1}

Reinforcement Learning (RL) is a sequential decision-making process where a player or agent selects an action from an action space, receives the action's reward, and transitions to a new state within the state space at each time step. This process' state transitions and rewards adhere to a Markov Decision Process (MDP), represented by a transition kernel. 
The player's objective is to maximize the cumulative reward, which may be discounted by a parameter, by the end of the game. A significant challenge in RL involves the trade-off between exploration and exploitation. Exploration encourages the player to try new actions or arms, enhancing understanding of the game and helping future planning, albeit at the potential cost of sacrificing the immediate rewards. Conversely, exploitation focuses on maximizing the current rewards by utilizing information of known states and actions, which may prevent the player from learning more information about the game which could help to increase future rewards. To optimize cumulative rewards, the player must balance learning the game through exploration with securing immediate rewards through exploitation.

Given the existence of the state space and the dependency of actions on it, how to incentivize exploration in RL has been a central focus. A significant line of work on RL exploration leverages deep learning techniques. Utilizing deep neural networks to track $Q$-values through $Q$-networks in RL, known as DQN, demonstrates the potent synergy between deep learning and RL, as shown by \citep{mnih2013playing}. A simple exploration strategy based on DQN, the $\epsilon$-greedy method, was introduced in \citep{mnih2015human}. Beyond $\epsilon$-greedy, intrinsic model exploration, exemplified by DORA \citep{fox2018dora} and the work of \citep{stadie2015incentivizing}, calculates intrinsic rewards that directly incentivize exploration when combined with the extrinsic (actual) rewards of RL. Random Network Distillation (RND) \citep{burda2018exploration}, a more recent approach, depends on a fixed target network but faces risks for its local focus, lacking in global exploration efforts. Another research direction explores agent-based methods. In \citep{jin2022power}, a zero-sum Markov Game involves two players, with one aiming to maximize rewards and the other to minimize the opponent's rewards. This setup encourages the reward-maximizing player to explore more by observing its opponent's performance. From a different angle, \citep{zhang2018fully} investigates a fully decentralized homogeneous multi-agent RL setting, enabling an agent to gain global environmental insights through communication with others. Nonetheless, they all rely on each agent to communicate with the environment, and such interactions can be inefficient and executing multiple actions may incur high costs. A gap remains in how to promote exploration with fewer environment calls while still considering global environmental information, marking a significant departure addressed herein.

The exploration incentive has been garnering significant attention in Multi-armed Bandit (MAB). In MAB, the goal is to maximize the cumulative reward of a player throughout a bandit game by selecting from multiple arms at each time step, or equivalently, to minimize the regret, defined as the difference between the optimal rewards achievable and the actual obtained rewards. The contextual bandit variant enriches MAB by incorporating a context or state space 
$S$ and modifying the regret definition. At time step $t$, the player has context $s_t\in S$
and rewards $r^t$ follow $f(s_t)$ for a function $f$. 
Regret is defined by comparing the actual reward with the reward that could be achieved by the best expert, namely simple regret, or by the step-wise optimal arms, namely cumulative regret. Most existing works focus on simple regret. The contextual bandit problem has been further aligned with RL when state and reward transitions follow a MDP.

Considering this relationship, extending bandit techniques to RL is a relevant step forward. UCB \citep{auer2002finite} motivates count-based exploration \citep{strehl2008analysis} in RL and the subsequent Pseudo-Count exploration \citep{bellemare2016unifying}. Nevertheless, it was initially developed for stochastic bandits and imposes constraints on how the rewards are generated. 
General and with abundant theoretical analyses are the EXP-type MAB algorithms. Specifically, the regret of EXP3.P for adversarial bandit achieves optimality both in the expected and high probability sense. In EXP3.P, each arm has a trust coefficient (weight). The player samples each arm with probability being the sum of its normalized weights and a bias term, receives reward of the sampled arm and exponentially updates the weights based on the corresponding reward estimates. It achieves the regret of the order $O(\sqrt{T})$ in a high probability sense, though not applicable to contextual bandits with the existence of a state space. To this end, a variant of the EXP-type algorithms known as EXP4 is proposed in~\citep{auer2002nonstochastic}. In EXP4, there can be any number of experts. Each expert possesses a sample rule (policy) for actions (arms) and a weight. The player samples actions based on the weighted average of the experts' sample rules and updates the weights explicitly.The work on CORRAL in \citep{agarwal2017corralling} considers a group of bandit algorithms, but it requires an implicit parameter search.  EXP4 offers exploration opportunities for RL involving multiple players and one-step interactions with the environment, aspects yet to be studied. We address this gap herein as part of our contributions.

However, the existing EXP4 or its variants suffer from a strict assumption on the scale of the rewards and cannot be directly adapted to RL. It is worth noting that EXP-type algorithms are optimal under the assumption that $0 \leq r^t_i  \leq 1 $ for any arm $i$ and step $t$. The uniformly bounded assumption is crucial in the proof of regret bounds for existing EXP-type algorithms. It requires the rewards to be scalable with the knowledge of a uniform bound for all rewards in all states or context vectors. Furthermore, In the context of contextual bandit, existing methods—whether in linear contextual bandit \citep{chu2011contextual}, where the reward function is linear in context, or in stochastic contextual bandit \citep{krishnamurthy2024proportional}, where both context and reward follow time-invariant distributions throughout the game—presume that rewards are bounded by 1. However, rewards in RL and contextual bandits can be unbounded and unscalable in real-world scenarios, violating the bounded assumption. Examples include navigation tasks, where the reward for each step moving the agent closer to the goal is unbounded, and racing tasks, where the reward is the distance covered by the agent. The adaptation of bandit algorithms to unbounded or scale-free cases remains unexplored. This necessitates a new algorithm based on EXP3.P and EXP4, along with a corresponding regret analysis, which motivates this paper.

Moreover, for EXP4, the expected simple regret is proven to be optimal in the contextual bandit scenario in \citep{auer2002nonstochastic}. Independently, \citep{neu2015explore} proposes a modification of EXP4 that achieves a high probability guarantee, which, however, necessitates changes in the reward estimates. High probability simple regret in the original form of EXP4 has not yet been explored. Furthermore, while simple regret has been extensively studied, recent focus has shifted to cumulative regret since it characterizes global optimality, even in the stochastic contextual setting \citep{krishnamurthy2024proportional}. Global optimality is especially important considering global exploration in RL, which has not yet been studied for EXP4, adding additional importance and relevance to our efforts.

To this end, in this paper, we are the first to propose a new algorithm, EXP4.P, based on EXP4, that does not alter the reward estimates in bandits with unbounded rewards. We demonstrate that its optimal simple regret holds with high probability and in expectation for both linear contextual bandits and stochastic contextual bandits, where the rewards may be unbounded. 
Extending the proof to this unbounded context is non-trivial, necessitating the application of deep results from information theory and probability. This includes establishing high-probability regret bounds in the bounded case with exponential terms and leveraging Rademacher complexity theory and sub-Gaussian properties to capture arm selection dynamics in the unbounded scenarios. Synthesizing these elements is highly technical and introduces new concepts. As a by-product, this analysis also enhances EXP3.P to yield comparable outcomes for MAB. Moreover, we also establish an upper bound on the cumulative regret in the linear case, which not only closes the existing gap, but also shows the advantage of having good enough experts for global exploration. The upper bounds for unbounded bandits necessitate a sufficiently large $T$, 
and we provide a worst-case analysis suggesting that no sublinear regret is attainable below a certain instance-specific minimum $T$, through our novel construction of instances. 

Moreover, given the challenges in the RL context where rewards can be unbounded or unrescalable—a situation not yet addressed by prior methods—we integrate the proposed scale-free EXP-type algorithms with deep RL. To achieve this, we extend the novel EXP4.P algorithm to RL, allowing for general experts by broadening the concept of experts to be any RL algorithms. In this framework, experts refine local policies through the underlying Markov process, and exponential weights are assigned to these experts to derive a globally optimal policy. This represents the first RL algorithm to leverage EXP-type exploration, ensuring that the overall performance is comparable to the best model even when the best model is unknown beforehand, thus facilitating model selection~\citep{liu2022cost}. 
To overcome the inefficiency of EXP4 and enable global exploration when dealing with many experts, we pair EXP4-RL with at least one state-of-the-art expert, motivated by the result on the cumulative regret in contextual bandits, enhancing both efficiency and performance. 
Specifically, our computational study focuses on two agents: RND and $\epsilon$-greedy DQN. We apply the EXP4-RL algorithm to challenging RL games such as Montezuma's Revenge and Mountain Car and benchmark its performance against RND \citep{burda2018exploration}. The empirical results demonstrate that our algorithm achieves superior exploration capabilities compared to RND by bypassing local maxima often encountered by RND. Additionally, it shows an increase in total reward as training progresses. Overall, our algorithm significantly enhances exploration in benchmark games.

The structure of the paper is as follows. A literature review is provided in Section \ref{sec:2}. In Section \ref{section:3} we develop a new algorithm EXP4.P by modifying EXP4, and exhibit its regret bounds for contextual bandits and that of the EXP3.P algorithm for unbounded MAB, and lower bounds. Section \ref{section:4} discusses the EXP4.P algorithm for RL exploration. Finally, in Section \ref{section:5}, we present numerical results related to the proposed algorithm.

\section{Literature Review}
\label{sec:2}

The importance of exploration in RL is well understood. Count-based exploration in RL is such a success with the UCB technique. \citep{strehl2008analysis} develop Bellman value iteration 
$V(s)=\max_a\hat{R}(s,a)+\gamma E[V(s^{\prime})]+ \beta N(s,a)^{-\frac{1}{2}}$, where $N(s,a)$ is the number of visits to $(s,a)$ for state $s$ and action $a$. Value $N(s,a)^{-\frac{1}{2}}$ is positively correlated with curiosity of $(s,a)$ and encourages exploration. This method is limited to tableau model-based MDP for small state spaces. While \citep{bellemare2016unifying} introduce Pseudo-Count exploration for non-tableau MDP with density models, it is hard to model. However, UCB achieves optimality if bandits are stochastic and may suffer linear regret otherwise~\citep{zuo2020near}. In the RL setting, such updates are inefficient and do not fit the dynamic RL setting. EXP-type algorithms for non-stochastic bandits can generalize to RL with fewer assumptions about the statistics of rewards, which have not yet been studied. Independently, the idea of utilizing multiple experts has been studied extensively. For example, \citep{jin2022power} study a zero-sum game theoretic setting and incentivize exploration by learning from the policy and trajectory of the opponent, while \citep{zhang2018fully} investigate a cooperative multi-agent learning setting where agents integrate the obtained information to make more informed decisions, with the hope of overcoming their exploitation dilemma. However, these studies all assume that different agents have varying interactions with the environment, which may be costly in the real world. In contrast, EXP-type algorithms enable multiple agents to learn from a single trajectory, necessitating our work herein. In conjunction with DQN, $\epsilon$-greedy in \citep{mnih2015human}  is a simple exploration technique using DQN.  Besides $\epsilon$-greedy, intrinsic model exploration computes intrinsic rewards by the accuracy of a model trained on experiences. Intrinsic rewards directly measure and incentivize exploration if added to actual rewards of RL, e.g. see \citep{fox2018dora, stadie2015incentivizing,burda2018exploration}. 
Random Network Distillation(RND) in~\citep{burda2018exploration} define it as $e(s^{\prime},a) = \Vert\hat{f}(s^{\prime})-f(s^{\prime})\Vert^2_2$
where $\hat{f}$ is a parametric model and $f$ is a randomly initialized but fixed model. Here $e(s^{\prime},a)$, independent of the transition, only depends on state $s^{\prime}$ and drives RND to outperform others on Montezuma's Revenge. None of these algorithms use several experts which is a significant departure from our work. 


Along the line of work on regret analyses focusing on EXP-type algorithms, \citep{auer2002nonstochastic} first introduce EXP3.P for bounded adversarial MAB and EXP4 for bounded contextual bandits. For the EXP3.P algorithm, an upper bound on regret of order $O(\sqrt{T})$ holds with high probability and in expectation, which has no gap with the lower bound and hence it establishes that EXP3.P is optimal. EXP4 is optimal for contextual bandits in the sense that its expected regret is $O(\sqrt{T})$. Then \citep{neu2015explore} extend it to a high probability counterpart by modifying the reward estimates. These regret bounds are invalid for bandits with unbounded support. Though \citep{srinivas2010gaussian} demonstrate a regret bound $O(\sqrt{T \cdot \gamma_T})$ for noisy Gaussian process bandits, information gain $\gamma_T$ is not well-defined in a noiseless setting. For noiseless Gaussian bandits, \citep{grunewalder2010regret} show both the optimal lower and upper bounds on regret, but the regret definition is not consistent with \citep{auer2002nonstochastic}. Considering the more general contextual bandit, numerous analyses have focused on simple regret \citep{beygelzimer2011contextual, chu2011contextual}, which, however, cannot uncover global optimality and thus contributes less to incentivizing global exploration. Importantly, \citep{krishnamurthy2024proportional} are the first not only to analyze the relationship between simple and cumulative regret but also to establish the corresponding regret upper bounds. Nevertheless, therein the context is assumed to be i.i.d. across time step $t$, specifically in a stochastic contextual bandit setting. Analysis on arbitrary contexts remains unexplored. We tackle these problems by establishing an upper bound of order $O^*(\sqrt{T})$ on regret 1) with high probability for bounded contextual bandit, 2) for linear and stochastic contextual bandit both in expectation and with high probability, and 3) for cumulative regret.

\section{Regret Bounds} \label{section:3}

We first introduce notations. Let $T$ be the time horizon. For bounded bandits, at step $t$, $0 < t \leq T$ rewards $r^t$ can be chosen arbitrarily under the condition that $ -1 \leq r^t \leq 1$. For unbounded bandits, let rewards $r^t$ follow multi-variate distribution $f_t(\mu, \Sigma)$ where $\mu=(\mu_1,\mu_2,\ldots,\mu_K)$ is the mean vector and $\Sigma=(a_{ij})_{i,j \in \{1,\ldots,K\}}$ is the covariance matrix of the $K$ arms and $f_t$ is the density. We specify $f_t$ to be non-degenerate sub-Gaussian for analyses on light-tailed distributions where $\min_j a_{j,j} >0$. A random variable $X$ is $\sigma^2$-sub-Gaussian if for any $t>0$, the tail probability satisfies $P(|X|>t) \leq Be^{-\sigma^2t^2}$
where $B$ is a positive constant.

The player receives reward $y_t=r^t_{a_t}$  by pulling arm $a_t$. The regret is defined as $R_T=\max_j\sum_{t=1}^{T}r^t_j-\sum_{t=1}^{T}y_t$ in adversarial bandits that depends on realizations of rewards. For contextual bandits with experts, besides the above let $N$ be the number of experts and $c_t$ be the context information. We denote the reward of expert $i$ by $G_i = \sum_{t=1}^{T} z_i(t) = \sum_{t=1}^{T} \xi_i(t)^Tx(t)$, where $x(t) = r^t$ and $\xi_i(t) = (\xi_i^1(t), \ldots, \xi_i^K(t))$ is the probability vector of expert $i$. Then regret is defined as $R_T =\max_i{G_i} - \sum_{t=1}^{T}y_t$, which is with respect to the best expert, rather than the best arm in MAB. This is reasonable since a uniform optimal arm is a special expert assigning probability 1 to the optimal arm throughout the game and experts can potentially perform better and admit higher rewards. This coincides with our generalization of EXP4.P to RL where the experts can be well-trained neural networks. We follow established definitions of pseudo regret
$R^{\prime}_T=T\cdot \max_k\mu_k-\sum_tE[y_t]$ and  $\sum_{t=1}^{T} \max_i \sum_{j=1}^K\xi_i^j(t)\mu_{j} - \sum_tE[y_t]$ in adversarial and contextual bandits, respectively.

Meanwhile, following the existing literature, we denote $R_T^{cum}$ as the cumulative regret incorporating the contextual information. More specifically, we consider a linear contextual reward model where the reward of arm $i$ at time step $t$ is formulated as $r_{i}^t = c_t^T\theta_i + \delta_{i,t}$. Here $c_t$ represents the context received at time step $t$, $\theta_i$ is the time-invariant parameter unique to arm $i$, and $\delta_{i,t}$ is the noise associated with arm $i$ at time step $t$. 
Formally, 
$R_T^{cum}$ reads as $R_T^{cum} = \sum_{t=1}^{T}\max_ic_t^T\theta_i - \sum_{t=1}^Tc_t^T\theta_{a_t}$.

Lastly, consistent with prior work, e.g., \citep{bjorklund2009set}, we use the notation $O^*(f(t))$ for a given function $f(t)$ to represent a quantity of the order $O(f(t)\log^k{f(t)})$ for some integer $k$. In other words, this notation allows us to neglect the logarithmic terms when considering the order of the quantity, which is for convenience.

\subsection{Contextual Bandits and EXP4.P Algorithm} \label{section:exp4}

For contextual bandits, \citep{auer2002nonstochastic} give the EXP4 algorithm and prove its expected regret to be optimal under the bounded assumption on rewards and under the assumption that a uniform expert is always included, where by uniform expert we refer to an expert that always assigns equal probability to each arm. Our goal is to extend EXP4 to RL where rewards are often unbounded, such as several games in OpenAI gym, for which the theoretical guarantee of EXP4 may be absent. To this end, herein we propose a new Algorithm, named EXP4.P, as a variant of EXP4. Its effectiveness is two-fold. First, we show that EXP4.P has an optimal regret with high probability in the bounded case and consequently, we claim that the regret of EXP4.P is still optimal given unbounded bandits. All the proof are in the Appendix under the aforementioned assumption on experts. Second, it is successfully extended to RL where it achieves computational improvements. 

\subsubsection{EXP4.P Algorithm}
\begin{algorithm}[h]
\caption{EXP4.P}
\label{EXP4.P}
\begin{algorithmic}
 \STATE Initialization: Weights \hl{$w_i(1)=\exp{(\frac{\alpha\gamma}{3K}\sqrt{NT})}$}, $i \in \{1,2,\ldots, N\}$ where $N$ is the number of experts for $\alpha>0$ and $\gamma\in (0,1)$; 
 \FOR{$t=1,2,\ldots,T$}
 \STATE The environment generates context $c_t$;
 \STATE Get probability vectors $\xi_{1}(t),\ldots,\xi_{N}(t)$ of arms from experts based on $c_t$ where $\xi_{i}(t) =  (\xi_{i}^{j}(t))_j$;
 \STATE For any $j =1,2,\ldots,K$, set 
      $p_j(t)=(1-\gamma)\sum_{i=1}^{N}\dfrac{w_i(t)\cdot \xi_{i}^{j}(t)}{\sum_{j=1}^{N} w_j(t)}+\frac{\gamma}{K}$;
   \STATE Choose $i_t$ randomly according to the distribution $p_1(t),\ldots, p_K(t)$;
   \STATE Receive reward $r_{i_t}(t) = x_{i_t}(t)$;
   \STATE For any $j=1,\ldots,K$, set $\hat{x}_j(t) = \frac{r_j(t)}{p_j(t)} \cdot \mathds{1}_{j=i_t}$;
    \STATE Set $\hat{x}(t) = (\hat{x}_j(t))_j$;
   \STATE For any $i = 1,\ldots,N$, set
   \STATE $\hat{z}_i(t) = \xi_{i}(t)^T\hat{x}(t)$ and
      $w_i(t+1) = w_i(t)  \exp(\frac{\gamma}{3K}(\hat{z}_i(t)$\mathcolorbox{yellow}{+\frac{\alpha}{(\frac{w_i(t)}{\sum_{j=1}^{N} w_j(t)}+  \frac{\gamma}{K} )\sqrt{NT}}}));
    \ENDFOR
\end{algorithmic}
\end{algorithm}
Our proposed EXP4.P is shown as Algorithm~\ref{EXP4.P}. The main modifications compared to EXP4 lie in the update and the initialization of trust coefficients of experts as highlighted. The upper bound of the confidence interval of the reward estimate is added to the update rule for each expert, in the spirit of EXP3.P (see Algorithm ~\ref{EXP3.P}) and removing the need of changing the reward estimate.  However, this term and initialization of EXP4.P are quite different from that in EXP3.P for MAB. 
\subsubsection{Bounded Rewards}

Borrowing the ideas of~\citep{auer2002nonstochastic}, we claim EXP4.P has an optimal sublinear regret with high probability by first establishing two lemmas presented in Appendix. The main theorem is as follows. We assume that the expert family includes a uniform expert, which is also assumed in the analysis of EXP4 in \citep{auer2002nonstochastic}.

\begin{theorem} \label{thm:highp}
Let $0 \leq r^t \leq 1$ for every $t$. For any fixed time horizon $T > 0$, for all $K, \, N \geq 2$ and for any $ 1 > \delta > 0$, $\gamma = \sqrt{\frac{3K\ln{N}}{T(\frac{2N}{3} + 1) }} \leq \frac{1}{2}$, $\alpha = 2\sqrt{K\ln{\frac{NT}{\delta}}}$, we have that with probability at least $1-\delta$, $
R_T \leq 2\sqrt{3KT\Big(\frac{2N}{3}+1\Big)\ln{N}} + 4K\sqrt{KNT\ln{\Big(\frac{NT}{\delta}\Big)}} 
 + 8NK\ln{\Big(\frac{NT}{\delta}\Big)}.$
 \end{theorem}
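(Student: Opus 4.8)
The plan is to follow the potential-function method underlying EXP3.P and EXP4, augmented by a high-probability concentration argument supplied by the two lemmas invoked just before the statement. Write $W_t = \sum_{i=1}^N w_i(t)$ for the total weight, $q_i(t) = w_i(t)/W_t$ for the normalized weight of expert $i$, and let $i^\star = \argmax_i G_i$ denote the best expert in hindsight. The entire argument is organized around bounding $\ln(W_{T+1}/W_1)$ from both sides. Throughout, let $b_i(t) = \frac{\alpha}{(q_i(t)+\gamma/K)\sqrt{NT}}$ denote the confidence bonus appearing in the update rule.

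First I would establish the upper bound on the potential growth. Expanding the update gives $W_{t+1}/W_t = \sum_i q_i(t)\exp\big(\tfrac{\gamma}{3K}(\hat z_i(t)+b_i(t))\big)$. The choices $\gamma \le \tfrac12$ and $\alpha = 2\sqrt{K\ln(NT/\delta)}$ are made precisely so that the exponent stays below $1$, which lets me apply $e^x \le 1+x+x^2$ and then $\ln(1+u)\le u$. Telescoping over $t$, together with the identity $\sum_i q_i(t)\hat z_i(t) = \frac{1}{1-\gamma}\big(y_t - \frac{\gamma}{K}\sum_j \hat x_j(t)\big)$ — which follows from $p_j(t)=(1-\gamma)\sum_i q_i(t)\xi_i^j(t)+\gamma/K$ and $\sum_j p_j(t)\hat x_j(t)=y_t$ — yields an upper bound on $\ln(W_{T+1}/W_1)$ in terms of the realized reward $\sum_t y_t$ plus controllable quadratic (variance) terms. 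For the lower bound, since all weights are initialized equally, $\ln(W_{T+1}/W_1)\ge \frac{\gamma}{3K}\sum_t(\hat z_{i^\star}(t)+b_{i^\star}(t))-\ln N$.

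Combining the two bounds and rearranging gives, deterministically, an inequality of the form $\sum_t \hat z_{i^\star}(t) \le \frac{1}{1-\gamma}\sum_t y_t + (\text{terms in } \gamma,\alpha,K,N,T)$, after absorbing the quadratic contributions using $\hat x_j(t)\le 1/p_j(t)\le K/\gamma$ and $\sum_j \hat x_j(t) = y_t/p_{i_t}(t)$. The final ingredient is the high-probability lemma: the bonus $b_i(t)$ is calibrated so that, with probability at least $1-\delta$, simultaneously for every expert $i$, one has $G_i = \sum_t z_i(t) \le \sum_t \hat z_i(t) + \sum_t b_i(t)$ up to the stated constants. Applying this at $i=i^\star$ converts the deterministic inequality into $\max_i G_i - \sum_t y_t \le R_T$; substituting the prescribed $\gamma=\sqrt{3K\ln N/(T(\tfrac{2N}{3}+1))}$ and $\alpha=2\sqrt{K\ln(NT/\delta)}$ then separates the error into three scales — the variance term balanced by $\gamma$ giving $2\sqrt{3KT(\tfrac{2N}{3}+1)\ln N}$, the term linear in $\alpha$ giving $4K\sqrt{KNT\ln(NT/\delta)}$, and the $\alpha^2$ term giving $8NK\ln(NT/\delta)$.

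I expect the main obstacle to be the high-probability lemma itself. Unlike the single-arm EXP3.P estimator, here $\hat z_i(t)=\sum_j \xi_i^j(t)\hat x_j(t)$ is a context-weighted mixture of importance-weighted arm rewards, so its conditional variance must be controlled by the expert-level quantity $q_i(t)+\gamma/K$ rather than by a single $p_j(t)$. Constructing the right exponential supermartingale — one whose drift is cancelled exactly by $\frac{\gamma}{3K}b_i(t)$ so that its moment-generating function is bounded by $1$ — and then union-bounding over the $N$ experts (and over time, which is where the $\ln(NT/\delta)$ factor originates) is the delicate step. Ensuring that the bonus appearing in the algorithm's \emph{weight update} is the same one required for concentration, while keeping the exponent small enough for the Taylor bound in the potential argument, is the technical balancing act that makes the contextual high-probability bound substantially harder than its adversarial-MAB counterpart.
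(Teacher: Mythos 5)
Your proposal follows essentially the same route as the paper: a deterministic potential-function bound on $\ln(W_{T+1}/W_1)$ (the paper's Lemma 2, using $e^x\le 1+x+x^2$ after checking the exponent is at most $1$, and lower-bounding the potential by the best expert's weight) combined with a high-probability confidence lemma stating that $\hat G_i$ plus the accumulated bonus dominates $G_i$ simultaneously for all $N$ experts (the paper's Lemma 1, proved exactly via the exponential supermartingale-type variable whose drift is cancelled by the bonus, with Markov's inequality and a union bound producing the $\ln(NT/\delta)$ factor). The structure, the role of each parameter choice, and the three-scale decomposition of the final bound all match the paper's argument, so the proposal is correct and not materially different.
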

Theorem~\ref{thm:highp} implies $R_T \leq O^*(\sqrt{T})$. The regret bound does depend on $N$. In practice the number of experts is small compared to the time horizon and
the independence among experts makes parallelism a possibility. Note that $\gamma < \frac{1}{2}$ for large enough $T$. The proof of Theorem~\ref{thm:highp} essentially relies on the convergence of the reward estimators, similar to that in~\citep{auer2002nonstochastic}. However, the objectives are different from \citep{auer2002nonstochastic}, since our estimations and update of trust coefficients in EXP4.P are for experts, instead of EXP3.P for arms. This characterize the relationships among EXP4.P estimates and the actual value of experts' rewards and the total rewards gained by EXP4.P and brings non-trivial challenges.

\subsubsection{Linear Contextual Bandit with Unbounded Rewards}
General reward is hard to analyze due to the fact that global optimality may be intractable if the reward function is completely block-box in the given context and there are no assumptions about the distribution of contexts. To this end, some literature assumes that the contexts follow a time-invariant distribution; for example, recent work in characterizing global optimality through cumulative regret, see \citep{krishnamurthy2024proportional}. Nevertheless, stochasticity of context can be limiting especially when considering the real-world scenarios. In a separate line of work, it is common to assume a linear reward structure, see \citep{chu2011contextual}. However, therein rewards are assumed to be bounded and global optimality has not yet been studied, to the best of our knowledge. For this reason, we assume that the reward is linear in the context which reads as $r_{i,t} = c_t^T\theta_i + \delta_{i,t}$. Here $\delta_{i,t}$ follows a $\sigma_{i,t}^2$-sub-Gaussian distribution with mean $0$ where $\sigma_{i,t} \leq \sigma$ and is independent across time step $t$.  

\begin{theorem}\label{thm:linear_hp}
Let context $c_t$ be chosen arbitrarily and meets the condition that $||c_t||, ||\theta_i|| \leq 1$, without loss of generality. Then we have that with probability $(1-\delta) \cdot (1-\frac{1}{T^a})^T$ the regret of EXP4.P is $R_{T} \leq \log(1/\delta) O^*(\sqrt{T}).$
\end{theorem}

Note that we do not assume any bound on $\delta_{i,t}$, unlike the prior  work. The proof of Theorem \ref{thm:linear_hp} follows that of Theorem 3, since the rewards are still sub-Gaussian and the variance proxies are bounded by the same parameter $\sigma$. Besides this high probability regret bound, we also establish the upper bound on the pseudo regret $R^{\prime}_T$ and expected regret $E[R_T]$. The formal statement reads as follows.

\begin{theorem}\label{th:simple_LP}
    Assume the same condition as in Theorem~\ref{thm:linear_hp}. Then we have $R_T^{\prime} \leq E[R_T] \leq O^*(\sqrt{T}).$
\end{theorem}

The formal proof is deferred to Appendix; here, we present the proof logic. The proof of this theorem differs significantly from that of Theorem 6, since the rewards are no longer i.i.d. distributed. We first bound the absolute difference between $R_T$ and $R_T^{\prime}$ by analyzing the non-stationary sub-Gaussian behaviors of all the rewards. Next, we decompose the expected regret $E[R_T]$ by characterizing it across different events to ensure that the value and the probability of the events cannot be too large simultaneously. In other words, either the probability of an event is small when $R_T$ is large, or the value of $R_T$ itself is small. This allows us to control $E[R_T]$ within the range of $O^*(\sqrt{T})$. Subsequently, using Jensen's inequality immediately leads to the conclusion of the first part of the inequality in the statement.

What we have established pertains to the simple regret for any policy class. Surprisingly, we obtain the following upper bound for the cumulative regret when the policy class includes an optimal policy. To the best of our knowledge, the prior work studying both simple and cumulative regret considers a stochastic contextual bandit setting \citep{krishnamurthy2024proportional}. Our finding closes the gap in the linear contextual bandit setting under certain assumption. The formal statement reads as follows.

\begin{theorem}\label{th:cum_R_T}
    Assume the same condition as in Theorem~\ref{thm:linear_hp}. If the cumulative regret is upper bounded by $G(T)$, then the simple regret is upper bounded by $\max{\{O^*(\sqrt{T}), G(T)\}}$ for some function $G(T)$. Moreover, if there is a policy in the policy class $\bar{\pi} \in \{\pi_{j}^t\}_{1\leq j \leq K}^{1 \leq t \leq T}$ such that $\sum_{t=1}^T\sum_{j=1}^K\bar{\pi}_j^t\mu_{j,t} \geq \sum_{t=1}^T\max_{j}\mu_{j,t}  - F(T)$
for some function $F(T)$, then the cumulative regret of EXP4.P satisfies $R_T^{cum} \leq \max{\{O^*(\sqrt{T}), F(T)\}}. $
    
\end{theorem}

The complete proof is in Appendix. The proof sketch is as follows. We characterize the difference between the cumulative and simple regret, and relate this difference to the gap between step-wise optimality and global optimality. The latter is determined by the performance of the policy class. With an optimal policy, we obtain the sublinear regret as stated. 

The existence of such a policy is shown as follows. If the rewards are bounded, then LinUCB~\cite{chu2011contextual} meets the condition with $F(T) = O^*(\sqrt{T})$, $G(T) = O^*(\sqrt{T})$. If the contexts are order preserving in terms of the parameter vector $\theta$ then any optimal policy in terms of simple regret also meets the condition, since it is now also optimal in terms of cumulative regret.

This theorem demonstrates the benefits of utilizing proficient experts within the EXP4.P algorithm and fundamentally motivates extending EXP4.P to RL, building upon existing state-of-the-art methods. More specifically, it suggests that if an expert can achieve step-wise optimality (e.g., $F(T) = \sqrt{T}$), then EXP4.P can attain a similar outcome with $R_T^{cum} \leq \sqrt{T}$, enabling global exploration. Besides the theoretical statement, we also elaborate on this idea through an extensive computational study in Section \ref{section:4}.

\subsubsection{Stochastic Contextual Bandit with Unbounded Rewards}

We proceed to show optimal regret bounds of EXP4.P for unbounded contextual bandit. Again, a uniform expert is assumed to be included in the expert family. Surprisingly, we report that the analysis can be adapted to the existing EXP3.P in next section, which leads to optimal regret in MAB under no bounded assumption which is also a new result.

\begin{theorem}\label{th:5_new}
For sub-Gaussian bandits, any time horizon $T$, for any $0<\eta<1$, $0 < \delta < 1$ and $\gamma,\alpha$ as in Theorem~\ref{thm:highp}, with probability at least $(1-\delta)(1-\eta)^T$, EXP4.P has regret $R_{T} \leq 4 \Delta(\eta) \Big(2\sqrt{3KT\Big(\frac{2N}{3}+1\Big)\ln{N}}\Big) + 4 \Delta(\eta)\Big(4K\sqrt{KNT\ln{\Big(\frac{NT}{\delta}\Big)}} + 8NK\ln{\Big(\frac{NT}{\delta}\Big)}\Big) $
where $\Delta(\eta)$ is determined by
$\int_{-\Delta}^{\Delta} \ldots \int_{-\Delta}^{\Delta} f\Big(x_{1}, \ldots, x_{K}\Big) d x_{1} \ldots d{x_{K}}=1-\eta$ which yields $\Delta(\eta)$ of $O(\frac{1}{a}\log{\frac{1}{\eta}})$.
\end{theorem}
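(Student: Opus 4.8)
The plan is to reduce the sub-Gaussian case to the bounded case of Theorem~\ref{thm:highp} by a truncation argument on a high-probability ``good event'' combined with an affine rescaling of rewards. First I would fix $\Delta = \Delta(\eta)$ via the displayed integral, so that at a single step the probability that the entire reward vector lands in the box $[-\Delta,\Delta]^K$ equals $1-\eta$, i.e. $P(r^t \in [-\Delta,\Delta]^K) = 1-\eta$. Define the good event $E = \bigcap_{t=1}^T \{r^t \in [-\Delta,\Delta]^K\}$. Assuming the per-step reward vectors are independent with common density $f$, independence across the $T$ steps gives $P(E) = (1-\eta)^T$. On $E$ every reward the game produces — pulled or not — is bounded in absolute value by $\Delta$.

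Second, on $E$ I would pass to the affinely normalized rewards $\tilde{r}^t_j = \frac{r^t_j + \Delta}{2\Delta} \in [0,1]$ and apply Theorem~\ref{thm:highp} to this bounded instance with the stated $\gamma,\alpha$. The observation that makes the reduction clean is that the additive shift is common to all arms while each expert's probability vector satisfies $\sum_j \xi_i^j(t) = 1$; hence the constant cancels in every expert reward $G_i$ and in the player's reward, and only the multiplicative factor survives. Along the same action sequence the original and normalized regrets therefore satisfy $R_T = 2\Delta\,\tilde{R}_T$. Theorem~\ref{thm:highp} bounds $\tilde{R}_T$ by the bracketed expression $B(\delta)$ with probability at least $1-\delta$, conditionally on the (now bounded) reward realizations, since in the contextual analysis the failure probability $\delta$ is taken over the algorithm's internal sampling only. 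Multiplying by $2\Delta$ — and tracking how the symmetric range and the shifted confidence term are handled, which is what inflates the constant to the stated $4\Delta(\eta)$ — yields $R_T \le 4\Delta(\eta)\,B(\delta)$ on $E$.

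Third, I would combine the two probabilistic guarantees. Writing the success event as the intersection of $E$ with the bounded-case success event, and using that the bounded bound holds with conditional probability at least $1-\delta$ for every reward sequence in $E$, I obtain
\[
P\bigl(R_T \le 4\Delta(\eta)\,B(\delta)\bigr) \ge P(E)\,(1-\delta) = (1-\delta)(1-\eta)^T,
\]
which is exactly the claimed confidence level.

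The step I expect to be the main obstacle is making the rescaling fully rigorous rather than merely formal, because EXP4.P is not scale invariant: its weight update mixes the estimate $\hat{z}_i(t)$ with the confidence term governed by $\alpha$, so ``running EXP4.P on $r^t$'' and ``on $\tilde{r}^t$'' generate genuinely different trajectories. One must therefore either (i) define the algorithm to operate on the normalized rewards, so that the identity $R_T = 2\Delta\,\tilde{R}_T$ applies verbatim and ``$\gamma,\alpha$ as in Theorem~\ref{thm:highp}'' refers to the normalized problem, or (ii) re-derive the two lemmas underlying Theorem~\ref{thm:highp} directly for rewards of magnitude at most $\Delta$ and check that the fixed choices of $\gamma,\alpha$ still deliver the bound up to the stated constant. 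A secondary point requiring care is the stationarity/independence hypothesis implicit in writing $P(E) = (1-\eta)^T$ with a single density $f$: if the per-step densities $f_t$ differ, $\Delta(\eta)$ should be taken as a uniform truncation radius and the product bound re-justified.
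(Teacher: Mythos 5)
Your proposal follows essentially the same route as the paper's proof: truncate the rewards to the box $[-\Delta(\eta),\Delta(\eta)]^K$, note this holds simultaneously for all $T$ steps with probability at least $(1-\eta)^T$, affinely rescale the bounded rewards into $[0,1]$, apply Theorem~\ref{thm:highp}, and multiply the two probability guarantees. The scale-invariance subtlety you flag is legitimate but is equally present (and glossed over) in the paper's own argument, which even writes the normalization as $(\bar r_j^t+\Delta)/\Delta$ — apparently a typo for $2\Delta$ — so your version is, if anything, the more careful one.
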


In the proof of Theorem \ref{th:5_new}, we first perform truncation of the rewards of sub-Gaussian bandits by dividing the rewards to a bounded part and unbounded tail. For the bounded part, we directly apply the upper bound on regret of EXP4.P presented in Theorem~\ref{thm:highp} and conclude with the regret upper bound of order $O(\Delta(\eta)\sqrt{T})$. Since a sub-Gaussian distribution is a light-tailed distribution we can control the probability of the tail, i.e. the unbounded part, which leads to the overall result. 

The dependence of the bound on $\Delta$ can be removed by considering large enough $T$ as stated next.

\begin{theorem} \label{th:6_new}
For sub-Gaussian bandits, for any $a>2$, $0<\delta <1$, and $\gamma,\alpha$ as in Theorem~\ref{thm:highp}, EXP4.P has regret
 $R_{T} \leq \log(1/\delta) O^*(\sqrt{T})$ with probability $(1-\delta) \cdot (1-\frac{1}{T^a})^T. $
\end{theorem}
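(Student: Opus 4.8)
The plan is to specialize Theorem~\ref{th:5_new} to the choice $\eta = T^{-a}$ and then to show that the resulting prefactor $\Delta(\eta)$ grows only polylogarithmically in $T$, so that it is swallowed by the $O^*$ notation. With $\eta = T^{-a}$ the success probability in Theorem~\ref{th:5_new} becomes exactly $(1-\delta)(1 - T^{-a})^T$, which matches the statement verbatim. The hypothesis $a > 2$ enters here: it guarantees that $(1 - T^{-a})^T = \exp\!\big(T\ln(1 - T^{-a})\big) \to 1$ as $T$ grows, since the failure probability is of order $T^{1-a}$, which is summable; thus the bound is a genuine high-probability statement.

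The crux is to control $\Delta(\eta)$, defined implicitly by $\int_{-\Delta}^{\Delta}\cdots\int_{-\Delta}^{\Delta} f(x_1,\ldots,x_K)\,dx_1\cdots dx_K = 1 - \eta$, equivalently $P(\exists j:\ |X_j| > \Delta) = \eta$. Rather than wrestle with the correlated multivariate density $f$ directly, I would bound the complementary event by a union bound over coordinates and invoke the sub-Gaussian tail. By non-degeneracy ($\min_j a_{jj} > 0$) each marginal satisfies $P(|X_j| > t) \le B\,e^{-\sigma^2 t^2}$, so $P(\exists j:\ |X_j| > t) \le K B\,e^{-\sigma^2 t^2}$. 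Setting the right-hand side equal to $\eta$ shows that any half-width $\Delta_0 \ge \tfrac{1}{\sigma}\sqrt{\ln(KB/\eta)}$ already captures probability at least $1 - \eta$; since the box probability is monotone nondecreasing in its half-width, this forces $\Delta(\eta) \le \tfrac{1}{\sigma}\sqrt{\ln(KB/\eta)}$. Substituting $\eta = T^{-a}$ yields $\Delta(T^{-a}) \le \tfrac{1}{\sigma}\sqrt{\ln(KB) + a\ln T} = O(\sqrt{\ln T})$.

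It then remains to plug this estimate into the bound of Theorem~\ref{th:5_new} and collect factors. Treating $K$ and $N$ as constants, the bracketed expression there is dominated by $\sqrt{T\ln(NT/\delta)}$, so the whole bound is of order $\Delta(\eta)\sqrt{T\ln(NT/\delta)}$. Using $\Delta(\eta) = O(\sqrt{\ln T})$ together with $\sqrt{\ln(NT/\delta)} \le \sqrt{\ln(NT)} + \sqrt{\ln(1/\delta)}$, every factor of $\ln T$ and $\ln N$ is polylogarithmic in $T$ and hence absorbed into $O^*$, while the dependence on the confidence level is at most $\sqrt{\ln(1/\delta)} \le \log(1/\delta)$. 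This delivers $R_T \le \log(1/\delta)\,O^*(\sqrt{T})$, as claimed.

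I expect the second step to be the main obstacle: passing from the implicit, fully multivariate definition of $\Delta(\eta)$ to an explicit $O(\sqrt{\ln T})$ estimate. The delicate points are that the joint law of the rewards is correlated, so the box probability does not factorize (the union bound on the complementary event is what sidesteps this), and that one must genuinely exploit non-degeneracy so that each marginal carries a well-defined sub-Gaussian tail with a positive variance proxy; without $\min_j a_{jj} > 0$ the per-coordinate tail could degenerate and $\Delta(\eta)$ would no longer be controllable. Everything after this reduction is routine bookkeeping inside the $O^*$ notation.
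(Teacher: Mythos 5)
Your proposal is correct, and it reaches the theorem by a genuinely different route for the one step that carries all the content. The paper proves this result via Lemma~\ref{le:9_new}: it fixes a truncation level $\Delta(T)=\log T$, forms $\zeta(T)=F\bigl(\Delta(T)\cdot\mathbf{1}\bigr)-\bigl(1-T^{-a}\bigr)$ where $F$ is the box probability, and shows $\zeta(T)>0$ for large $T$ by a calculus argument ($\zeta'<0$ eventually and $\zeta\to 0$), bounding the boundary integrals by $e^{-\frac{1}{2}\min_j a_{jj}\Delta(T)^2+\max_j\mu_j\Delta(T)}$ using the explicit Gaussian-like form of the joint density; it then feeds $\eta=T^{-a}$ into the truncation argument of Theorem~\ref{th:5_new} and absorbs the $\log T$ factor into $O^*$. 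You instead bound $\Delta(\eta)$ directly: a union bound over coordinates plus the tail-probability definition of sub-Gaussianity gives $\Delta(\eta)\le\frac{1}{\sigma}\sqrt{\ln(KB/\eta)}=O(\sqrt{\ln T})$ at $\eta=T^{-a}$, using monotonicity of the box probability in its half-width. Your route is more elementary and arguably more robust: it needs only the tail inequality $P(|X_j|>t)\le Be^{-\sigma^2 t^2}$ rather than a specific functional form of the joint density (the paper's factorization of $f$ at $x_i=\Delta(T)$ is really a Gaussian-density manipulation), it dispenses with the differentiability and growth hypotheses on $\Delta(\cdot)$ that Lemma~\ref{le:9_new} imposes, it holds without the ``$T$ large enough'' caveat, and it yields the slightly sharper truncation level $O(\sqrt{\ln T})$ versus $\log T$ --- though both are erased by $O^*$. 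What the paper's formulation buys in exchange is a reusable lemma parameterized by an arbitrary admissible $\Delta(T)$. Two small points of hygiene, neither fatal and both shared with the paper: the step $\sqrt{\ln(1/\delta)}\le\log(1/\delta)$ requires $\delta\le 1/e$ (for $\delta$ near $1$ the stated $\log(1/\delta)$ prefactor degenerates), and the union bound presumes the marginal tail constants $B,\sigma$ are uniform over coordinates, which is the natural reading of the paper's non-degeneracy assumption.
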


Note that the constant term in $O^*(\cdot)$ depends on $a$. The above theorems deal with $R_T$; an upper bound on pseudo regret or expected regret is established next. It is easy to verify by the Jensen's inequality that $R^{\prime}_T \leq E[R_T]$ and thus it suffices to obtain an upper bound on $E[R_T]$. 

For bounded bandits, the upper bound for $E[R_T]$ is of the same order as $R_T$ which follows by a simple argument. For sub-Gaussian bandits, establishing an upper bound on $E[R_T]$ or $R^{\prime}_T$ based on $R_T$ requires more work.
We show an upper bound on $E[R_T]$ by using certain inequalities, limit theories, and Rademacher complexity. To this end, the main result reads as follows.

\begin{theorem} \label{th:7_new}
The regret of EXP4.P for sub-Gaussian bandits satisfies
$R_T^{\prime} \leq E\left[R_{T}\right] \leq O^*(\sqrt{T})$ under the assumptions stated in Theorem~\ref{th:6_new}.
\end{theorem}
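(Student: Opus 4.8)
The plan is to treat the two inequalities separately, spending essentially all the effort on the right one, $E[R_T] \le O^*(\sqrt{T})$. The left inequality $R_T' \le E[R_T]$ is exactly the Jensen step flagged before the theorem: writing $R_T = \max_i G_i - \sum_t y_t$ with $G_i = \sum_t \xi_i(t)^\top x(t)$ and $E[x(t)] = \mu$, convexity of the maximum gives $E[\max_i G_i] \ge \max_i E[G_i] = \max_i \sum_t \sum_j \xi_i^j(t)\mu_j$, which dominates the pseudo-regret's comparator after both sides subtract $\sum_t E[y_t]$. So I would only cite this and reduce the theorem to bounding $E[R_T]$ from above by $O^*(\sqrt{T})$.

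For $E[R_T]$, the idea is to split on the high-probability event rather than integrate a single tail, since the rewards are unbounded and $R_T$ has no deterministic ceiling. Let $\mathcal{G}$ be the event of Theorem~\ref{th:6_new}, obtained from Theorem~\ref{th:5_new} by the choice $\eta = T^{-a}$ with $a>2$; on $\mathcal{G}$ we have $R_T \le u^\star := \log(1/\delta)\,O^*(\sqrt{T})$, and $P(\mathcal{G}^c) \le 1-(1-\delta)(1-T^{-a})^T$. Taking $\delta$ of order $1/T$ keeps $\log(1/\delta)$ polylogarithmic, so $E[R_T\,\mathds{1}_{\mathcal{G}}] \le u^\star = O^*(\sqrt{T})$ and, using the elementary limit $(1-T^{-a})^T \to 1$ for $a>2$, one gets $P(\mathcal{G}^c) = O(\delta + T^{1-a}) = O(1/T)$ for $T$ large. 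The remaining task is the off-event term $E[R_T\,\mathds{1}_{\mathcal{G}^c}]$, which I would control by Cauchy--Schwarz, $E[R_T\,\mathds{1}_{\mathcal{G}^c}] \le \sqrt{E[R_T^2]}\,\sqrt{P(\mathcal{G}^c)}$.

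The main obstacle is therefore the second-moment bound $E[R_T^2]$ under unbounded sub-Gaussian rewards, and making the Cauchy--Schwarz product land at $O^*(\sqrt{T})$. Here I would use $R_T^2 \le 2(\max_i G_i)^2 + 2(\sum_t y_t)^2$ together with a maximal inequality for the supremum of the $N$ sub-Gaussian expert rewards: since each $G_i$ is a sum of sub-Gaussian increments (the tail bound $P(|X|>t)\le B e^{-\sigma^2 t^2}$ controls each coordinate of $x(t)$), a Rademacher-complexity estimate of the expert class bounds $E[\max_i G_i]$ and its fluctuations, yielding $E[R_T^2] = O^*(T^2)$ and hence $\sqrt{E[R_T^2]} = O^*(T)$. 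Combining this with $\sqrt{P(\mathcal{G}^c)} = O(T^{-1/2})$ gives $E[R_T\,\mathds{1}_{\mathcal{G}^c}] = O^*(\sqrt{T})$, where the requirement $a>2$ and ``$T$ large enough'' are precisely what force the probability to decay fast enough to absorb the $O^*(T)$ moment. Adding the two pieces yields $E[R_T] \le O^*(\sqrt{T})$, and with the Jensen step this delivers $R_T' \le E[R_T] \le O^*(\sqrt{T})$. The delicate points I would watch are keeping the cancellation between $\max_i G_i$ and $\sum_t y_t$ when estimating moments (to avoid an honest $O(T)$ leakage into the main term), handling correlations across arms through $\Sigma$ in the sub-Gaussian maximal inequality, and the dominated-convergence/limit argument that legitimizes the moment bound as the truncation level grows.
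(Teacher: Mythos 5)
Your proposal is correct in substance but handles the crucial off-event term by a genuinely different mechanism than the paper. The paper first proves a standalone deviation estimate $E\left[\left|R^{\prime}_T-R_{T}\right|\right]\leq O^*(\sqrt{T})$ via Rademacher complexity (Lemmas~\ref{lemma:8} and~\ref{lemma:9}) and a sub-Gaussian maximal inequality (Lemma~\ref{lemmaY}), then splits $E[R_T]$ into \emph{three} regions ($R_T\le O^*(\sqrt T)$, the intermediate band, and $R_T\ge O^*(\sqrt T)+E[R_T]$), controlling the last two with the crude first-moment bound $E[R_T]\le C_0 T\log T$ multiplied by the failure probability $1-(1-\delta)(1-T^{-a})^T$, with $\delta=1/\sqrt T$ and $\eta=T^{-a}$. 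You instead use a two-way split and Cauchy--Schwarz, $E[R_T\mathds{1}_{\mathcal{G}^c}]\le\sqrt{E[R_T^2]}\sqrt{P(\mathcal{G}^c)}$, with $E[R_T^2]=O^*(T^2)$ and $P(\mathcal{G}^c)=O(1/T)$. This works, and is arguably cleaner: the second moment follows from the elementary bound $|R_T|\le 2\sum_t\max_j|r_j^t|$ plus Cauchy--Schwarz and finiteness of sub-Gaussian second moments, so you do not actually need the Rademacher machinery or any cancellation between $\max_i G_i$ and $\sum_t y_t$ (that machinery is only needed in the paper to control $E[|R_T-R_T'|]$, which your route bypasses). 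Two bookkeeping points deserve emphasis: first, your choice $\delta\sim 1/T$ is essential, since the paper's $\delta=1/\sqrt T$ would give $\sqrt{P(\mathcal{G}^c)}=O(T^{-1/4})$ and your product would land at $O^*(T^{3/4})$; second, the exponents are tight, $O^*(T)\cdot O(T^{-1/2})$ hitting $O^*(\sqrt T)$ exactly, so any loss in the moment bound would break the argument. Finally, note that your justification of the left inequality via $E[\max_i G_i]\ge\max_i E[G_i]$ does not by itself dominate the contextual pseudo-regret comparator, which is $\sum_t\max_i\xi_i(t)^\top\mu$ rather than $\max_i\sum_t\xi_i(t)^\top\mu$; the paper asserts the same Jensen step without elaboration, so this looseness is shared with the source rather than a defect specific to your argument.
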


\subsection{MAB and EXP3.P Algorithm} \label{section:exp3}

In this section, we establish upper bounds on regret in MAB given a high probability regret bound achieved by EXP3.P in \citep{auer2002nonstochastic}. We revisit EXP3.P and analyze its regret in unbounded scenarios

line with EXP4.P. Formally, we show that EXP3.P achieves regret of order $O^*(\sqrt{T})$ in sub-Gaussian MAB, with respect to $R_T$, $E[R_T]$ and $R^{\prime}_T$. The results are summarized as follows.

\begin{theorem} \label{th:5}
For sub-Gaussian MAB, any $T$, for any $0<\eta ,\delta<1$, $\gamma = 2\sqrt{\frac{3K\ln{K}}{5T}}$, $\alpha = 2\sqrt{\ln{\frac{NT}{\delta}}}$, EXP3.P has regret
$R_{T} \leq 4 \Delta(\eta)\cdot ( \sqrt{KT\log(\frac{KT}{\delta})}+4\sqrt{\frac{5}{3}KT\log K}+8\log(\frac{KT}{\delta})) $ with probability  $(1-\delta)(1-\eta)^T$ where $\Delta(\eta) = O(\frac{1}{a}\log{\frac{1}{\eta}})$, i.e.
$\int_{-\Delta}^{\Delta} \ldots \int_{-\Delta}^{\Delta} f\Big(x_{1}, \ldots, x_{K}\Big) d x_{1} \ldots d{x_{K}}=1-\eta.$
\end{theorem}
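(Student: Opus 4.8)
The plan is to mirror the truncation argument sketched for Theorem~\ref{th:5_new}, replacing EXP4.P by EXP3.P and the high-probability bound of Theorem~\ref{thm:highp} by the known high-probability regret guarantee for EXP3.P on bounded rewards from~\citet{auer2002nonstochastic}. Recall that, with the parameter choices of the statement, EXP3.P run on any reward sequence with entries in $[0,1]$ satisfies, with probability at least $1-\delta$ over its internal randomization,
\[
R_T \le \sqrt{KT\log(KT/\delta)} + 4\sqrt{\tfrac{5}{3}KT\log K} + 8\log(KT/\delta),
\]
which is exactly the bracketed expression in the claim. This bounded bound is the engine; the remainder of the proof converts the sub-Gaussian instance into a bounded one on a high-probability event and absorbs the associated loss into the prefactor $4\Delta(\eta)$.

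First I would introduce the threshold $\Delta=\Delta(\eta)$ through the box integral $\int_{-\Delta}^{\Delta}\cdots\int_{-\Delta}^{\Delta} f(x_1,\dots,x_K)\,dx_1\cdots dx_K = 1-\eta$, so that at a single step the probability that all $K$ arm rewards lie in $[-\Delta,\Delta]$ equals $1-\eta$. Let $E_t$ denote this per-step event and $E=\bigcap_{t=1}^T E_t$; since the reward vectors are drawn independently across steps, $\Pr(E)=(1-\eta)^T$. Because the comparator $\max_j\sum_t r^t_j$ ranges over all arms and all steps, bounding it requires every one of the $KT$ rewards to be controlled, which is precisely what $E$ provides: on $E$ the whole stream is a bounded instance of range $2\Delta$. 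Since the regret is invariant under a common additive shift of all arms' rewards and the importance-weighted EXP3.P quantities are homogeneous of degree one in the reward magnitude, the bounded bound transfers up to a multiplicative factor proportional to $\Delta(\eta)$; making this precise is deferred to the lemma re-derivation below.

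To combine the two sources of randomness I would condition on $E$. The bounded-case guarantee holds uniformly over every admissible (range-bounded) reward sequence, hence it holds for the conditioned, now-bounded reward process, and because EXP3.P's internal coin flips are independent of the reward realizations the failure probabilities multiply. Thus on the intersection of $E$ with the $1-\delta$ event of the bounded bound, an event of probability at least $(1-\delta)(1-\eta)^T$, the regret is at most $4\Delta(\eta)$ times the bracketed quantity. Note that the tail (the complement of $E$) is not bounded in value but simply excluded from the high-probability event, which is exactly what yields the probability $(1-\delta)(1-\eta)^T$ rather than an expectation-type statement.

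The main obstacle is the scaling step, which is more delicate than a one-line affine change of variables. The parameters $\gamma,\alpha$ are fixed independently of $\Delta$, whereas on $E$ the estimators $\hat x_j = r_j/p_j$ inflate by a factor of order $\Delta$; one must verify that the EXP3.P potential argument still closes with these enlarged estimates and track exactly how $\Delta$ propagates through the exponential update, since this determines that the prefactor is $4\Delta(\eta)$ (absorbing the range $2\Delta$ together with the constants from re-deriving the bounded inequalities) rather than a cleaner $2\Delta$. This is the same re-derivation that distinguishes the present lemmas from those of~\citet{auer2002nonstochastic}, so I would establish the MAB analogues of those lemmas carrying the $\Delta$ dependence explicitly and then specialize them on the event $E$ to conclude.
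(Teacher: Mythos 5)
Your proposal follows essentially the same route as the paper's proof: fix $\Delta(\eta)$ via the box integral, take the intersection event $A=\{|r_i^t|\le\Delta,\ \forall i,t\}$ of probability $(1-\eta)^T$, apply the bounded-reward EXP3.P guarantee of \citet{auer2002nonstochastic} on that event, and multiply the failure probabilities to obtain $(1-\delta)(1-\eta)^T$ with the $4\Delta(\eta)$ prefactor. The only difference is that where you propose re-deriving the EXP3.P potential lemmas with explicit $\Delta$-dependence, the paper simply renormalizes the truncated rewards affinely into $[0,1]$ and invokes the bounded bound as a black box, letting the range $2\Delta$ scale the regret linearly back out (your caution about whether the algorithm's trajectory on raw versus rescaled rewards coincides is legitimate, but the paper does not address it either).
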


To proof Theorem \ref{th:5}, we again do truncation. We apply the bounded result of EXP3.P in~\citep{auer2002nonstochastic} and achieve a regret upper bound of order $O(\Delta(\eta)\sqrt{T})$. The proof is similar to the proof of Theorem~\ref{th:5_new} for EXP4.P. 

Similarly, we remove the dependence of the bound on $\Delta$ in Theorem~\ref{th:6} and claim a bound on the expected regret for sufficiently large $T$ in Theorem~\ref{th:7}.
\begin{algorithm}[h]
\caption{EXP3.P}
\label{EXP3.P}
\begin{algorithmic}
 \STATE Initialization: Weights $w_i(1)=\exp{(\frac{\alpha\gamma}{3}\sqrt{\frac{T}{K}})}$, $i \in \{1,2,\ldots, K\}$ for $\alpha>0$ and $\gamma\in (0,1)$;
 \FOR{$t=1,2,\ldots,T$}
   \STATE For any $i =1,2,\ldots,K$, set $p_i(t)=(1-\gamma)\frac{w_i(t)}{\sum_{j=1}^Kw_j(t)}+\frac{\gamma}{K}$;
   \STATE Choose $i_t$ randomly according to the distribution $p_1(t),\ldots, p_K(t)$;
   \STATE Receive reward $r_{i_t}(t)$;
   \STATE For $1 \leq j \leq K$, set $\hat{x}_j(t) = \frac{r_j(t)}{p_j(t)} \cdot \mathds{1}_{j=i_t}$ and $w_j(t+1) = w_j(t)  \exp{\frac{\gamma}{3K}(\hat{x}_j(t)+\frac{\alpha}{p_j(t)\sqrt{KT}})}$;
 \ENDFOR
\end{algorithmic}
\end{algorithm}
\vspace{-3mm}
\begin{theorem} \label{th:6}
For sub-Gaussian MAB, for $a>2$, $0<\delta <1$, and $\gamma,\alpha$ as in Theorem~\ref{th:5}, EXP3.P has regret
 $R_{T} \leq \log(1/\delta) O^*(\sqrt{T})$ with probability $(1-\delta) \cdot (1-\frac{1}{T^a})^T. $
\end{theorem}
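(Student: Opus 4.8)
The plan is to obtain Theorem~\ref{th:6} as a direct corollary of Theorem~\ref{th:5} by specializing the free parameter to $\eta = 1/T^a$ and then showing that the accompanying factor $\Delta(\eta)$ contributes only a polylogarithmic factor in $T$, which the $O^*(\sqrt{T})$ notation absorbs. This is precisely the MAB/EXP3.P counterpart of the passage from Theorem~\ref{th:5_new} to Theorem~\ref{th:6_new}, so I would follow that argument verbatim with the EXP3.P bound of Theorem~\ref{th:5} replacing the EXP4.P bound of Theorem~\ref{th:5_new}.

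First I would substitute $\eta = 1/T^a$ into Theorem~\ref{th:5}. This instantly produces the claimed success probability $(1-\delta)\cdot(1-1/T^a)^T$ and the regret bound
\[
R_T \leq 4\,\Delta(1/T^a)\left(\sqrt{KT\log\tfrac{KT}{\delta}} + 4\sqrt{\tfrac{5}{3}KT\log K} + 8\log\tfrac{KT}{\delta}\right),
\]
so all that remains is to control $\Delta(1/T^a)$ and to repackage the $\delta$-dependence.

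The key step is an upper bound on $\Delta(\eta)$ via the sub-Gaussian tail. The complement of the box $[-\Delta,\Delta]^K$ is contained in $\bigcup_{j}\{|x_j|>\Delta\}$, so a union bound over the $K$ coordinates together with the marginal estimate $P(|X_j|>\Delta)\leq B e^{-\sigma^2\Delta^2}$ (each marginal is non-degenerate sub-Gaussian since $\min_j a_{jj}>0$, taking a common $B$ and the smallest $\sigma$) shows the box misses at most $KB\,e^{-\sigma^2\Delta^2}$ of the mass. Since $\Delta(\eta)$ is the monotone quantity defined by requiring this missed mass to be $\eta$, forcing $KB\,e^{-\sigma^2\Delta^2}=\eta$ yields $\Delta(\eta)\leq \sigma^{-1}\sqrt{\ln(KB/\eta)}$. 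With $\eta=1/T^a$ this gives $\Delta(1/T^a)\leq \sigma^{-1}\sqrt{\ln(KB)+a\ln T}=O(\sqrt{\ln T})$, with the hidden constant scaling like $\sqrt{a}$.

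Finally I would assemble the pieces. Writing $\log(KT/\delta)=\log(KT)+\log(1/\delta)$ and using $a+b\leq 2ab$ for $a,b\geq 1$, the dominant term obeys $\sqrt{KT\log(KT/\delta)}\leq \log(1/\delta)\,\sqrt{2KT\log(KT)}$ in the relevant regime (small $\delta$, large $T$), so the bracketed factor is $\log(1/\delta)\,O^*(\sqrt{T})$ with the $\delta$-dependence pulled out. Multiplying by the $O(\sqrt{\ln T})$ bound on $\Delta$ merely appends an extra $\sqrt{\ln T}$, which $O^*$ suppresses, collapsing the product to $\log(1/\delta)\,O^*(\sqrt{T})$ as claimed (the constant in $O^*$ depending on $a$ as noted after the statement). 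The only genuinely delicate step, and the one worth writing out carefully, is the box-tail estimate bounding $\Delta(\eta)$; everything else is absorption bookkeeping. The hypothesis $a>2$ is not needed for this logarithmic absorption (any $a>1$ keeps $(1-1/T^a)^T\to 1$); I would keep it as stated for consistency with the expected-regret refinement in Theorem~\ref{th:7}.
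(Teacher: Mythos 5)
Your proposal is correct and arrives at the theorem by the same overall route as the paper---specialize $\eta = T^{-a}$ in Theorem~\ref{th:5} and show that the resulting truncation level $\Delta$ is polylogarithmic in $T$---but it handles the one substantive step differently. The paper (Lemma~\ref{le:9}) does not invert the tail to bound $\Delta(\eta)$; it instead fixes the candidate $\Delta(T)=\log T$ in advance, sets $\zeta(T)=F(\Delta(T)\cdot\mathbf{1})-(1-T^{-a})$ with $F$ the box probability, and argues via $\zeta'(T)<0$ for large $T$ together with $\zeta(T)\to 0$ that $\zeta(T)>0$ eventually, i.e.\ the box of radius $\log T$ carries mass at least $1-T^{-a}$; Theorem~\ref{th:6} then follows because $\log T\cdot O^*(\sqrt{T})=O^*(\sqrt{T})$. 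Your route---union bound over the $K$ coordinates plus the marginal sub-Gaussian tail $P(|X_j|>\Delta)\le Be^{-\sigma^2\Delta^2}$, giving $\Delta(T^{-a})\le \sigma^{-1}\sqrt{\ln(KB)+a\ln T}$---is more elementary, dispenses with the differentiability and bounded-derivative hypotheses on $\Delta(\cdot)$ and the gradient estimate for $F$, gives the slightly sharper $\Delta=O(\sqrt{\log T})$, and is arguably more faithful to the stated sub-Gaussian hypothesis (the paper's Lemma~\ref{le:9} manipulates the joint density as if it were Gaussian, via the factorization $f(x|_{x_i=\Delta})\le e^{-\frac12 a_{ii}\Delta^2+\mu_i\Delta}e^{g(x_{-i})}$). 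Two minor points to tighten: replacing the exact $\Delta(\eta)$ (defined by equality of the box mass with $1-\eta$) by your upper bound is legitimate only because the truncation argument in Theorem~\ref{th:5} needs just $F(\Delta\cdot\mathbf{1})\ge 1-\eta$ and the regret bound is increasing in $\Delta$, which is worth saying explicitly; and the factorization $\log(KT/\delta)\le 2\log(1/\delta)\log(KT)$ requires $\delta\le 1/e$, so the displayed $\log(1/\delta)\,O^*(\sqrt{T})$ form holds only in that regime (though the paper is no more careful on this point). Your observation that $a>2$ is not needed here and only matters for the expected-regret result in Theorem~\ref{th:7} is also correct.
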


\begin{theorem} \label{th:7}
The regret of EXP3.P in sub-Gaussian MAB satisfies $R^{\prime}_T \le E\left[R_{T}\right] \leq O^*(\sqrt{T})$ with the same assumptions as in Theorem~\ref{th:6}.
\end{theorem}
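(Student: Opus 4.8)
The plan is to reduce both inequalities to a single bound on $E[R_T]$, and then convert the high-probability statement of Theorem~\ref{th:6} into that expectation bound. The first inequality $R_T' \le E[R_T]$ is immediate: since the maximum is convex, $E[\max_j \sum_t r_j^t] \ge \max_j E[\sum_t r_j^t] = T\max_k \mu_k$, and subtracting the common term $\sum_t E[y_t]$ from both sides gives $R_T' \le E[R_T]$, exactly as noted in the text. Hence the entire task is to show $E[R_T] \le O^*(\sqrt{T})$.

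First I would fix $a>2$ and choose $\delta$ to be an inverse polynomial in $T$, say $\delta = T^{-2}$. Let $A$ denote the event on which the conclusion of Theorem~\ref{th:6} holds, so that $R_T \le \log(1/\delta)\,O^*(\sqrt{T})$ on $A$ and $P(A^c) \le \delta + \bigl(1-(1-T^{-a})^T\bigr) \le \delta + T^{1-a}$, the last step using Bernoulli's inequality. With $\delta = T^{-2}$ the factor $\log(1/\delta) = 2\log T$ is polylogarithmic and is absorbed into $O^*(\cdot)$, so the good-event contribution $E[R_T;A] \le \log(1/\delta)\,O^*(\sqrt{T})$ is already of the target order.

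The core of the argument is controlling the bad-event contribution $E[R_T;A^c]$. Writing $R_T = \max_j \sum_t r_j^t - \sum_t y_t$, I would apply Cauchy--Schwarz to obtain $E[R_T;A^c] \le \sqrt{E[R_T^2]}\,\sqrt{P(A^c)}$ and then bound the second moment using sub-Gaussianity: each partial sum $\sum_t r_j^t$ and the played sum $\sum_t y_t$ has second moment $O(T^2)$ (the mean part contributes $O(T^2)$ and the fluctuation part $O(T)$, the maximum over the $K$ arms being handled either crudely by $(\max_j S_j)^2 \le \sum_j S_j^2$ or sharply via a sub-Gaussian maximal inequality of Massart/Rademacher type). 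This yields $E[R_T^2] = O(\mathrm{poly}(K)\,T^2)$, hence $\sqrt{E[R_T^2]} = O^*(T)$. Combining with $\sqrt{P(A^c)} \le \sqrt{\delta + T^{1-a}}$ and using $a>2$ together with $\delta = T^{-2}$, the product is $O^*(T)\cdot o(T^{-1/2}) = o(\sqrt{T})$, so it is negligible. Adding the two contributions gives $E[R_T] \le O^*(\sqrt{T})$, and with the Jensen step this completes the chain $R_T' \le E[R_T] \le O^*(\sqrt{T})$.

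The main obstacle I anticipate is precisely this bad-event term: because the rewards are only sub-Gaussian, $R_T$ is unbounded, and one must simultaneously verify that its second moment grows only polynomially in $T$ and that the truncation level $\eta = T^{-a}$ inherited from Theorem~\ref{th:6} drives $P(A^c)$ below $T^{-1}$. The hypothesis $a>2$ is exactly what guarantees $T^{1-a} = o(T^{-1})$, so the polynomial growth of $\sqrt{E[R_T^2]}$ is beaten by the decay of $\sqrt{P(A^c)}$; a weaker choice of $a$ would leave a residual term of order $\sqrt{T}$ or larger and break the bound. This is the MAB specialization of the argument behind Theorem~\ref{th:7_new}, with the maximum taken over the $K$ arms rather than the $N$ experts.
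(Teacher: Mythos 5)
Your proposal is correct, but the way you handle the bad event is genuinely different from the paper's proof. The paper splits $E[R_T]$ into three pieces: the good event, a ``moderately bad'' event $\{O^*(\sqrt{T})<R_T<O^*(\sqrt{T})+E[R_T]\}$ controlled by the crude first-moment bound $E[R_T]\le C_0 T\log T$ times a vanishing probability, and a ``very bad'' tail $\{R_T\ge O^*(\sqrt{T})+E[R_T]\}$ whose treatment requires the separate estimate $E[|R_T-R^{\prime}_T|]=O^*(\sqrt{T})$, obtained via the Rademacher-complexity machinery of Lemmas~\ref{lemma:8} and~\ref{lemma:9} together with the sub-Gaussian maximal inequality. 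You replace all of this with a single Cauchy--Schwarz step $E[R_T\mathds{1}_{A^c}]\le\sqrt{E[R_T^2]}\sqrt{P(A^c)}$ and the elementary second-moment bound $E[R_T^2]=O(\mathrm{poly}(K)\,T^2)$, which avoids the Rademacher lemmas entirely and is cleaner. The price is your parameter choice: you need $\delta$ polynomially small (e.g.\ $\delta=T^{-2}$) so that $\sqrt{P(A^c)}=o(T^{-1/2})$ beats the $O^*(T)$ from the square root of the second moment; with the paper's choice $\delta=1/\sqrt{T}$ your Cauchy--Schwarz step would only give $O^*(T^{3/4})$, whereas the paper's first-moment argument tolerates that larger $\delta$ because $T\log T\cdot T^{-1/2}=O^*(\sqrt{T})$. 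Both proofs fix a specific $\delta=\delta(T)$ and hence prove the claim for that instantiation of the algorithm, so nothing is lost there; your route also sidesteps the slightly sloppy limit computation in the paper's equation~(\ref{eq:103}), where the $\delta$-contribution to $1-P_1$ is dropped. All the steps you flag as needing verification (the Bernoulli bound on $P(A^c)$, the $O(K^2T^2)$ second moment via $|\sum_t y_t|\le\sum_t\sum_j|r_j^t|$, and the Jensen step for $R^{\prime}_T\le E[R_T]$) do go through routinely.
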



\section{EXP4.P Algorithm for RL}
\label{section:4}

EXP4 has shown effectiveness in contextual bandits with statistical validity. Therefore, in this section, we extend EXP4.P to RL in Algorithm \ref{exp4_rl} where rewards are assumed to be nonnegative.

The player has experts that are represented by deep $Q$-networks trained by RL algorithms (there is a one to one correspondence between the experts and $Q$-networks). Each expert also has a trust coefficient. Trust coefficients are also updated exponentially based on the reward estimates as in EXP4.P. At each step of one episode, the player samples an expert ($Q$-network) with probability that is proportional to the weighted average of expert's trust coefficients. Then $\epsilon$-greedy DQN is applied on the chosen $Q$-network. Here different from EXP4.P, the player needs to store all the interaction tuples in the experience buffer since RL is a MDP. After one episode, the player trains all $Q$-networks with the experience buffer and uses the trained networks as experts for the next episode. 
\begin{algorithm}[h]
\caption{EXP4-RL}
\label{exp4_rl}
\begin{algorithmic}
 \STATE Initialization: Trust coefficients $w_k=1$ for any $k\in \{1,\dots,E\}$, $E=$ number of experts ($Q$-networks), $K=$ number of actions, $\Delta,\epsilon,\eta>0$ and temperature $z,\tau>0$, $n_r=-\infty$ (an upper bound on reward);
 \WHILE{True}
  \STATE Initialize episode by setting $s_0$\;
  \FOR{$i=1,2,\dots,T$\text{(length of episode)}}
   \STATE Observe state $s_{i}$;
   \STATE Let probability of $Q_k$-network be $\rho_k=(1-\eta)\frac{w_k}{\sum_{j=1}^E w_j}+\frac{\eta}{E}$;
   \STATE Sample network $\bar{k}$ according to $\{\rho_k\}_k$;
   \STATE For $Q_{\bar{k}}$-network, use $\epsilon$-greedy to sample an action:  $a^{*}=argmax_{a}Q_{\bar{k}}(s_i,a), j \in \{1,2,\ldots,K\}, \pi_{j} = (1-\epsilon)\cdot \mathds{1}_{j=a^{*}} + \frac{\epsilon}{K-1} \cdot \mathds{1}_{j \neq a^*}
   $;
   \STATE Sample action $a_i$ based on $\pi$;
   \STATE Interact with the environment to receive reward $r_i$ and next state $s_{i+1}$;
   \STATE $n_r=\max\{r_i,n_r\}$;
   \STATE Update the trust coefficient $w_k$ of each $Q_k$-network as follows:
    $P_{k}=\epsilon\text{-greedy}(Q_k),\hat{x}_{kj}=1-\frac{\mathds{1}_{j=a^*}}{P_{kj}+\Delta}(1 - \frac{r_i}{n_r}), \forall j, 
     y_k=E[\hat{x}_{kj}],
     w_k=w_k \cdot e^{\frac{y_k}{z}}$;
    \STATE Store ($s_i,a_i,r_i,s_{i+1}$) in experience replay buffer $B$;
   \ENDFOR
  \STATE Update each expert's $Q_k$-network from buffer $B$\;
 \ENDWHILE
 \end{algorithmic}
\end{algorithm}
\vspace{-1mm}
The basic idea is the same as in EXP4.P by using the experts that give advice vectors with deep $Q$-networks. It is a combination of deep neural networks with EXP4.P updates. From a different point of view, we can also view it as an ensemble in classification \citep{xia2011ensemble}, by treating $Q$-networks as ensembles in RL. While general experts can be used, these are natural in a DQN framework. In our implementation and experiments we use two experts, thus $E=2$ with two $Q$-networks. The first one is based on RND \citep{burda2018exploration} while the second one is a simple DQN. To this end, 
in the algorithm before storing to the buffer, we also record 
$c^i_r=||\hat{f}(s_{i})-f(s_{i})||^2$, the RND intrinsic reward as in \citep{burda2018exploration}. This value is then added to the 4-tuple pushed to $B$. When updating $Q_1$ corresponding to RND at the end of an iteration in the algorithm, 
by using $r_j+c^j_r$ we modify the $Q_1$-network and by using $c^j_r$ an update to $\hat{f}$ is executed. Network $Q_2$ pertaining to $\epsilon$-greedy is updated directly by using $r_j$. Intuitively, Algorithm \ref{exp4_rl} circumvents RND's drawback with the total exploration guided by two experts with EXP4.P updated trust coefficients. When the RND expert drives high exploration, its trust coefficient leads to a high total exploration. When it has low exploration, the second expert DQN should have a high one and it incentivizes the total exploration accordingly. Trust coefficients are updated by reward estimates iteratively as in EXP4.P, so they keep track of the long-term performance of experts and then guide the total exploration globally. These dynamics of EXP4.P combined with intrinsic rewards guarantee global exploration. The experimental results exhibited in the next section verify this intuition regarding exploration behind Algorithm \ref{exp4_rl}.

We point out that potentially more general RL algorithms based on $Q$-factors can be used, e.g., boostrapped DQN \citep{osband2016deep}, random prioritized DQN \citep{osband2018randomized} or adaptive $\epsilon$-greedy VDBE \citep{tokic2010adaptive} are a possibility. Furthermore, experts in EXP4 can even be policy networks trained by PPO \citep{schulman2017proximal} instead of DQN for exploration. A recommendation is to have a good enough expert and a small number of experts. 
\section{Computational Study}
\label{section:5}

As a numerical demonstration of the superior performance and exploration incentive of Algorithm~\ref{exp4_rl}, we show the improvements on baselines on two hard-to-explore RL games, Mountain Car and Montezuma's Revenge. More precisely, we present that the real reward on Mountain Car improves significantly by Algorithm~\ref{exp4_rl} in Section~\ref{section:5.1}. Then we implement Algorithm~\ref{exp4_rl} on Montezuma's Revenge and show the growing and remarkable improvement of exploration in Section~\ref{section:5.2}. Intrinsic reward $c^i_r=||\hat{f}(s_{i})-f(s_{i})||^2$ given by intrinsic model $\hat{f}$ represents the exploration of RND in \citep{burda2018exploration} as introduced in Sections \ref{sec:2} and \ref{section:4}. We use the same criterion for evaluating exploration performance of our algorithm and RND herein. RND incentivizes local exploration with the single step intrinsic reward but with the absence of global exploration.

\subsection{Mountain Car}
\label{section:5.1}
In this part, we summarize the experimental results of Algorithm \ref{exp4_rl} on Mountain Car, a classical control RL game. This game has very sparse positive rewards, which brings the necessity and hardness of exploration.  Blog post \citep{orrivlin2019} shows that RND  based on DQN improves the performance of traditional DQN, since RND has intrinsic reward to incentivize exploration. We use RND on DQN from \citep{orrivlin2019} as the baseline and show the real reward improvement of Algorithm \ref{exp4_rl}, which supports the intuition and superiority of the algorithm. 

The comparison between Algorithm \ref{exp4_rl} and RND is presented in Figure \ref{fig:3}. Here the x-axis is the epoch number and the y-axis is the cumulative reward of that epoch. Figure \ref{fig3:subim3_1} shows the raw data comparison between EXP4-RL and RND. We observe that though at first RND has several spikes exceeding those of EXP4-RL, EXP4-RL has much higher rewards than RND after 300 epochs. Overall, the relative difference of areas under the curve (AUC) is 4.9\% for EXP4-RL over RND, which indicates the significant improvement of our algorithm. This improvement is better illustrated in Figure \ref{fig3:subim3_2} with the smoothed reward values. Here there is a notable difference between EXP4-RL and RND. Note that the maximum reward hit by EXP4-RL is $-86$ and the one by RND is $-118$, which additionally demonstrates our improvement on RND. 

We conclude that Algorithm \ref{exp4_rl} performs better than the RND baseline and that the improvement increases at the later training stage. Exploration brought by Algorithm \ref{exp4_rl} gains real reward on this hard-to-explore Mountain Car, compared to the RND counterpart (without the DQN expert). The power of our algorithm can be enhanced by adopting more complex experts, not limited to only DQN. 

\subsection{Montezuma's Revenge and Pure exploration setting}
\label{section:5.2}
In this section, we show the experimental details of  Algorithm \ref{exp4_rl} on Montezuma's Revenge, another notoriously hard-to-explore RL game. The benchmark on Montezuma's Revenge is RND based on DQN which achieves a reward of zero in our environment (the PPO algorithm reported in \citep{burda2018exploration} has reward 8,000 with many more computing resources; we ran the PPO-based RND with 10 parallel environments and 800 epochs to observe that the reward is also 0), which indicates that DQN has room for improvement regarding exploration.

\begin{figure*}[tb]

\noindent\begin{minipage}[b]{.4\textwidth}
\centering
\subfloat[original]{\includegraphics[width=0.4\linewidth, keepaspectratio]{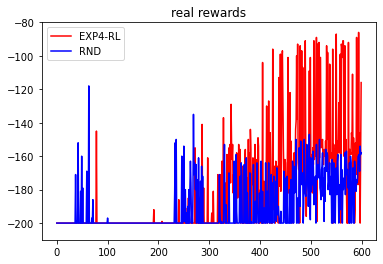}\label{fig3:subim3_1}}
\subfloat[smooth]{\includegraphics[width=0.4\linewidth, keepaspectratio]{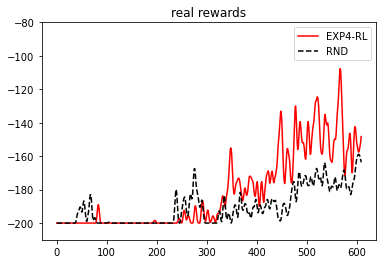}\label{fig3:subim3_2}}
\caption{The performance of Algorithm \ref{exp4_rl} and RND measured by the epoch-wise reward on Mountain Car} 
\label{fig:3}
\end{minipage}%
\vspace{-6mm}
\begin{minipage}[b]{.6\textwidth}
\subfloat[small]{\includegraphics[width=0.3\linewidth, keepaspectratio]{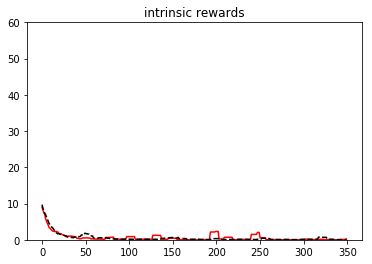}\label{fig1:subim1}}
\subfloat[medium]{\includegraphics[width=0.3\linewidth, keepaspectratio]{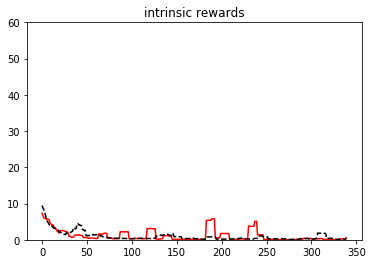}\label{fig1:subim2}}
\subfloat[large]{\includegraphics[width=0.3\linewidth, keepaspectratio]{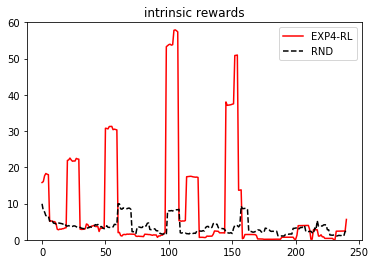}\label{fig1:subim3}}
\caption{The performance of Algorithm \ref{exp4_rl} and RND measured by intrinsic reward without parallel environments with three different burn-in periods}
\label{fig:1}
\end{minipage}\par\medskip

\centering
\subfloat[$Q$-network losses with \hfill \break 0.25 update]{\includegraphics[width=0.25\linewidth, keepaspectratio]{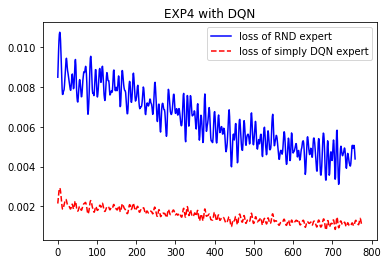}\label{fig2:subim1}}
\subfloat[Intrinsic reward after \hfill \break \indent smoothing with 0.25 update]{\includegraphics[width=0.25\linewidth, keepaspectratio]{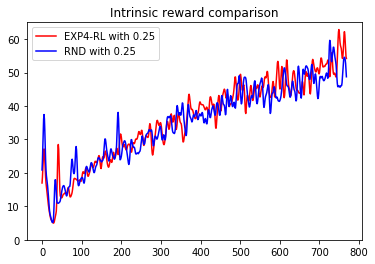}\label{fig2:subim3}}
\quad
\subfloat[Intrinsic reward after \hfill \break \hfill smoothing with 0.125]{\includegraphics[width=0.25\linewidth, keepaspectratio]{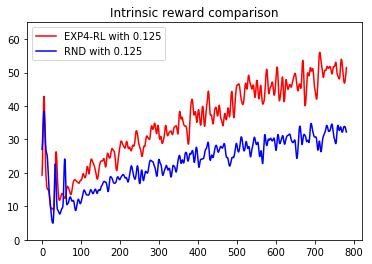}\label{fig3:subim2}}
\caption{The performance of Algorithm \ref{exp4_rl} and RND with 10 parallel environments and with RND update probability 0.25 and 0.125, measured by loss and intrinsic reward.}
\label{fig:2}
\vspace{-7mm}
\end{figure*}
To this end, we first implement the DQN-version RND (called simply RND hereafter) on Montezuma's Revenge as our benchmark by replacing the PPO with DQN. Then we implement Algorithm \ref{exp4_rl} with two experts as aforementioned. Our computing environment allows at most 10 parallel environments.
In subsequent figures the x-axis always corresponds to the number of epochs. RND update probability is the proportion of experience that are used for training the intrinsic model $\hat{f}$~\citep{burda2018exploration}.

A comparison between Algorithm \ref{exp4_rl} (EXP4-RL) and RND without parallel environments (the update probability is 100\% since it is a single environment) is shown in Figure \ref{fig:1} with the emphasis on exploration by means of the intrinsic reward. We use 3 different numbers of burn-in periods (58, 68, 167 burn-in epochs) to remove the initial training steps, which is common in Gibbs sampling. 
Overall EXP4-RL outperforms RND with many significant spikes in the intrinsic rewards. The larger the number of burn-in periods is, the more significant is the dominance of EXP4-RL over RND. EXP4-RL has much higher exploration than RND at some epochs and stays close to RND at other epochs. At some epochs, EXP4-RL even has 6 times higher exploration. The relative difference in the areas under the curves are 6.9\%, 17.0\%, 146.0\%, respectively, which quantifies the much better performance of EXP4-RL.

We next compare EXP4-RL and RND with 10 parallel environments and different RND update probabilities in Figure \ref{fig:2}. The experiences are generated by the 10 parallel environments.

Figure \ref{fig2:subim1} shows that both experts in EXP4-RL are learning with decreasing losses of their $Q$-networks. The drop is steeper for the RND expert but it starts with a higher loss. With RND update probability 0.25 in Figure \ref{fig2:subim3} we observe that EXP4-RL and RND are very close when RND exhibits high exploration. When RND is at its local minima, EXP4-RL outperforms it. Usually these local minima are driven by sticking to local maxima and then training the model intensively at local maxima, typical of the RND local exploration behavior. EXP4-RL improves on RND as training progresses, e.g. the improvement after 550 epochs is higher than the one between epochs 250 and 550. In terms for AUC, this is expressed by 1.6\% and 3.5\%, respectively. Overall, EXP4-RL improves RND local minima of exploration, keeps high exploration of RND and induces a smoother global exploration. 


With the update probability of 0.125 in Figure \ref{fig3:subim2}, EXP4-RL almost always outperforms RND with a notable difference. The improvement also increases with epochs and is dramatically larger at RND's local minima. These local minima appear more frequently in training of RND, so our improvement is more significant as well as crucial. The relative AUC improvement is 49.4\%. The excellent performance in Figure \ref{fig3:subim2} additionally shows that EXP4-RL improves RND with global exploration by improving local minima of RND or not staying at local maxima.

Overall, with either 0.25 or 0.125, EXP4-RL incentivizes global exploration on RND by not getting stuck in local exploration maxima and outperforms RND exploration aggressively. With 0.125 the improvement with respect to RND is more significant and steady. This experimental evidence verifies our intuition behind EXP4-RL and provides excellent support for it. With experts being more advanced RL exploration algorithms, e.g. DORA, EXP4-RL can bring additional possibilities.

\bibliography{neurips_2024}

\clearpage
\newpage

\appendix
\section{Lower Bounds on Regret} \label{section:lower_bound}

Algorithms can suffer extremely large regret without enough exploration when playing unbounded bandits given small $T$. To argue that our bounds on regret are not loose, we derive a lower bound on the regret for sub-Gaussian bandits that essentially suggests that no sublinear regret can be achieved if $T$ is less than an instance-dependent bound. 
The main technique is to construct instances 
that have certain regret, no matter what strategies are deployed. We need the following assumption. 


\paragraph{Assumption 1} There are two types of arms with general $K$ with one type being superior ($S$ is the set of superior arms) and the other being inferior ($I$ is the set of inferior arms).
Let $1-q,q$ be the proportions of the superior and inferior arms, respectively which is known to the adversary and clearly $0\le q\le 1$. The arms in $S$ are indistinguishable and so are those in $I$. The first pull of the player has two steps. 
First the player selects an inferior or superior set of arms based on $P(S) = 1-q, P(I)= q$ and once a set is selected, the corresponding reward of an arm from the selected set is received. 

An interesting special case of Assumption 1 is the case of two arms and $q=1/2$. In this case, the player has no prior knowledge and in the first pull chooses an arm uniformly at random. 


The lower bound is defined as $R_L(T)=\inf \sup R^{\prime}_T$, where, first, $\inf$ is taken among all the strategies and then $\sup$ is among all Gaussian MAB. The following is the main result for lower bounds based on inferior arms being distributed as $\mathcal{N}(0,1)$ and superior as $\mathcal{N}(\mu,1)$ with $\mu > 0$.  
\begin{theorem} \label{Th:2} 
In Gaussian MAB under Assumption 1, for any $q\ge  1/3$ we have
$R_L(T) \geq (q-\epsilon) \cdot \mu \cdot T$, 
where $\mu$ has to satisfy $G(q,\mu)<q$ with $\epsilon$ and $T$ determined by
$G(q,\mu) <\epsilon <q,
         T\leq \frac{\epsilon-G(q,\mu)}{(1-q) \cdot \int\left|e^{-\frac{x^2}{2}}-e^{-\frac{(x-\mu)^2}{2}}\right|}+2$
where $G(q,\mu)$ is $
\max \{\int|qe^{-\frac{x^2}{2}}-(1-q)e^{-\frac{(x-\mu)^2}{2}}| dx, \newline \int|(1-q)e^{-\frac{x^2}{2}}-qe^{-\frac{(x-\mu)^2}{2}}| dx\}$.
\end{theorem}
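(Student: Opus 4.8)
The plan is to establish the bound by a minimax two-point (Le Cam) argument: I will exhibit two Gaussian instances compatible with Assumption~1 that are statistically indistinguishable within $T$ rounds, and argue that every strategy must suffer pseudo-regret at least $(q-\epsilon)\mu T$ on one of them, so that the $\sup$ over instances in $R_L(T)=\inf\sup R'_T$ is at least this much. The first reduction is to express $R'_T$ in terms of the only quantity that matters, namely how often an inferior arm is pulled. In any such instance the optimal mean is $\mu$ and every inferior pull yields expected reward $0$, so the per-step decomposition of the pseudo-regret gives
\[
R'_T \;=\; \mu\sum_{t=1}^{T} P\big(\text{an inferior arm is pulled at step } t\big) \;=\; \mu\cdot\E[N_I],
\]
where $N_I$ is the total number of inferior pulls. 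It therefore suffices to show that, on a worst-case instance, $\E[N_I]\ge (q-\epsilon)T$, i.e.\ the player cannot drive its inferior-pull frequency appreciably below the prior rate $q$ dictated by the forced first pull.

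Next I would quantify the player's ability to tell the two labelings of the arm types apart. By Assumption~1 the first pull is superior with probability $1-q$ and inferior with probability $q$, and thereafter the only signal is a stream of rewards from $N(\mu,1)$ (superior) or $N(0,1)$ (inferior). The best systematic bias toward the superior type that the player can extract from the structural first observation is captured by $G(q,\mu)$, which measures—via the $L^1$ separation between the prior-weighted superior and inferior reward densities, maximized over the two labelings—the ``free'' distinguishing advantage available before any adaptive learning; the $\max$ conservatively accounts for whichever labeling is more favorable to the player. Each subsequent round can add at most the weighted per-round total-variation increment $(1-q)\int\big|e^{-x^2/2}-e^{-(x-\mu)^2/2}\big|\,dx$ to this distinguishing power, by the subadditivity of total variation along the interaction.

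Finally I would combine these. As long as $T$ is small enough that the accumulated distinguishing power stays below $\epsilon$, that is
\[
G(q,\mu) + (T-2)\,(1-q)\int\big|e^{-x^2/2}-e^{-(x-\mu)^2/2}\big|\,dx \;\le\; \epsilon,
\]
which is exactly the stated threshold on $T$, the player's type-choice distribution at every step remains within $\epsilon$ (in the relevant total-variation sense) of the prior, so the inferior-pull probability is at least $q-\epsilon$ at each step and hence $\E[N_I]\ge(q-\epsilon)T$. Substituting into the regret reduction yields $R_L(T)\ge(q-\epsilon)\mu T$, with the side conditions $G(q,\mu)<\epsilon<q$ (needed for a positive bound) and $G(q,\mu)<q$ (needed for a feasible $\epsilon$) emerging naturally.

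The main obstacle I anticipate is the accumulation step. Because the player is adaptive, its action at round $t$ depends on the whole observed history, so the cumulative distinguishability is not a plain product of per-round distances; the argument must decompose the $T$-round total variation correctly, most naturally through a hybrid/chain-rule argument that swaps the two labelings one round at a time, and must verify that each swap costs at most the claimed $(1-q)\int|\cdots|$ increment while attributing the baseline $G(q,\mu)$ to the forced first pull. Pinning down the constants and the ``$+2$'' offset, checking that the $\max$ in $G(q,\mu)$ genuinely upper-bounds the adversary-optimal labeling, and confirming that the hypothesis $q\ge 1/3$ indeed makes the feasibility set $\{\mu>0:G(q,\mu)<q\}$ nonempty, is where the delicate bookkeeping lies.
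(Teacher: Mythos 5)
Your plan follows essentially the same route as the paper: the reduction $R'_T=\mu\,\E[N_I]$, the accumulation condition $G(q,\mu)+(T-2)(1-q)\int\left|f_0-f_1\right|\le\epsilon$, and the hybrid argument that swaps the two labelings one coordinate at a time to pay one total-variation increment per round are exactly the content of the paper's Lemma 5 and its application (with feasibility of $q\ge 1/3$ handled separately, as in the paper's Proposition 1). The only cosmetic difference is that the paper establishes the aggregate bound $\left|\E[B]-(1-q)T\right|\le\epsilon T$ on the expected number of superior pulls rather than your per-step claim, but the aggregate version is all that is needed to conclude $\E[N_I]\ge(q-\epsilon)T$.
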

To prove Theorem \ref{Th:2}, we construct a special subset of Gaussian MAB with equal variances and zero covariances. On these instances we find a unique way to explicitly represent any policy. This builds a connection between abstract policies and this concrete mathematical representation. Then we show that pseudo regret $R^{\prime}_T$ must be greater than certain values no matter what policies are deployed, which indicates a regret lower bound on this subset of instances.  

Feasibility of the aforementioned conditions is established in the following theorem. 
\begin{theorem} \label{thm:new}
In Gaussian MAB under Assumption 1, 
for any $q \ge 1/3$, there exist $\mu$ and $\epsilon, \epsilon < \mu$ such that $ R_L(T)\ge (q-\epsilon) \cdot \mu \cdot T$. 
\end{theorem}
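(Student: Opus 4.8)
The plan is to read Theorem~\ref{thm:new} as a \emph{feasibility} statement for the hypotheses of Theorem~\ref{Th:2}. That theorem already yields $R_L(T) \ge (q-\epsilon)\mu T$ whenever $\mu>0$, $\epsilon$, and $T$ satisfy $G(q,\mu)<q$, $G(q,\mu)<\epsilon<q$, and $T \le \frac{\epsilon - G(q,\mu)}{(1-q)\int|e^{-x^2/2}-e^{-(x-\mu)^2/2}|} + 2$. Hence it suffices to produce, for each $q \ge 1/3$, a pair $(\mu,\epsilon)$ with $\epsilon<\mu$ meeting these constraints; the $T$-inequality is then automatically non-vacuous, since $\epsilon>G(q,\mu)$ and $1-q>0$ make its right-hand side strictly exceed $2$, so at least the horizons $T\in\{1,2\}$ qualify. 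Collecting the $\epsilon$-constraints, the task reduces to exhibiting a single $\mu>0$ with $G(q,\mu)<\min(q,\mu)$ and then choosing any $\epsilon$ in the nonempty interval $(G(q,\mu),\min(q,\mu))$.

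First I would analyze $\mu \mapsto G(q,\mu)$. It is continuous in $\mu$ by dominated convergence, and at $\mu=0$ the two Gaussians coincide, so each integrand collapses to $|2q-1|\phi(x)$ with $\phi$ the standard normal density, giving $\lim_{\mu\to 0^+}G(q,\mu)=|2q-1|$. The elementary inequality $|2q-1|<q$ holds exactly when $q>1/3$, which is the source of the hypothesis $q\ge 1/3$; the boundary $q=1/3$ is degenerate and I would treat it as a limiting case in which the bound degenerates to $0$. At the other extreme, as $\mu\to\infty$ the supports separate and $G(q,\mu)\to 1>q$. Establishing that $G(q,\cdot)$ is increasing—separating two weighted Gaussians can only enlarge this total-variation-type integral—then pins down a finite threshold $\mu_{\max}$ with $G(q,\mu)<q$ precisely on $(0,\mu_{\max})$.

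With this picture I would locate $\mu$ by an intermediate-value argument. Consider the planar curve $\mu\mapsto(\mu,G(q,\mu))$: it starts at $(0,|2q-1|)$, which lies below the horizontal line $y=q$ but above the diagonal $y=\mu$, and tends to $(\infty,1)$, which lies above $y=q$ but below the diagonal. The region I want is $\{y<q\}\cap\{y<\mu\}$, below both lines. Choosing $\mu$ in the window $(q,\mu_{\max})$—nonempty once I verify $\mu_{\max}>q$, equivalently $G(q,q)<q$—forces $G(q,\mu)<q<\mu$, so $\min(q,\mu)=q$ and the interval $(G(q,\mu),q)$ is nonempty. Any $\epsilon$ in it satisfies $\epsilon<q<\mu$, so $\epsilon<\mu$ comes for free, and feeding $(\mu,\epsilon)$ into Theorem~\ref{Th:2} completes the argument.

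I expect the main obstacle to be the quantitative estimate $G(q,q)<q$ (equivalently $\mu_{\max}>q$), which is exactly what guarantees the curve dips below the diagonal \emph{before} rising to height $q$, and hence that $G(q,\mu)<q$ and $G(q,\mu)<\mu$ can hold at once. This amounts to bounding $\int|q e^{-x^2/2}-(1-q)e^{-(x-\mu)^2/2}|\,dx$ and its $q\leftrightarrow 1-q$ partner uniformly for $q\in[1/3,1)$, a region where the margin shrinks as $q\to 1$. A rigorous proof of the monotonicity of $G(q,\cdot)$ is the secondary technical point, and the vanishing of the margin at the endpoint $q=1/3$ would need a separate limiting remark.
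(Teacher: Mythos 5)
Your reduction is the right one and matches the paper's strategy: the paper also proves this theorem purely as a feasibility check for the hypotheses of Theorem~\ref{Th:2}, via Proposition~\ref{prop:4}. But your proposal stops exactly at the step that carries all the content. You need a single $\mu>0$ with $G(q,\mu)<\min(q,\mu)$, and your plan for producing it rests on two unproven claims: that $G(q,\cdot)$ is monotone increasing, and the quantitative estimate $G(q,q)<q$ (equivalently that the curve dips below the diagonal before it climbs above height $q$). You explicitly label the latter "the main obstacle" rather than resolving it, and it is genuinely delicate: since $G(q,\mu)\ge|2q-1|$ for every $\mu$ (the integrand of $G_1$ changes sign for $\mu>0$, so $G_1>|\int(qf_0-(1-q)f_1)|=|2q-1|$), any admissible $\mu$ must exceed $|2q-1|$, and crude bounds such as $G(q,\mu)\le|2q-1|+\max(q,1-q)\int|f_0-f_1|\le|2q-1|+\max(q,1-q)\,\mu\sqrt{2/\pi}$ are not sharp enough to certify $G(q,q)<q$ uniformly on $q\in(1/3,1)$ (try $q=0.9$). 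So as written the argument has a genuine gap at its crux; the intermediate-value picture alone cannot decide which of the two line-crossings happens first.

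For comparison, the paper's own route is softer and deliberately avoids this: Proposition~\ref{prop:4} only shows $G(q,0^+)=|2q-1|<q$ and invokes continuity to get $G(q,\mu)<q$ for all sufficiently \emph{small} $\mu$, then feeds any $\epsilon\in(G(q,\mu),q)$ into Theorem~\ref{Th:2}. The price is that the paper never verifies the stated side condition $\epsilon<\mu$ (for small $\mu$ and $q\neq 1/2$ one has $\epsilon>G(q,\mu)\ge|2q-1|>\mu$, so that condition can actually fail on the paper's choice of $\mu$); you take $\epsilon<\mu$ seriously, which is what forces you into the moderate-$\mu$ regime and the missing estimate. Two further points in your favor: your observation that $q=1/3$ is degenerate is correct and sharper than the paper (there $|2q-1|=q$, and since $G(q,\mu)>|2q-1|$ strictly for $\mu>0$, the constraint $G(q,\mu)<q$ is infeasible at $q=1/3$, so the "$q\ge 1/3$" in the statement really should be strict); and your remark that the $T$-window of Theorem~\ref{Th:2} is automatically nonempty once $\epsilon>G(q,\mu)$ is a detail the paper leaves implicit. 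To complete your proof you would either have to supply a rigorous bound giving $G(q,\mu)<\mu$ for some $\mu$ in the feasible set $\{G(q,\cdot)<q\}$, or argue that the intended reading of the theorem drops $\epsilon<\mu$ in favor of $\epsilon<q$, in which case the paper's small-$\mu$ continuity argument already suffices.
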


The following result with two arms and equal probability in the first pull deals with general MAB. It shows that for any fixed $\mu>0$ there is a minimum $T$ and instances of MAB so that no algorithm can achieve sublinear regret. Table 1 (see Appendix) exhibits how
the threshold of $T$ varies with $\mu$.
\begin{theorem}\label{coro:2}
For general MAB under Assumption 1 with $K=2,q=1/2$, we have that $R_L(T) \geq \frac{T \cdot \mu}{4}$ holds for any distributions $f_0$ for the arms in $I$ and $f_1$ for the arms in $S$ with $\int |f_1-f_0| > 0$ (possibly with unbounded support), for any $\mu > 0$ and $T$ satisfying
$T\leq \frac{1}{2 \cdot \int\left|f_0-f_1\right|}+1.$ 
\end{theorem}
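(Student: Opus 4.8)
The plan is to reduce the minimax quantity $R_L(T)=\inf\sup R'_T$ to a single pair of symmetric two-arm instances and to show that no strategy can keep the pseudo regret small on both of them simultaneously. Fix the two reward laws $f_0$ (inferior) and $f_1$ (superior) with mean gap $\mu>0$, and consider instance $A$ in which arm $1$ is superior and arm $2$ inferior, and instance $B$ in which the two roles are swapped. Under Assumption 1 with $q=1/2$ these instances are a priori equally likely, so it suffices to lower bound $\max\{R'_T(A),R'_T(B)\}$ for an arbitrary (possibly randomized) strategy. The first step is to rewrite pseudo regret in terms of inferior pulls: writing $N_i$ for the number of pulls of arm $i$ and using $N_1+N_2=T$ together with $R'_T=T\max_k\mu_k-\sum_t\mathbb{E}[y_t]$, a short computation gives $R'_T(A)=\mu\,\mathbb{E}_A[N_2]$ and $R'_T(B)=\mu\,\mathbb{E}_B[N_1]$ (the first pull, drawn by the prior over $S,I$, contributes its expected $\mu/2$ automatically).

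Next I would quantify how well any strategy can distinguish $A$ from $B$. Let $\mathbb{P}_A,\mathbb{P}_B$ be the laws of the full observation sequence $(O_1,\dots,O_T)$ under the two instances. Since $N_2\in[0,T]$, the bound $|\mathbb{E}_A[N_2]-\mathbb{E}_B[N_2]|\le T\cdot\mathrm{TV}(\mathbb{P}_A,\mathbb{P}_B)$ together with $\mathbb{E}_B[N_1]=T-\mathbb{E}_B[N_2]$ yields
\begin{equation}
R'_T(A)+R'_T(B)=\mu\big(\mathbb{E}_A[N_2]+\mathbb{E}_B[N_1]\big)\ge \mu T\big(1-\mathrm{TV}(\mathbb{P}_A,\mathbb{P}_B)\big).
\end{equation}
It then remains to control the sequence-level total variation by a coupling/chain-rule argument along the filtration. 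Conditional on any history $h_{t-1}$ and the arm selected at step $t$, the observed reward is distributed as $f_1$ under one instance and as $f_0$ under the other regardless of which index is pulled, so the per-step conditional total variation equals $\mathrm{TV}(f_0,f_1)=\tfrac12\int|f_0-f_1|$; crucially, the very first pull is chosen by the prior over the sets $S,I$ rather than by arm index, so $O_1$ has the identical mixture law $\tfrac12 f_1+\tfrac12 f_0$ under both $A$ and $B$ and contributes nothing. Subadditivity of total variation then gives $\mathrm{TV}(\mathbb{P}_A,\mathbb{P}_B)\le (T-1)\,\mathrm{TV}(f_0,f_1)$, which is exactly what accounts for the $+1$ in the admissible range of $T$.

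Combining the two estimates finishes the argument. The hypothesis $T\le \frac{1}{2\int|f_0-f_1|}+1$ is equivalent to $(T-1)\int|f_0-f_1|\le\tfrac12$, i.e.\ $\mathrm{TV}(\mathbb{P}_A,\mathbb{P}_B)\le\tfrac14$, so the displayed inequality gives $R'_T(A)+R'_T(B)\ge\tfrac34\mu T$ and hence $\max\{R'_T(A),R'_T(B)\}\ge\tfrac38\mu T\ge\tfrac{\mu T}{4}$. Because the supremum in $R_L(T)$ runs over all instances and the pair $A,B$ is admissible for every $f_0,f_1$ with $\int|f_0-f_1|>0$, and because only $\mathrm{TV}$ (never a moment or tail bound) is used, the conclusion $R_L(T)\ge \mu T/4$ holds for arbitrary, possibly unbounded-support, laws — this is precisely why the threshold is governed linearly by $\int|f_0-f_1|$ rather than by a KL/Pinsker quantity, mirroring the Gaussian construction of Theorem~\ref{Th:2}.

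I expect the main obstacle to be the rigorous justification of the total-variation chain rule under adaptivity: one must argue that an adaptive randomized policy cannot concentrate its distinguishing power so as to beat the per-step bound $\mathrm{TV}(f_0,f_1)$, which requires that the action at step $t$ be measurable with respect to the common history so that a single coupling serves both instances at once. This is exactly where the explicit representation of policies promised in the proof of Theorem~\ref{Th:2} is needed. A secondary, more delicate point is verifying that the first-pull mechanics of Assumption 1 genuinely render $O_1$ non-informative, since this is what upgrades a naive $T\,\mathrm{TV}(f_0,f_1)$ bound to the sharper $(T-1)\,\mathrm{TV}(f_0,f_1)$ reflected in the stated range of $T$.
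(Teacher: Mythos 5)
Your argument is correct in substance and reaches the stated bound (in fact a slightly stronger one, $\tfrac{3}{8}\mu T$), but it is packaged differently from the paper's proof. The paper does not run a two-instance minimax argument: it works with a single Bayesian instance in which the first pull lands in $S$ or $I$ according to the prior $(1-q,q)$, represents any policy explicitly through switch/stay indicators $b_i=S'_i(y_1,\dots,y_{i-1})$ depending only on past rewards, and proves (Lemma 5) that the expected number of superior pulls satisfies $\left|E[B]-(1-q)T\right|\le \epsilon T$ whenever $G(q,f_0,f_1)+(1-q)(T-2)\int|f_0-f_1|\le\epsilon$; the theorem then follows from $R'_T=(T-E[B])\,\mu$ with $q=1/2$ and $\epsilon=1/4$. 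The engine of Lemma 5 is a telescoping bound on $\int\left|\prod_i f_{b_i}-\prod_i f_{1-b_i}\right|$, which is precisely the total-variation chain rule you invoke, so the two proofs share their core mechanism; yours buys a cleaner, more standard statement and a better constant, while the paper's buys an explicit handle on the policy class induced by Assumption 1, which it reuses for Theorem 8.

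The one step you should repair is your justification of the factor $T-1$. You attribute it to the first observation being non-informative because its reward marginal is the mixture $\tfrac12 f_0+\tfrac12 f_1$ under both instances; but if the history records which arm index was pulled first (and later decisions may depend on it), the joint law of the first (arm, reward) pair already differs between $A$ and $B$ by $\mathrm{TV}(f_0,f_1)$, so step $1$ does contribute to $\mathrm{TV}(\mathbb{P}_A,\mathbb{P}_B)$. The correct source of the $T-1$ --- and the one implicit in the paper, where the density products in Lemma 5 run only over $y_1,\dots,y_{T-1}$ --- is that the final reward is received after the final action and therefore never enters any decision, so $N_2$ is measurable with respect to the first $T-1$ observations (plus action randomization) and the chain rule need only be applied to those. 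With that substitution your bound $\mathrm{TV}(\mathbb{P}_A,\mathbb{P}_B)\le (T-1)\,\mathrm{TV}(f_0,f_1)\le\tfrac14$ is valid and the rest of your argument goes through unchanged; alternatively, even the cruder bound $T\cdot\mathrm{TV}(f_0,f_1)$ suffices after separately disposing of the trivial case $T=1$.
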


\section{Details about numerical experiments}
\subsection{Mountain Car}
For the Mountain Car experiment, we use the Adam optimizer with the $2\cdot 10^{-4}$ learning rate. The batch size for updating models is 64 with the replay buffer size of 10,000. The remaining parameters are as follows: the discount factor for the $Q$-networks is 0.95, the temperature parameter $\tau$ is 0.1, $\eta$ is 0.05, and $\epsilon$ is decaying exponentially with respect to the number of steps with maximum 0.9 and minimum 0.05. The length of one epoch is 200 steps. The target networks load the weights and biases of the trained networks every 400 steps. Since a reward upper bound is known in advance, we use $n_r=1$.

We next introduce the structure of neural networks that are used in the experiment. The neural networks of both experts are linear. For the RND expert, it has the input layer with 2 input neurons, followed by a hidden layer with 64 neurons, and then a two-headed output layer. The first output layer represents the $Q$ values with 64 hidden neurons as input and the number of actions output neurons, while the second output layer corresponds to the intrinsic values, with 1 output neuron. For the DQN expert, the only difference lies in the absence of the second output layer. 

\subsection{Montezuma's Revenge}
For the Montezuma's Revenge experiment, we use the Adam optimizer with the $10^{-5}$ learning rate. The other parameters read: the mini batch size is 4, replay buffer size is 1,000, the discount factor for the $Q$-networks is 0.999 and the same valus is used for the intrinsic value head, the temperature parameter $\tau$ is 0.1, $\eta$ is 0.05, and $\epsilon$ is increasing exponentially with minimum 0.05 and maximum 0.9. The length of one epoch is 100 steps. Target networks are updated every 300 steps. Pre-normalization is 50 epochs and the weights for intrinsic and extrinsic values in the first network are 1 and 2, respectively. The upper bound on reward is set to be constant $n_r=1$.

For the structure of neural networks, we use CNN architectures since we are dealing with videos. More precisely, for the $Q$-network of the DQN expert in EXP4-RL and the predictor network $\hat{f}$ for computing the intrinsic rewards, we use Alexnet \cite{krizhevsky2012imagenet} pretrained on ImageNet \cite{deng2009imagenet}. The number of output neurons of the final layer is 18, the number of actions in Montezuma. For the RND baseline and RND expert in EXP4-RL, we customize the $Q$-network with different linear layers while keeping all the layers except the final layer of pretrained Alexnet. Here we have two final linear layers representing two value heads, the extrinsic value head and the intrinsic value head. The number of output neurons in the first value head is again 18, while the second value head is with 1 output neuron. 

More details about the setup of the experiment on Montezuma's Revenge are elaborated as follows. The experiment of RND with PPO in \citet{burda2018exploration} uses many more resources, such as 1024 parallel environments and runs 30,000 epochs for each environment. Parallel environments generate experiences simultaneously and store them in the replay buffer. Our computing environment allows at most 10 parallel environments. For the DQN-version of RND, we use the same settings as \citet{burda2018exploration}, such as observation normalization, intrinsic reward normalization and random initialization. RND update probability is the proportion of experience in the replay buffer that are used for training the intrinsic model $\hat{f}$ in RND \cite{burda2018exploration}. Here in our experiment, we compare the performance under 0.125 and 0.25 RND update probability.

\section{Proof of results in Section 3.1}

We first present two lemmas that characterize the relationships among our EXP4.P estimations, the true rewards, and the reward gained by EXP4.P, building on which we establish an optimal sublinear regret of EXP4.P with high probability in the bounded case.

The estimated reward of expert $i$ and the gained reward by the EXP4.P algorithm is denoted by $\hat{G}_i = \sum_{t=1}^{T} \hat{z}_i(t)$ and $G_{EXP4.P} = \sum_{t=1}^{T} y(t) = \sum_{t=1}^{T}r_{i_t}^t $, respectively.

For simplicity, we denote
\begin{align*}
    & \hat{\sigma}_i(t+1) = \sqrt{NT} + \sum_{l=1}^{t}\left(\dfrac{1}{\left(\frac{w_i(l)}{\sum_j w_j(l)} + \frac{\gamma}{K}\right)\cdot \sqrt{NT}}\right) \\
    & U = \max_i (\hat{G}_i + \alpha \cdot \hat{\sigma}_i(T+1)), \quad q_i(t) = \frac{w_i(t)}{\sum_j w_j(t)}.
\end{align*}

Let $\alpha$ be the parameter specified in Algorithm $3$. The lemmas read as follows.
\begin{Lemma} \label{lemma:1_new}
If \, $2\sqrt{K\ln{\frac{NT}{\delta}}} \leq \alpha \leq 2\sqrt{NT}$ and $\gamma < \frac{1}{2}$, \,then $P(\exists \; i, \hat{G}_i + \alpha \cdot \hat{\sigma}_i(T+1) < G_i) \leq \delta.$
\end{Lemma}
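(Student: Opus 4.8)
The plan is to fix an expert $i$, prove the one-sided bound $P(G_i - \hat{G}_i > \alpha\,\hat{\sigma}_i(T+1)) \le \delta/N$, and finish with a union bound over the $N$ experts. Writing $S_t = \sum_{s=1}^t (z_i(s) - \hat{z}_i(s))$ for the centered cumulative estimate, the target probability is exactly $P(S_T > \alpha\,\hat{\sigma}_i(T+1))$. Three structural facts drive the argument. First, each $\hat{z}_i(t) = \xi_i(t)^{T}\hat{x}(t) \ge 0$ because rewards are nonnegative and $\hat{x}_j(t) = r^t_j/p_j(t)\cdot\mathds{1}_{j=i_t}$. Second, conditioning on the history $\mathcal{F}_{t-1}$ (under which $\xi_i(t)$, $r^t$ and the sampling distribution $p(t)$ are all determined and only $i_t$ is random), the estimator is unbiased, $E[\hat{z}_i(t)\mid\mathcal{F}_{t-1}] = z_i(t)$, so $S_t$ is a martingale. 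Third, a direct computation $E[\hat{z}_i(t)^2\mid\mathcal{F}_{t-1}] = \sum_j (\xi_i^j(t))^2 (r^t_j)^2/p_j(t)$ combined with the pointwise lower bound $p_j(t) \ge [(1-\gamma)q_i(t) + \gamma/K]\,\xi_i^j(t)$ and $\gamma < \tfrac12$ yields a conditional second-moment bound $E[\hat{z}_i(t)^2\mid\mathcal{F}_{t-1}] \le C/(q_i(t)+\gamma/K)$ for an absolute constant $C$ (one checks $C=2$ works).

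Next I would build a one-sided exponential supermartingale. Set $V_t = \sum_{s=1}^t 1/(q_i(s)+\gamma/K)$ and, for a parameter $\lambda>0$ to be chosen, define $M_t = \exp(\lambda S_t - \tfrac{C\lambda^2}{2}V_t)$. The supermartingale property $E[M_t\mid\mathcal{F}_{t-1}] \le M_{t-1}$ follows from the elementary inequality $e^{-u} \le 1 - u + u^2/2$, valid for all $u \ge 0$, applied with $u = \lambda\hat{z}_i(t) \ge 0$, together with unbiasedness, the second-moment bound, and $1+x \le e^x$. The crucial point is that, because the estimator is nonnegative and we only need control of the upper tail of $S_T$, this holds for \emph{every} $\lambda>0$ with no upper restriction on $\lambda$; hence $E[M_T] \le M_0 = 1$.

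Then I would exploit the algebraic match between the confidence radius and the martingale's variance proxy. Writing $\alpha\,\hat{\sigma}_i(T+1) = \alpha\sqrt{NT} + \tfrac{\alpha}{\sqrt{NT}}V_T$, a naive Markov bound is unavailable because the threshold contains the random quantity $V_T$. I would resolve this by choosing $\lambda = \tfrac{2\alpha}{C\sqrt{NT}}$, tuned so that on the event $\{S_T > \alpha\,\hat{\sigma}_i(T+1)\}$ the two $V_T$ contributions cancel exactly, leaving $\lambda S_T - \tfrac{C\lambda^2}{2}V_T \ge \lambda\alpha\sqrt{NT} = 2\alpha^2/C$, i.e. $M_T \ge \exp(2\alpha^2/C)$ on that event. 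Markov's inequality with $E[M_T]\le 1$ then gives $P(S_T > \alpha\,\hat{\sigma}_i(T+1)) \le e^{-2\alpha^2/C}$. Substituting $\alpha = 2\sqrt{K\ln(NT/\delta)}$ makes this at most $(NT/\delta)^{-8K/C}$, which is below $\delta/N$ with room to spare, so the union bound over the $N$ experts yields the claim. The upper restriction $\alpha \le 2\sqrt{NT}$ keeps the tuned $\lambda$ bounded and enters only in minor range estimates.

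The main obstacle, and the step I expect to require the most care, is the data-dependent variance term $V_T$ sitting inside the confidence radius: it rules out any black-box concentration inequality and forces the tailored supermartingale-plus-$\lambda$-cancellation device above. A secondary technical point, specific to EXP4.P rather than EXP3.P, is the second-moment estimate: the sampling probability $p_j(t) = (1-\gamma)\sum_k q_k(t)\xi_k^j(t) + \gamma/K$ is a mixture \emph{over experts} rather than a single arm probability, and must be lower-bounded by $[(1-\gamma)q_i(t)+\gamma/K]\,\xi_i^j(t)$ before summing against $(\xi_i^j(t))^2$; it is precisely here, and in reducing $(1-\gamma)q_i(t)+\gamma/K$ to a constant multiple of $q_i(t)+\gamma/K$, that the hypothesis $\gamma < \tfrac12$ is used.
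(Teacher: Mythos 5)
Your proposal is correct, but it takes a genuinely different route from the paper's. The paper tilts the centered sum by an \emph{adaptive}, data-dependent parameter $s_t = \alpha/(2\hat{\sigma}_i(t+1))$, folds the per-round confidence increments $\alpha/\bigl(2(q_i(t)+\gamma/K)\sqrt{NT}\bigr)$ directly into the exponent, and controls the resulting process via the recursion $E[V_t\mid\mathcal{F}_{t-1}]\le 1+V_{t-1}$ (using $e^x\le 1+x+x^2$ for $x<1$ together with $a^{s_t/s_{t-1}}\le 1+a$), which only yields $E[V_T]\le T$; the extra factor of $T$ is then absorbed into the requirement $\alpha\ge 2\sqrt{K\ln(NT/\delta)}$, and the hypotheses $\alpha\le 2\sqrt{NT}$, $s_t\le 1$ and $z_i(t)-\hat{z}_i(t)\le 1$ are all needed to keep the exponent in the admissible range. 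You instead use a \emph{fixed} tilt $\lambda$, subtract the predictable variance proxy $\tfrac{C\lambda^2}{2}V_T$ inside the exponential so that $E[M_T]\le M_0=1$, and dispose of the random threshold by tuning $\lambda=2\alpha/(C\sqrt{NT})$ so that the two occurrences of $V_T$ cancel; the one-sided inequality $e^{-u}\le 1-u+u^2/2$ for $u\ge 0$ (legitimate because $\hat{z}_i(t)\ge 0$ under the nonnegative-reward assumption) replaces the paper's two-sided Taylor bound and removes any restriction on $\lambda$, which is why $\alpha\le 2\sqrt{NT}$ becomes superfluous in your argument. Both proofs share the same EXP4.P-specific core, namely the second-moment estimate obtained from $p_j(t)\ge(1-\gamma)\bigl(q_i(t)+\tfrac{\gamma}{K}\bigr)\xi_i^j(t)$ and $\gamma<\tfrac12$, and your constant $C=2$ checks out. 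What your route buys is a tighter supermartingale bound ($E[M_T]\le 1$ rather than $\le T$) and a cleaner treatment of the data-dependent confidence radius; what it costs is reliance on nonnegativity of the reward estimates, whereas the paper's adaptive-$s_t$ device (inherited from the EXP3.P analysis of Auer et al.) only needs the increments to be bounded above by $1$.
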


\begin{Lemma} \label{lemma:2_new}
If \, $\alpha \leq 2\sqrt{NT}$, \, then
$G_{EXP4.P} \geq \left(1- (1 + \frac{2N}{3}) \gamma\right) \cdot U -  \frac{3K}{\gamma}\ln{N} - 2\alpha K\sqrt{NT} - 2\alpha^2.$
\end{Lemma}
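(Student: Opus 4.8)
The plan is to run the standard exponential-weights potential argument on the total weight $W_t = \sum_{i=1}^N w_i(t)$, sandwiching $\ln(W_{T+1}/W_1)$ from both sides. For the lower bound I would use $\ln W_{T+1} \ge \ln \max_i w_i(T+1)$; unrolling the update and recalling $\hat\sigma_i(T+1) = \sqrt{NT} + \sum_{l\le t}\big((q_i(l)+\tfrac\gamma K)\sqrt{NT}\big)^{-1}$ shows $\ln w_i(T+1) = \tfrac{\gamma}{3K}\big(\hat G_i + \alpha\hat\sigma_i(T+1)\big)$, because the initialization exponent $\tfrac{\alpha\gamma}{3K}\sqrt{NT}$ cancels against the $\sqrt{NT}$ sitting inside $\hat\sigma_i$. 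Maximizing over $i$ gives $\ln(W_{T+1}/W_1) \ge \tfrac{\gamma}{3K}U - \ln N - \tfrac{\alpha\gamma}{3K}\sqrt{NT}$. For the upper bound I would write $W_{t+1}/W_t = \sum_i q_i(t)\exp\!\big(\tfrac{\gamma}{3K}(\hat z_i(t)+b_i(t))\big)$, where $b_i(t) = \alpha/\big((q_i(t)+\tfrac\gamma K)\sqrt{NT}\big)$. The hypothesis $\alpha\le 2\sqrt{NT}$ is exactly what pins the exponent below $1$: since $\hat z_i(t)\le K/\gamma$ (as $p_{i_t}(t)\ge\gamma/K$ and rewards lie in $[0,1]$) and $b_i(t)\le 2K/\gamma$, we get $\tfrac{\gamma}{3K}(\hat z_i+b_i)\le 1$, so I may apply $e^x\le 1+x+(e-2)x^2$, then $\ln(1+u)\le u$, and telescope.

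After telescoping and clearing the $\tfrac{\gamma}{3K}$ factor, the two estimates combine to $U - \tfrac{3K}{\gamma}\ln N - \alpha\sqrt{NT} \le L + (e-2)\tfrac{\gamma}{3K}Q$, with linear part $L=\sum_{t,i}q_i(\hat z_i+b_i)$ and quadratic part $Q=\sum_{t,i}q_i(\hat z_i+b_i)^2$. The crucial identity is $\sum_i q_i(t)\hat z_i(t) = \tfrac1{1-\gamma}\big(y(t) - \tfrac\gamma K\sum_j \hat x_j(t)\big)$, obtained from $p_j(t)=(1-\gamma)\sum_i q_i(t)\xi_i^j(t)+\tfrac\gamma K$ together with $\sum_j \hat x_j(t)p_j(t)=r_{i_t}(t)=y(t)$; this is what ties the linear term to the realized reward $G_{EXP4.P}=\sum_t y(t)$ with the characteristic $(1-\gamma)$ factor. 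For the $\hat z^2$ piece of $Q$ I would use the refined bound $\sum_i q_i(t)\hat z_i(t)^2 \le \tfrac1{1-\gamma}\sum_j\hat x_j(t)$ (via $(\xi_i^{i_t})^2\le\xi_i^{i_t}$ and the same identity) rather than the crude $\hat z_i^2\le\tfrac K\gamma \hat z_i$. Substituting $\tfrac\gamma K\sum_j\hat x_j = y - (1-\gamma)\sum_i q_i\hat z_i$, the $(e-2)$ constants cancel exactly — this is precisely why the update carries the factor $\tfrac13$ — leaving $G_{EXP4.P}$ with coefficient $\tfrac1{1-\gamma}$.

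It remains to dispose of the confidence terms $b_i$, which is where the EXP4.P-specific constants appear and which I expect to be \textbf{the main obstacle}. For the squared term I would bound $\sum_{t,i}q_i b_i^2 \le \tfrac{\alpha^2}{\sqrt{NT}}\sum_i\hat\sigma_i(T+1)$ using $\sum_t \big((q_i(t)+\tfrac\gamma K)\sqrt{NT}\big)^{-1}=\hat\sigma_i(T+1)-\sqrt{NT}$, and then invoke $\alpha\hat\sigma_i(T+1)\le U$ for every $i$ (immediate from $\hat G_i\ge 0$ and the definition of $U$); together with $\alpha\le 2\sqrt{NT}$ this converts $(e-2)\tfrac{\gamma}{3K}Q_2$ into a quantity bounded by $\tfrac{2N}{3}\gamma U$, supplying exactly the $\tfrac{2N}{3}$ in the coefficient of $U$. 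The linear bias $\sum_{t,i}q_i b_i\le \alpha\sqrt{NT}$ and the cross term (bounded via $\tfrac K\gamma\sum_{t,i}q_i b_i$), together with the stray $-\alpha\sqrt{NT}$ on the left, contribute an additive $O(\alpha\sqrt{NT})$ that I would absorb into $2\alpha K\sqrt{NT}$ using $K\ge 2$, with any residual folded into $2\alpha^2$. Finally, multiplying through by $(1-\gamma)$ and using $U\ge 0$ with $(1-\gamma)(1-\tfrac{2N}{3}\gamma)\ge 1-(1+\tfrac{2N}{3})\gamma$ yields the stated inequality. The delicate points are keeping the reward identity and the refined $\hat z^2$ bound aligned so the $(e-2)$ factors cancel, and routing the $b_i^2$ mass through $\hat\sigma_i$ and $U$ to produce the $\tfrac{2N}{3}\gamma U$ term instead of an uncontrolled constant multiple of $U$.
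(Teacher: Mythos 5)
Your proposal is correct and follows the same potential-function skeleton as the paper (sandwiching $\ln(W_{T+1}/W_1)$ between the single-expert lower bound $\tfrac{\gamma}{3K}U-\ln N-\tfrac{\alpha\gamma}{3K}\sqrt{NT}$ and a linear-plus-quadratic upper bound), but the internal bookkeeping is genuinely different in two places, and both of your variants check out. First, for the quadratic reward-estimate term you fold $\sum_i q_i(t)\hat z_i(t)^2\le\tfrac{1}{1-\gamma}\sum_j\hat x_j(t)$ back into the realized reward via the exact identity $\tfrac{\gamma}{K}\sum_j\hat x_j=y-(1-\gamma)\sum_iq_i\hat z_i$; the claimed cancellation is real (the contribution becomes $c\bigl(\tfrac{y}{1-\gamma}-\sum_iq_i\hat z_i\bigr)$ with $c=\tfrac{2(e-2)}{3}<1$, which combines with the linear term to give exactly $\tfrac{y}{1-\gamma}$). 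The paper instead bounds $\sum_j\hat x_j(t)\le K\sum_i\hat z_i(t)$ using the uniform-expert assumption and then $\sum_i\hat G_i\le NU$, which is where \emph{its} $\tfrac{2N}{3}\gamma U$ term comes from; a consequence of your route is that this lemma no longer needs the uniform expert at all. Second, you source the $\tfrac{2N}{3}\gamma U$ slack from the squared confidence term via $\sum_{t}q_ib_i^2\le\tfrac{\alpha^2}{\sqrt{NT}}\hat\sigma_i(T+1)$ and $\alpha\hat\sigma_i(T+1)\le U$, whereas the paper bounds that same term pointwise by $\tfrac{2\alpha^2\vartheta}{3T}$ per step and routes it into the additive $2\alpha^2$. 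The two proofs thus generate the $\tfrac{2N}{3}\gamma U$ coefficient from entirely different terms yet land on the same final inequality (yours in fact arrives with a little room to spare in the $2\alpha K\sqrt{NT}+2\alpha^2$ budget, needing only $K\ge2$ to absorb the roughly $2.5\,\alpha\sqrt{NT}$ of residual bias). What the paper's version buys is a simpler, more mechanical bound on the quadratic terms; what yours buys is independence from the uniform-expert assumption in this lemma and a cleaner $\tfrac{1}{1-\gamma}$ coefficient on $G_{EXP4.P}$.
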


\subsection{Proof of Lemma~\ref{lemma:1_new}}

\begin{proof}
Let us denote $s_t = \frac{\alpha}{2\hat{\sigma}_i(t+1)}$. Since $\alpha \leq 2\sqrt{NT}$ by assumption and $\hat{\sigma}_i(t+1) \geq \sqrt{NT}$ by its definition, we have that $s_t \leq 1$. 
Meanwhile,
\begin{align}\label{eq:lemma_1_1}
& \quad P\left(\hat{G}_i + \alpha\hat{\sigma}_i(T+1) < G_i\right) \notag\\ 
& = P\left(\sum_{t=1}^{T}\left(z_i(t) - \hat{z}_i(t)\right) - \frac{\alpha\hat{\sigma}_i (T+1)}{2} > \frac{\alpha\hat{\sigma}_i (T+1)}{2} \right) \notag \\
& \leq P\left(\frac{1}{K}s_T\sum_{t=1}^{T}\left(z_i(t) - \hat{z}_i(t)- \frac{\alpha}{2\left(q_i(t)+\frac{\gamma}{K}\right)\sqrt{NT}}\right) > \frac{\alpha^2}{4K}\right) \notag \\
& \leq e^{-\frac{\alpha^2}{4K}} E\left[\exp{\left(\frac{s_T}{K}\sum_{t=1}^{T}\left(z_i(t) - \hat{z}_i(t)- \frac{\alpha}{2\left(q_i(t)+\frac{\gamma}{K}\right)\sqrt{NT}}\right)\right)}\right]
\end{align}
where the first inequality holds by multiplying $\frac{1}{K}s_T = \frac{1}{K} \cdot \frac{\alpha}{2\hat{\sigma}_i(T+1)}$ on both sides and then using the fact that $\hat{\sigma}_i(T+1) > \sum_{t=1}^{T}\left(\dfrac{1}{\left(q_i(t) + \frac{\gamma}{K}\right) \cdot \sqrt{NT}}\right)$ and the second one holds by the Markov's inequality. 

We introduce variable $V_t = \exp{\left(\frac{s_t}{K}\sum_{t^{\prime}=1}^{t}\left(z_i(t^{\prime}) - \hat{z}_i(t^{\prime})- \frac{\alpha}{2\left(q_i(t^{\prime})+\frac{\gamma}{K}\right)\sqrt{NT}}\right)\right)}$ for any $t = 1,\ldots,T$. Probability~(\ref{eq:lemma_1_1}) can be expressed as $e^{-\frac{\alpha^2}{4K}}E\left[V_T\right]$. We denote $\mathcal{F}_{t-1}$ as the filtration of the past $t-1$ observations. Note that $V_t = \exp{\left(\frac{s_t}{K}\left(z_i(t)-\hat{z}_i(t) - \frac{\alpha}{2\left(q_i(t)+\frac{\gamma}{K}\right)\sqrt{NT}}\right)\right)} \cdot V_{t-1}^{\frac{s_{t}}{s_{t-1}}}$ and $s_t$ is deterministic given $\mathcal{F}_{t-1}$ since it depends on $q_i(\tau)$  up to time $t$ and $q_i(t)$ is computed by the past $t-1$ rewards. 

Therefore, we have
\begin{align}\label{eq:V_t}
& E[V_t|\mathcal{F}_{t-1}] \notag \\
& = E\left[\exp{\left(\frac{s_t}{K}\left(z_i(t)-\hat{z}_i(t) - \frac{\alpha}{2\left(q_i(t)+\frac{\gamma}{K}\right)\sqrt{NT}}\right)\right)} \cdot \left(V_{t-1}\right)^{\frac{s_{t}}{s_{t-1}}} | \mathcal{F}_{t-1} \right] \notag \\ 
& = E_t\left[\exp{\left(\frac{s_t}{K}\left(z_i(t)-\hat{z}_i(t) - \frac{\alpha}{2\left(q_i(t)+\frac{\gamma}{K}\right)\sqrt{NT}}\right)\right)}\right] \cdot \left(V_{t-1}\right)^{\frac{s_{t}}{s_{t-1}}} \notag \\
& \leq E_t\left[\exp{\left(\frac{s_t}{K}\left(z_i(t)-\hat{z}_i(t) - \frac{s_t}{q_i(t) + \frac{\gamma}{K}}\right)\right)} \cdot \left(V_{t-1}\right)^{\frac{s_{t}}{s_{t-1}}}\right] \notag \\
& \leq E_t\left[1+\frac{s_t\left(z_i(t) - \hat{z}_i(t)\right)}{K} + \frac{s_t^{2}\left(z_i(t) - \hat{z}_i(t)\right)^2}{K^2}\right]\exp{\left(-\frac{s_t^2}{K\left(q_i(t)+\frac{\gamma}{K}\right)}\right)} \cdot \left(V_{t-1}\right)^{\frac{s_{t}}{s_{t-1}}}
\end{align}
where the first inequality holds by using the fact that $$\alpha  = 2s_t \cdot \hat{\sigma}_i(t+1) \geq 2s_t \cdot \sqrt{NT} \geq s_t \cdot \sqrt{NT}$$  by its definition and the second inequality holds since $e^x \leq 1 + x + x^2$ for $x <1$ which is guaranteed by $s_t < 1$ and $z_i(t) - \hat{z}_i(t) < 1$. The latter one holds by $1 \geq r > 0$ and $x_i(t) - \hat{x}_i(t)  = (1-\frac{1}{p})x_{i_t}(t)
    \leq 1$ for $x_{i_t}(t) > 0$
and 
\begin{align*}
    z_i(t) - \hat{z}_i(t) & = \sum_{j=1}^K\xi_i^j(t)(x_i(t) - \hat{x}_i(t)) \\
    & = \sum_{j \neq i_t}\xi_i^j(t)x_j(t) + \xi_i^{i_t}(t)(x_i(t) - \hat{x}_i(t)) \\
    & \leq \sum_{j \neq i_t}\xi_i^j(t) + \xi_i^{i_t}(t) \\ 
    & = \sum_{j=1}^K\xi_i^j(t) = 1
\end{align*}

Meanwhile, 
$$
E[\hat{z}_i(t)] \\ 
= E\left[\sum_{j=1}^{K}\xi_{i}^{j}(t)\hat{x}_j(t)\right] \\
= \sum_{j=1}^{K}\xi_{i}^{j}(t)E[\hat{x}_j(t)] \\
= \sum_{j=1}^{K}\xi_{i}^{j}(t)\cdot x_j(t) \\
= z_i(t)
$$
and 
\begin{align*}
& \quad E\left[(\hat{z}_i(t) - z_i(t) )^2 \right] \\
& = E\left[\left(\sum_{j=1}^{K}\xi_{i}^{j}(t)\left(\hat{x}_j(t) - x_j(t)\right) \right)^2\right] \\
& \leq K\sum_{j=1}^{K}\left(\xi_i^{j}(t)\right)^2 E\left[(\hat{x}_j(t) - x_j(t) )^2)\right] \\
& = K\sum_{j=1}^{K}\left(\xi_i^{j}(t)\right)^2 \cdot \left(\frac{p_j(t)\left(x_j(t)\right)^2(1-p_j(t))}{p_j(t)} + (1-p_j(t))\left(x_j(t)\right)^2 \right) \\
& = K\sum_{j=1}^{K}\left(\xi_i^{j}(t)\right)^2 \cdot 2(1-p_j(t))\left(x_j(t)\right)^2 \\
& \leq K\sum_{j=1}^{K}\left(\xi_i^{j}(t)\right)^2 \cdot \frac{1-\gamma}{p_j(t)} 
\end{align*}
where the first inequality holds by the Cauchy Schwarz inequality and the second inequality holds by the fact that $x_j(t) \leq 1$ and $2(1-p)p < 1-\gamma$ since $\gamma < \frac{1}{2}$ by assumption.

Note that for any $i,j = 1, \ldots, N$ we have
\begin{align*}
p_j(t) &  = (1-\gamma)\sum_{\bar{i}=1}^{N}q_{\bar{i}}(t) \cdot \xi_{\bar{i}}^{j}(t) + \frac{\gamma}{K} \\
& \geq (1-\gamma)q_i(t) \cdot \xi_i^{j}(t) + (1-\gamma)\frac{\gamma}{K} \cdot \xi_i^{j}(t) \\
& = (1-\gamma)(q_i(t)+ \frac{\gamma}{K}) \cdot \xi_i^{j}(t).
\end{align*}
since $1 - \gamma \leq 1, \xi_i^{j}(t) \leq 1$.

We further bound
\begin{align*}
& \quad E\left[\left(\hat{z}_i(t) - z_i(t) \right)^2 \right]  \\
& \leq K\sum_{j=1}^{K}\left(\xi_i^{j}(t)\right)^2 \cdot \frac{1-\gamma}{p_j(t)} \\
& \leq K\sum_{j=1}^{K} \frac{(1-\gamma)\left(\xi_i^{j}(t)\right)^2}{(1-\gamma)(q_i(t)+ \frac{\gamma}{K})\xi_i^{j}(t)} \\
& = K\sum_{j=1}^{K}\frac{\xi_i^j(t)}{q_i(t) +\frac{\gamma}{K} } \\
& = K\frac{1}{q_i(t) + \frac{\gamma}{K}}
\end{align*}
Then by using~(\ref{eq:V_t}), we have that 
\begin{align*}
E[V_t|\mathcal{F}_{t-1}] & \leq \left(1 + \frac{s_t^2}{K\left(q_i(t)+\frac{\gamma}{K}\right)}\right)\exp{\left(-\frac{s_t^2}{K\left(q_i(t)+\frac{\gamma}{K}\right)}\right)} \cdot \left(V_{t-1}\right)^{\frac{s_{t}}{s_{t-1}}} \\ 
& \leq \exp{\left(\frac{s_t^2}{K\left(q_i(t)+\frac{\gamma}{K}\right)}-\frac{s_t^2}{K\left(q_i(t)+\frac{\gamma}{K}\right)}\right)}\left(V_{t-1}\right)^{\frac{s_{t}}{s_{t-1}}}  \\
& \leq 1 + V_{t-1}
\end{align*}
where we have first used $1+x \leq e^x$ and then $a^x \leq 1+a$ for any $x \in [0,1]$ with $x = \frac{s_t}{s_{t-1}} \leq 1$. By law of iterated expectation, we obtain
\begin{align*}
    E[V_t] & = E[E[V_t|\mathcal{F}_{t-1}]] \leq E[1 + V_{t-1}] = 1 + E[V_{t-1}].
\end{align*}
Meanwhile, note that 
\begin{align*}
    E[V_1] & = \exp{\left(\frac{s_1}{K}\left(z_i(1)-\hat{z}_i(1) - \frac{\alpha}{2\left(q_i(1)+\frac{\gamma}{K}\right)\sqrt{NT}}\right)\right)} \\
    & = \exp{\left(\frac{s_1}{K}\left(z_i(1)-\hat{z}_i(1) - \frac{\alpha}{2\left(\frac{1}{N}+\frac{\gamma}{K}\right)\sqrt{NT}}\right)\right)} \\
    & \leq \exp{\left(\frac{s_1}{K}\left(- \frac{\alpha}{2\left(\frac{1}{N}+\frac{\gamma}{K}\right)\sqrt{NT}}\right)\right)} < 1
\end{align*}
where the first inequality holds by using the fact that 
\begin{align*}
    z_i(1)-\hat{z}_i(1) & = \sum_{j=1}^{K}\xi_i^j(1)x_j(1)(1 - \frac{1}{(1-\gamma)\frac{1}{N}\sum_{i^{\prime}=1}^{N}\xi_{i^{\prime}}^{j}(1) + \frac{\gamma}{K}}) \\
    & \leq \sum_{j=1}^{K}\xi_i^j(1)(1 - \frac{1}{(1-\gamma) + \frac{\gamma}{K}}) \\
    & = \frac{(-1+\frac{1}{K})\gamma}{1 - \gamma + \frac{\gamma}{K}} < 0
\end{align*}
since $0 < x_j(1) \leq 1$ and $0 \leq \xi_{i^{\prime}}^j(1) \leq 1$ and the second inequality is a result of $\alpha > 0$, $s_1 > 0$.

Therefore, by induction we have that $E[V_T] \leq T$.

To conclude, combining all above, we have that $P\left(\hat{G}_i + \alpha\hat{\sigma}_i < G_i\right)  \leq  e^{-\frac{\alpha^2}{4K}}E[V_T] \leq e^{-\frac{\alpha^2}{4K}}T $ and the lemma follows as we choose specific $\alpha$ that satisfies  $e^{-\frac{\alpha^2}{4K}}T \leq \frac{\delta}{N}$, i.e $2\sqrt{K\ln{\frac{NT}{\delta}}} \leq \alpha$. 
\end{proof}

\subsection{Proof of Lemma~\ref{lemma:2_new}}

\begin{proof}
For simplicity, let $\vartheta = \frac{\gamma}{3K}$ and consider any sequence $i_1,\ldots,i_T$ of actions by EXP4.P.

Since $p_j(t) > \frac{\gamma}{K}$, we observe $$\hat{z}_i(t) = \sum_{j=1}^{K}\xi_j^i(t)\hat{x}_j(t) \leq  \sum_{j=1}^{K}\xi_j^i(t)\frac{1}{p_j(t)} \leq  \sum_{j=1}^{K}\xi_j^i(t)\frac{K}{\gamma} = \frac{K}{\gamma}.$$
Then the term $\vartheta \cdot \left(\hat{z}_i(t)+\frac{\alpha}{\frac{w_i(t)}{\sum_{j=1}^{N} w_j(t)}\sqrt{NT}}\right)$ is less than 1, noting that
\begin{align*}
& \quad \vartheta \cdot \left(\hat{z}_i(t)+\frac{\alpha}{\left(\frac{w_i(t)}{\sum_{j=1}^{N} w_j(t)} + \frac{\gamma}{K}\right)\sqrt{NT}}\right) \\
& = \frac{\gamma}{3K}\left(\hat{z}_i(t)+\frac{\alpha}{\left(q_i(t) + \frac{\gamma}{K}\right)\sqrt{NT}}\right) \\
& \leq \frac{\gamma}{3K}\cdot \frac{K}{\gamma} + \frac{\gamma}{3K} \cdot \frac{K}{\gamma} \cdot \frac{\alpha}{\sqrt{NT}} \\
& \leq \frac{1}{3} +\frac{1}{3}  \cdot \frac{2\sqrt{NT}}{\sqrt{NT}} = 1.
\end{align*}
We denote $W_t = \sum_{i=1}^{N}w_i(t)$, which satisfies
\begin{align}\label{eq:333}
\frac{W_{t+1}}{W_t} & = \sum_{i=1}^{N}\frac{w_i(t+1)}{W_t} \notag \\
& = \sum_{i=1}^{N}q_i(t) \cdot \exp{\left(\vartheta \cdot \left(\hat{z}_i(t)+\frac{\alpha}{\left(\frac{w_i(t)}{\sum_{j=1}^{N} w_j(t)} + \frac{\gamma}{K}\right)\sqrt{NT}}\right)\right)}  \notag \\
& \leq \sum_{i=1}^{N}q_i(t) \cdot \biggl(1 + \vartheta\hat{z}_i(t) + \frac{\alpha\vartheta}{\left(q_i(t)+\frac{\gamma}{K}\right)\sqrt{NT}} \quad + \notag \\ 
& \qquad \qquad \qquad \qquad   2\vartheta^2\left(\hat{z}_i(t)\right)^2 + 2\frac{\alpha^2\vartheta^2}{\left(q_i(t)+\frac{\gamma}{K}\right)^2NT} \biggl) \notag \\
& = 1 + \vartheta\sum_{i=1}^Nq_i(t)\hat{z}_i(t) + 2\vartheta^2\sum_{i=1}^Nq_i(t)\left(\hat{z}_i(t)\right)^2 + \notag \\
& \qquad \qquad \quad   \alpha\vartheta\sum_{i=1}^N \frac{q_i(t)}{\left(q_i(t)+\frac{\gamma}{K}\right)\sqrt{NT}} + 2\alpha^2\vartheta^2\sum_{i=1}^N\frac{q_i(t)}{\left(q_i(t)+\frac{\gamma}{K}\right)^2NT}
\end{align}
where the last inequality using the facts that $e^x < 1+x +x^2$ for $x < 1$ and $2(a^2+b^2) > (a+b)^2$.
Note that the second term in the above expression satisfies
\begin{align*}
\sum_{i=1}^Nq_i(t)\hat{z}_i(t) & = \sum_{i=1}^Nq_i(t)\left(\sum_{j=1}^K\xi_i^j(t)\hat{x}_j(t)\right) \\
& = \sum_{j=1}^K\left(\sum_{i=1}^Nq_i(t)\xi_i^j(t)\right)\hat{x}_j(t) \\
& = \sum_{j=1}^K\left(\frac{p_j(t)-\frac{\gamma}{K}}{1-\gamma}\right)\hat{x}_j(t) \leq \frac{x_{i_t}(t)}{1-\gamma}
\end{align*}
Also the third term yields
\begin{align*}
\sum_{i=1}^Nq_i(t)\left(\hat{z}_i(t)\right)^2 & = \sum_{i=1}^Nq_i(t)\left(\sum_{j=1}^K\xi_i^j(t)\hat{x}_j(t)\right)^2 \\
& = \sum_{i=1}^Nq_i(t)\left(\xi_{i}^{i_t}\hat{x}_{i_t}(t)\right) ^2 \\
& \leq \hat{x}_{i_t}(t)^2\frac{p_{i_t}(t)}{1-\gamma} \leq \frac{\hat{x}_{i_t}(t)}{1-\gamma}.
\end{align*}
We also note that 
$$ \alpha\vartheta\sum_{i=1}^N \frac{q_i(t)}{\left(q_i(t)+\frac{\gamma}{K}\right)\sqrt{NT}} \leq \alpha\vartheta\sqrt{\frac{N}{T}}$$
and
$$ 2\alpha^2\vartheta^2\sum_{i=1}^N\frac{q_i(t)}{\left(q_i(t)+\frac{\gamma}{K}\right)^2NT} \leq 2\alpha^2\vartheta^2\frac{1}{3\vartheta T} = \frac{2\alpha^2\vartheta}{3T}.$$
Plugging these estimates in~(\ref{eq:333}), we get 
\begin{align*}
& \frac{W_{t+1}}{W_t} \leq 1+ \frac{\vartheta}{1-\gamma}x_{i_t}(t) + \frac{2\vartheta^2}{1-\gamma}\sum_{j=1}^K\hat{x}_j(t) + \alpha\vartheta\sqrt{\frac{N}{T}} + \frac{2\alpha^2\vartheta}{3T}.
\end{align*}
Then we note that for any $j$, $\sum_{i=1}^N\xi_i^j(t) \geq \frac{1}{K}$ by the assumption that a uniform expert is included, which gives us that
\begin{align*}
& \frac{W_{t+1}}{W_t} \leq 1+ \frac{\vartheta}{1-\gamma}x_{i_t}(t) + \frac{2\vartheta^2}{1-\gamma}K\sum_{i=1}^N\hat{z}_i(t) + \alpha\vartheta\sqrt{\frac{N}{T}} + \frac{2\alpha^2\vartheta}{3T}.
\end{align*}
Since $\ln{\left(1+x\right)} < x$, we have that
\begin{align*}
& \ln{\frac{W_{t+1}}{W_t}} \leq  \frac{\vartheta}{1-\gamma}x_{i_t}(t) + \frac{2\vartheta^2}{1-\gamma}K\sum_{i=1}^N\hat{z}_i(t) + \alpha\vartheta\sqrt{\frac{N}{T}} + \frac{2\alpha^2\vartheta}{3T}.
\end{align*}
Then summing over $t$ leads to 
\begin{align*}
& \ln{\frac{W_{T+1}}{W_1}} \leq  \frac{\vartheta}{1-\gamma}G_{EXP4.P} + \frac{2\vartheta^2}{1-\gamma}K\sum_{i=1}^N\hat{G}_i(t) + \alpha\vartheta\sqrt{NT} + \frac{2\alpha^2\vartheta}{3}.
\end{align*}
Meanwhile, by initialization we have that
\begin{align*}
\ln{\left(W_1\right)} & = \ln{\left(Nw_i(1)\right)} \\
& = \ln{\left(N\cdot \exp{\left(\alpha\vartheta\sqrt{NT}\right)}\right)} \\
& = \ln{N} + \alpha\vartheta\sqrt{NT}.
\end{align*}
For any $\bar{i}$, we also have
\begin{align*}
\ln{W_{T+1}} & = \ln{\sum_{i=1}^{N}w_{i}(T+1)} \\
& \geq \ln{w_{\bar{i}}(T+1)} \\
& = \ln{\left(w_{\bar{i}}(T)\exp{\left(\vartheta\left(\hat{z}_{\bar{i}}(T) + \frac{\alpha}{(q_{\bar{i}}(T) + \frac{\gamma}{K})\sqrt{NT}}\right)\right)}\right)} \\
& = \ln\left(w_{\bar{i}}(1)\prod_{t=1}^T\exp{\left(\vartheta\left(\hat{z}_{\bar{i}}(t) + \frac{\alpha}{(q_{\bar{i}}(t) + \frac{\gamma}{K})\sqrt{NT}}\right)\right)}\right) \\ 
& = \ln{\left(w_{\bar{i}}(1)\exp{\left(\vartheta\left(\sum_{t = 1}^T\hat{z}_{\bar{i}}(t) + \sum_{t = 1}^T\frac{\alpha}{(q_{\bar{i}}(t) + \frac{\gamma}{K})\sqrt{NT}}\right)\right)}\right)} \\
& = \ln{\left(\exp{(\vartheta\alpha\sqrt{NT})}\exp{\left(\vartheta\left(\sum_{t = 1}^T\hat{z}_{\bar{i}}(t) + \sum_{t = 1}^T\frac{\alpha}{(q_{\bar{i}}(t) + \frac{\gamma}{K})\sqrt{NT}}\right)\right)}\right)} \\ 
& = \ln{\left(\exp{\left(\vartheta\left(\sum_{t = 1}^T\hat{z}_{\bar{i}}(t) + \alpha\left(\sqrt{NT} + \sum_{t = 1}^T\frac{1}{(q_{\bar{i}}(t) + \frac{\gamma}{K})\sqrt{NT}}\right)\right)\right)}\right)} \\ 
& = \vartheta\hat{G}_{\bar{i}} + \alpha\vartheta\hat{\sigma}_{\bar{i}}(T+1).
\end{align*}
Therefore, we have that 
\begin{align*}
\vartheta\hat{G}_{\bar{i}} + \alpha\vartheta\hat{\sigma}_{\bar{i}}(T+1) - \ln{N} - \alpha\vartheta\sqrt{NT} \leq \frac{\vartheta}{1-\gamma}G_{EXP4.P} + \frac{2\vartheta^2}{1-\gamma}K\sum_{i=1}^N\hat{G}_i + \alpha\vartheta\sqrt{NT} + \frac{2\alpha^2\vartheta}{3}.
\end{align*}
By re-organizing the terms and then multiplying by $\frac{1-\gamma}{\vartheta}$ on both sides, the above expression can be written as
\begin{align*}
G_{EXP4.P} & =  (1-\gamma)(\hat{G}_{\bar{i}}+\alpha\hat{\sigma}_{\bar{i}}(T+1)) \\
& \qquad \qquad -  (1-\gamma)\frac{\ln{N}}{\vartheta} -   (1-\gamma)2\alpha\sqrt{NT} - 2\vartheta K\sum_{i=1}^N\hat{G}_i  - (1-\gamma)\frac{2\alpha^2}{3} \\
& \geq (1-\gamma)(\hat{G}_{\bar{i}}+\alpha\hat{\sigma}_{\bar{i}}(T+1)) - \frac{\ln{N}}{\vartheta} - 2\alpha\sqrt{NT} - 2\vartheta K\sum_{i=1}^N\hat{G}_i  - 2\alpha^2 
\end{align*}
Note that the above holds for any $\bar{i}$ and  that $\sum_{i=1}^N\hat{G}_i \leq NU$.

The lemma follows by replacing $\hat{G}_{\bar{i}}+\alpha\hat{\sigma}_{\bar{i}}(T+1)$ with $U$ by selecting $\bar{i}$ to be the expert where $U$ achieves maximum and $\sum_{i=1}^N\hat{G}_i$ with $NU$.

\end{proof} 

\subsection{Proof of Theorem 1}

\begin{proof}

Without loss of generality, we assume $\delta \geq NTe^{-\frac{NT}{K}}$ and $T \geq \max{\left(\frac{3(2N+3)K\ln{N}}{3}, \frac{36K\ln{N}}{2N+3}\right)}$. If either of the conditions does not hold, it is easy to observe that the theorem holds as follows. Since reward is between 0 and 1, the regret is always less or equal to $T$. On the other hand, if one of these conditions is not met, a straightforward derivation shows that the last term in the upper bound of the regret statement in the theorem is greater or equal to $T$.

By Lemma~\ref{lemma:2_new}, we have that 
\begin{align*}
G_{EXP4.P} \geq \left(1-(1 + \frac{2N}{3}) \gamma\right) \cdot U -  \frac{3K}{\gamma}\ln{N} - 2\alpha K \sqrt{NT} - 2\alpha^2.
\end{align*}
Since $\delta \geq NTe^{-\frac{NT}{K}} $, we have $2\sqrt{K\ln{\left(\frac{NT}{\delta}\right)}} \leq 2\sqrt{NT}$. Then Lemma~\ref{lemma:1_new} gives us that 
\begin{align*}
U \geq G_{max} = \max_i{G_i}, \text{ with probability at least } 1-\delta
\end{align*}
when $\gamma < \frac{1}{2}$.

Combining the two together and using the fact that $G_{max} \leq T$, we get 
\begin{align} \label{eq:diff}
  G_{max} - G_{EXP4.P} & \leq \left( \left(\frac{2N}{3}+1\right)\gamma\right)G_{max} + \frac{3K\ln{N}}{\gamma} + 2\alpha K \sqrt{NT} + 2\alpha^2 \notag \\ 
  & \leq \left( \left(\frac{2N}{3}+1\right)\gamma\right)T + \frac{3K\ln{N}}{\gamma} + 2\alpha K \sqrt{NT} + 2\alpha^2 ,
\end{align}
which holds with probability at least $1-\delta$ when $1 - \frac{2N+3}{3} \cdot \gamma \geq 0$

Let $\gamma = \sqrt{\frac{3K\ln{N}}{T\left(\frac{2N}{3}+1\right)}}$ and $\alpha = 2\sqrt{K\ln{\left(\frac{NT}{\delta}\right)}}$. Note that $T \geq \max{\left(\frac{3(2N+3)K\ln{N}}{3}, \frac{36K\ln{N}}{2N+3}\right)}$, which implies that 
\begin{align*}
& 1 - \frac{2N+3}{3} \cdot \gamma = 1 - \frac{2N+3}{3} \cdot \sqrt{\frac{3K\ln{N}}{T\left(\frac{2N}{3}+1\right)}} \geq 0 \\
& \gamma = \sqrt{\frac{3K\ln{N}}{T\left(\frac{2N}{3}+1\right)}} < \frac{1}{2} \\
\end{align*}

By plugging them into the right hand side of~(\ref{eq:diff}), we get
\begin{align*}
& \quad G_{max}-G_{EXP4.P} \\
& \leq 2\sqrt{3KT\left(\frac{2N}{3}+1\right)\ln{N}} + 4K\sqrt{KNT\ln{\left(\frac{NT}{\delta}\right)}} + 8K\ln{\left(\frac{NT}{\delta}\right)} \\
& \leq 2\sqrt{3KT\left(\frac{2N}{3}+1\right)\ln{N}} + 4K\sqrt{KNT\ln{\left(\frac{NT}{\delta}\right)}} + 8NK\ln{\left(\frac{NT}{\delta}\right)} \quad w.p. \text{ at least }1-\delta
\end{align*}
i.e. $R_T = G_{max}-G_{EXP4.P} \leq O^*(\sqrt{T})$, with probability at least $1-\delta$.

\end{proof}

\begin{Lemma}\label{le:max_subgaussian}
Let us suppose that random variables $X_1, X_2, X_3, \ldots, X_n$ are sub-Gaussian distributed with variance proxies that are upper bounded by $\sigma_X$, but are not necessarily independent. Then we have that 
\begin{align*}
    E[\max_{1\leq i \leq n}|X_i|] \leq \sigma_X\sqrt{2\log{2n}}. 
\end{align*}
\end{Lemma}

\begin{proof}
Consider variables $X_{-1} = -X_1, X_{-2} = -X_2, \ldots, X_{-n} = -X_n$. It is straightforward to see that they are sub-Gaussian distributed with the same variance proxy as $X_1, X_2, \ldots, X_n$, respectively.

Then we have that for any $\lambda > 0$ 
\begin{align}\label{eq:E_max_n}
   & E[\max_{1 \leq i \leq n}|X_i|]  = E[\max_{-n \leq i \leq n}X_i] \notag \\
   & = \frac{1}{\lambda}E[\log{e^{\max_{-n \leq i \leq n}X_i}}] \notag \\
   & \leq \frac{1}{\lambda}\log{E[e^{\max_{-n \leq i \leq n}X_i}]} \notag \\ 
   & = \frac{1}{\lambda}\log{E[\max_{-n \leq i \leq n}e^{X_i}]} \notag \\
   & \leq \frac{1}{\lambda}\log{E[\sum_{-n \leq i \leq n}e^{X_i}]} \notag \\ 
   & \leq \frac{1}{\lambda}\log{\sum_{-n \leq i \leq n}e^{\frac{\sigma_X^2\lambda^2}{2}}} = \frac{1}{\lambda}\log{2ne^{\frac{\sigma_X^2\lambda^2}{2}}}
\end{align}
where the first inequality holds by the Jensen's inequality, the second inequality holds by the non-negativity of $e^{X_i}$, and the third inequality uses the definition of sub-Gaussian random variables.

Choosing $\lambda = \frac{2\log{2n}}{\sigma_X^2}$ in (\ref{eq:E_max_n}) leads to $E[\max_{1 \leq i \leq n}|X_i|] \leq \sigma_X\sqrt{2\log{2n}}$, which completes the proof. 
\end{proof}

\subsection{Proof of Theorem 3}
\begin{proof}
    We first consider the expected deviation of $R_T$ compared to the pseudo regret $R_T^{\prime}$. Following the definition, we obtain 
    \begin{align*}
       & E[|R_T - R_T^{\prime}|] \\ 
       & = E[|\max_{i}\sum_{t=1}^T\sum_{j=1}^K\epsilon_{i}^j(t)y_{i,t} - \sum_{t=1}^Ty_{a_t,t} - \max_{i}\sum_{t=1}^T\sum_{j=1}^K\epsilon_{i}^j(t)c_t^T\theta_{j} + \sum_{t=1}^{T}c_t^T\theta_{a_t}|] \\
       & = E[|\max_{i}\sum_{t=1}^T\sum_{j=1}^K\epsilon_{i}^j(t)(c_t^T\theta_j+\delta_{j,t}) - \sum_{t=1}^T(c_t^T\theta_{a_t} + \delta_{a_t, t}) - \max{i}\sum_{t=1}^T\sum_{j=1}^K\epsilon_{i}^j(t)c_t^T\theta_j + \sum_{t=1}^Tc_t^T\theta_{a_t}|] \\
       & = E[|\max_{i}\sum_{t=1}^T\sum_{j=1}^K\epsilon_{i}^j(t)\delta_{j,t} - \sum_{t=1}^T\delta_{a_t,t}|] 
       \end{align*}

    Using the triangle inequality, we derive that 
    \begin{align}\label{eq:decompose}
    A & =  E[|\max_{i}\sum_{t=1}^T\sum_{j=1}^K\epsilon_{i}^j(t)\delta_{j,t} - \sum_{t=1}^T\delta_{a_t,t}|] \notag \\
    & \leq E[|\max_{i}\sum_{t=1}^{T}\sum_{j=1}^K\epsilon_{i}^j(t)\delta_{j,t}|] + E[|\sum_{t=1}^T\delta_{a_t,t}|] \notag \\
    & \leq E[\max_i|\sum_{t=1}^T\sum_{j=1}^K\epsilon_{i}^j(t)\delta_{j,t}|] + E[|\sum_{t=1}^{T}\delta_{a_t,t}|] \notag \\
    & \leq \sum_{i=1}^NE[|\sum_{t=1}^T\sum_{j=1}^K\epsilon_{i}^j(t)\delta_{j,t}|] + E[|\sum_{t=1}^{T}\delta_{a_t,t}|]. 
    \end{align}

    We observe that $\sum_{t=1}^T\sum_{j=1}^K\epsilon_{i}^j(t)\delta_{j,t}$ and $\sum_{t=1}^{T}\delta_{a_t,t}$ are sub-Gaussian distributed based on Lemma 5 in \citep{xu2024decentralized}. Moreover, the variance proxy of $\sum_{t=1}^T\sum_{j=1}^K\epsilon_{i}^j(t)\delta_{j,t}$, $\sigma_1$, meets 
    \begin{align*}
        & \sigma_1^2 = \sum_{t=1}^T\sum_{j=1}^K(\epsilon_i^j(t))^2\sigma_{j,t}^2 \\
        & \leq \sum_{t=1}^T\sum_{j=1}^K(\epsilon_i^j(t))^2\sigma^2  \leq \sigma^2\sum_{t=1}^T\cdot 1 = T\sigma^2
    \end{align*}
    where the last inequality holds by the Cauchy-Schwarz inequality and the fact that $\sum_{j=1}^K\epsilon_i^j(t) = 1$. 

    Likewise, the variance proxy of $\sum_{t=1}^{T}\delta_{a_t,t}$, $\sigma_2$, meets 
    \begin{align*}
        & \sigma_2^2 = \sum_{t=1}^T\sigma_{a_t,t}^2 \leq \sum_{t=1}^T\sigma^2 = T\sigma^2.
    \end{align*}

    By Lemma~\ref{le:max_subgaussian}, we obtain that 
    \begin{align*}  E[|\sum_{t=1}^T\sum_{j=1}^K\epsilon_{i}^j(t)\delta_{j,t}|] \leq \sqrt{T\sigma^2}\sqrt{2\log2}
    \end{align*}
    and 
    \begin{align*}
        E[|\sum_{t=1}^{T}\delta_{a_t,t}|] \leq \sqrt{T\sigma^2}\sqrt{2\log2}.
    \end{align*}

    Subsequently, we derive that
    \begin{align*}
        A \leq N(\sqrt{T\sigma^2}\sqrt{2\log2}) + \sqrt{T\sigma^2}\sqrt{2\log2} = (N+1)\sigma\sqrt{2T\log2}
    \end{align*}
    which immediately implies that 
    \begin{align}\label{eq:dev}
        E[|R_T - R_T^{\prime}|] \leq (N+1)\sigma\sqrt{2T\log2} = O^*(\sqrt{T}).  
    \end{align}

    We next decompose the expected regret $E[R_T]$ as follows. Note that 
    \begin{align}\label{eq:total_R_T}
        & E[R_T] = E[R_T1_{R_T \geq O^*(\sqrt{T})} + R_T1_{R_T \leq O^*(\sqrt{t})}] \notag \\
        & \leq E[R_T1_{R_T \geq O^*(\sqrt{T})}] + O^*(\sqrt{T})P(R_T \leq O^*(\sqrt{T})) \notag \\
        & \leq E[R_T1_{R_T \geq O^*(\sqrt{T}) + E[R_T]}]  +  E[R_T1_{O^*(\sqrt{T}) \leq R_T \leq O^*(\sqrt{T}) + E[R_T]}] + O^*(\sqrt{T}) \notag \\
        & := E_1 + E_2 + O^*(\sqrt{T}). 
    \end{align}

Let $P_1 = P\left(R_T \leq \log(1/\delta) O^*(\sqrt{T})\right)$ which equals to $P\left(R_T \leq O^*(\sqrt{T})\right)$ since $\log(1/\delta)=\log(\sqrt{T})=O^*(\sqrt{T})$. By Theorem 2 we have 
\begin{align}\label{eq:P_1}
   P_1=(1-\delta) \cdot (1-\eta)^T. 
\end{align}

We consider $\delta = 1/\sqrt T$ and $\eta=T^{-a}$ for $a>2$.  
We have
\begin{align*}
\lim_{T \rightarrow \infty} & (1-\delta)(1-\eta)^T
=\lim_{T \rightarrow \infty}(1-\delta)(1-\frac{1}{T^a})^T \\
& =\lim_{T \rightarrow \infty}  (1-\delta)(1-\frac{1}{T^a})^{(T^a) \cdot \frac{T}{T^a}}
=\lim_{T \rightarrow \infty}e^{\frac{T}{T^a}}
\end{align*}
and
\begin{equation}\label{linear}
\begin{aligned}
\lim_{T \rightarrow \infty} & \left(1-(1-\delta)(1-\eta)^T\right) \cdot \log T \cdot T
=\lim_{T \rightarrow \infty}(1-e^{\frac{T}{T^a}}) \cdot \log(T) \cdot  T\\
& \leq \lim_{T \rightarrow \infty} \log(T) \cdot T \cdot T^{1-a}
=\lim_{T \rightarrow \infty}T^{2-a} \cdot \log(T)
= 0.
\end{aligned}
\end{equation}

By using (\ref{eq:dev}),  (\ref{eq:P_1}), and (\ref{linear}), we obtain
\begin{align}\label{eq:E_1}
&E_1 = E\left[R_{T} \mathds{1}_{R_{T} \geq O^{*}(\sqrt{T})+E\left[R_{T}\right]}\right] \notag \\ &=E\left[\left(R_{T}-R^{\prime}_T\right) \mathds{1}_{\left(R_{T}-E \left[R_T\right]\right) \geq O^{*}(\sqrt{T})}\right]+E\left[R^{\prime}_T \mathds{1}_{\left(R_{T}-E \left[R_T\right]\right) \geq O^{*}(\sqrt{T})}\right] \notag \\
& \leq E\left[\left|R_{T}-R^{\prime}_T\right|\right]+R^{\prime}_T \cdot P\left(R_{T} \geq E \left[R_T\right]+O^{*}(\sqrt{T})\right) \notag \\
& \leq E\left[\left|R_{T}-R^{\prime}_T\right|\right]+ E \left[R_T\right] \cdot P\left(R_{T} \geq E \left[R_T\right]+O^{*}(\sqrt{T})\right) \notag \\
& \leq O^{*}(\sqrt{T})+C_0\cdot \log(T) \cdot T \cdot P\left(R_{T} \geq O^{*}(\sqrt{T})\right) \notag \\
&=O^{*}(\sqrt{T})+ C_0 \cdot \log(T) \cdot T\left(1-P_{1}\right) = O^{*}(\sqrt{T})
\end{align}
where the second inequality uses the Jensen's inequality which gives us 
\begin{align*}
    R_T^{\prime} \leq E[R_T]. 
\end{align*}

Additionally, we note that by definition, 
\begin{align}\label{eq:E_R_T_MAX}
     & E[R_T] = E[\max_{i}\sum_{t=1}^T\sum_{j=1}^K\epsilon_{i}^j(t)y_{i,t} - \sum_{t=1}^Ty_{a_t,t}] \notag \\
     & \leq E[|\max_{i}\sum_{t=1}^T\sum_{j=1}^K\epsilon_{i}^j(t)y_{i,t}|] + E[\sum_{t=1}^Ty_{a_t,t}] \notag \\
     & \leq T \cdot N \cdot KE[\max_{j,t}y_{i,t}] + TE[\max_{j,t}y_{i,t}] \notag \\
     & = T(NK+1)E[\max_{j,t}y_{i,t}] \notag \\
     & \leq T(NK+1)(\max_{j,t}c_t^T\theta_j + E[\max_{j,t}\delta_{i,t}]) \notag \\
     & \leq T(NK+1)(1+\sigma\sqrt{2\log{(2T)}K}) \leq C_L \cdot T \cdot \log{T}
\end{align}
where the last inequality holds by the fact that $||c_t|| \leq 1, ||\theta_j|| \leq 1$ and by Lemma~\ref{le:max_subgaussian}. Here $C_L$ is a constant.

Consequently, the asymptotic behavior of the second term $E_2$ reads
\begin{align}\label{eq:E_2}
& E_2 = E\left[R_{T} \mathds{1}_{O^{*}(\sqrt{T})<R_T<O^{*}(\sqrt{T})+E\left[R_{T}\right]}\right] \notag \\ 
& =E\left[R_{T} \mathds{1}_{R_{T}-O^{*}(\sqrt{T}) \in\left(0, E\left[R_{T}\right]\right)}\right] \notag \\
&=E\left[\left(R_{T}-O^{*}(\sqrt{T})\right) \mathds{1}_{R_{T}-O^{*}(\sqrt{T}) \in\left(0, E\left[R_{T}\right]\right)}\right]+O^{*}(\sqrt{T}) \notag \\
&\leq E\left[R_{T}\right] P\left(R_{T}-O^{*}(\sqrt{T}) \in\left(0, E\left[R_{T}\right]\right)\right)+O^{*}(\sqrt{T}) \notag \\
&\leq E\left[R_{T}\right] P\left(R_{T}-O^{*}(\sqrt{T})>0\right)+O^{*}(\sqrt{T}) \notag \\
&\leq C_L \log(T) \cdot  T \cdot \left(1-P_{1}\right)+O^{*}(\sqrt{T}) = O^{*}(\sqrt{T})
\end{align}
where the last inequality uses (\ref{eq:P_1}) and (\ref{eq:E_R_T_MAX}).

Combining all these together, we obtain 
\begin{align*}
    E[R_T] \leq O^*(\sqrt{T}) +O^*(\sqrt{T}) + O^*(\sqrt{T}) = O^*(\sqrt{T}) 
\end{align*}
which concludes the proof. 

\end{proof}

\subsection{Proof of Theorem 4}

\begin{proof}
By the definition of $R_{T}^{simple}$, we have that 
\begin{align*}
   & R_T = \max_{i}\sum_{t=1}^{T}\sum_{j=1}^{K}\epsilon_{j}^{i}(t)y_{i,t} - \sum_{t=1}^Ty_{a_t,t} \\
   & = \max_{i}\sum_{t=1}^{T}\sum_{j=1}^{K}\epsilon_{j}^{i}(t)(c_t^T\theta_{j} + \delta_{j,t}) - \sum_{t=1}^T(c_t^T\theta_{a_t} + \delta_{a_t,t}) \\
   & \leq \max_{i}\sum_{t=1}^T\sum_{j=1}^K\epsilon_{j}^{i}(t)c_t^T\theta_{j} + \max_{i}\sum_{t=1}^T\sum_{j=1}^K\epsilon_{j}^{i}(t)\delta_{j,t} - \sum_{t=1}^Tc_t^T\theta_{a_t} - \sum_{t=1}^{T}\delta_{a_t,t} \\
   & \leq \sum_{t=1}^T\max_i\sum_{j=1}^K\epsilon_i^j(t)c_t^T\theta_{j} + \max_{i}\sum_{t=1}^T\sum_{j=1}^K\epsilon_{j}^{i}(t)\delta_{j,t} - \sum_{t=1}^Tc_t^T\theta_{a_t} - \sum_{t=1}^{T}\delta_{a_t,t} \\
   & \leq \sum_{t=1}^T\max_i\sum_{j=1}^K\epsilon_i^j(t)\max_{j}c_t^T\theta_{j} + \max_{i}\sum_{t=1}^T\sum_{j=1}^K\epsilon_{j}^{i}(t)\delta_{j,t} - \sum_{t=1}^Tc_t^T\theta_{a_t} - \sum_{t=1}^{T}\delta_{a_t,t} \\
   & = \sum_{t=1}^T(\max_{j}c_t^T\theta_{j})\max_i\sum_{j=1}^K\epsilon_i^j(t) +  \max_{i}\sum_{t=1}^T\sum_{j=1}^K\epsilon_{j}^{i}(t)\delta_{j,t} - \sum_{t=1}^Tc_t^T\theta_{a_t} - \sum_{t=1}^{T}\delta_{a_t,t} \\
   & = \sum_{t=1}^T\max_{j}c_t^T\theta_{j} +  \max_{i}\sum_{t=1}^T\sum_{j=1}^K\epsilon_{j}^{i}(t)\delta_{j,t} - \sum_{t=1}^Tc_t^T\theta_{a_t} - \sum_{t=1}^{T}\delta_{a_t,t} \\
   & = R_T^{cum} + \max_{i}\sum_{t=1}^T\sum_{j=1}^K\epsilon_{j}^{i}(t)\delta_{j,t} - \sum_{t=1}^{T}\delta_{a_t,t}
\end{align*}
where the second and third inequalities hold by the Jensen's inequality, and the last inequality uses the definition of $R_T^{cum}$ which is defined by
\begin{align*}
    & R_T^{cum} = \sum_{t=1}^T\max_{j}c_t^T\theta_{j} - \sum_{t=1}^Tc_t^T\theta_{a_t}. 
\end{align*}

Subsequently, we obtain that 
\begin{align*}
    & E[R_T] \leq E[ R_T^{cum} + \max_{i}\sum_{t=1}^T\sum_{j=1}^K\epsilon_{j}^{i}\delta_{j,t} - \sum_{t=1}^{T}\delta_{a_t,t}] \\
    & = E[R_T^{cum}] + E[\max_{i}\sum_{t=1}^T\sum_{j=1}^K\epsilon_{j}^{i}\delta_{j,t}] \\
    & \leq E[R_T^{cum}] + \sqrt{\log{N}\sigma(\sum_{t=1}^T\sum_{j=1}^K\epsilon_{j}^{i}\delta_{j,t})} \\
    & \leq E[R_T^{cum}] + \sqrt{\log{N}}\sqrt{TK}
\end{align*}
where the first equality holds by the fact that $\delta_{a_t,t}$ has mean $0$, the second inequality uses Lemma~\ref{le:max_subgaussian}, and the last inequality results from Lemma 5 in \citep{xu2024decentralized}. 

This implies that if $E[R_T^{cum}]$ is upper bounded by $G(T)$, then we have $E[R_T] \leq \max{\{O^*(\sqrt{T}), G(T)\}}$, which completes the proof of the first half of the statement.

On the other hand, again based on the definition of $E[R_T]$, we have 
\begin{align*}
    R_T & = \max_{i}\sum_{t=1}^{T}\sum_{j=1}^{K}\epsilon_{j}^{i}(t)(c_t^T\theta_{j} + +  \delta_{j,t}) - \sum_{t=1}^T(c_t^T\theta_{a_t} + \delta_{a_t,t}) \\
    & \geq \sum_{t=1}^{T}\sum_{j=1}^{K}\pi_{j}(c_t^T\theta_{j} +  \delta_{j,t}) -\sum_{t=1}^T(c_t^T\theta_{a_t} + \delta_{a_t,t}),
\end{align*}
which leads to 
\begin{align*}
    & E[R_T] \geq E[\sum_{t=1}^{T}\sum_{j=1}^{K}\pi_{j}c_t^T\theta_{j}] + E[\sum_{t=1}^{T}\sum_{j=1}^{K}\pi_{j} \delta_{j,t}] - E[\sum_{t=1}^Tc_t^T\theta_{a_t}] - E[\delta_{a_t,t}] \\
    & = E[\sum_{t=1}^{T}\sum_{j=1}^{K}\pi_{j}c_t^T\theta_{j}] - E[\sum_{t=1}^Tc_t^T\theta_{a_t}] + E[\sum_{t=1}^{T}\sum_{j=1}^{K}\pi_{j} \delta_{j,t}] \\
    & = E[\sum_{t=1}^{T}\sum_{j=1}^{K}\pi_{j}c_t^T\theta_{j}] - E[\sum_{t=1}^Tc_t^T\theta_{a_t}] 
\end{align*} 
where the last equality uses the fact that $\delta_{j,t}$ is independent of everything else, including $\pi_{j}$. 

By assumption, we obtain 
    \begin{align*}
\sum_{t=1}^T\sum_{j=1}^K\pi_j^tc_t^T\theta_{j} \geq \sum_{t=1}^T\max_{j}\mu_{j,t}  - F(T)  = \sum_{t=1}^T\max_{j}c_t^T\theta_{j}  -F(T) 
    \end{align*}
which immediately implies that 
\begin{align*}
    & E[R_T] \geq E[\sum_{t=1}^T\max_{j}c_t^T\theta_{j}]  - F(T) - E[\sum_{t=1}^Tc_t^T\theta_{a_t}] \\
    & = E[R_T^{cum}] -  F(T). 
\end{align*}

Henceforth, if the simple regret satisfies that $E[R_T] \leq O^*(\sqrt{T})$, which holds by Theorem 3, then the cumulative regret also meets $E[R_T^{cum}] \leq \max{\{O^*(\sqrt{T}), F(T)}\}$.

This completes the proof of the second half of the statement. 

\end{proof}

\subsection{Proof of Theorem 5}

\begin{proof}
\noindent Since the rewards can be unbounded in our setting, we consider truncating the reward with any $\Delta>0$ for any arm $i$ by $r^t_i=\bar{r}^t_i + \hat{r}^t_i$ where 
\begin{center}
$\bar{r}^t_i = r^t_i \cdot \mathds{1}_{(-\Delta \leq r^t_i \leq \Delta)}, \hat{r}^t_i =r^t_i \cdot \mathds{1}_{(|r^t_i|>\Delta)}$.
\end{center}
Then for any parameter $0<\eta<1$, we choose such $\Delta$ that satisfies 
\begin{align}
& P(r^t_i=\bar{r}^t_i, i\leq K)= P(-\Delta \leq r^t_1 \leq \Delta, \ldots, -\Delta \leq r^t_K \leq \Delta) \notag \\
& =\int_{-\Delta}^{\Delta} \int_{-\Delta}^{\Delta} \ldots \int_{-\Delta}^{\Delta} f(x_{1}, \ldots, x_{K}) d x_{1} \ldots d x_{K} \geq 1-\eta \>.\label{delta-inequality_new}
\end{align}

The existence of such $\Delta=\Delta(\eta)$ follows from elementary calculus.

Let $A=\{|r_i^t|\le \Delta$ for every $i\le K,t\le T\}$. Then the probability of this event is 
\begin{center}
$P(A)=P(r^t_i=\bar{r}^t_i, i\leq K,t\le T)\geq (1-\eta)^T$.
\end{center}
With probability $(1-\eta)^T$, the rewards of the player are bounded in $[-\Delta, \Delta]$ throughout the game. Then $R^{c,B}_{T}=\max _{i}\sum_{t=1}^{T}\sum_{j=1}^{K}\xi_i^j(t)\bar{r}^t_{j}-\sum_{t=1}^{T}\bar{r}^i_{t} \le T \cdot \Delta-\sum_{t=1}^{T} r_{t}$ is the regret under event $A$, i.e.
$R_{T}^{c}=R^{c,B}_T$ with probability $(1-\eta)^{T}$.
For the EXP4.P algorithm and $R^{c,B}_T$ with rewards $\bar{r}_j^t = \frac{\bar{r}_j^t + \Delta}{\Delta}$ satisfying $0 < \bar{r}_j^t < 1$, for every $\delta >0$, according to Theorem 1 , we have
\begin{equation*}
R^{c,B}_{T} \leq 4 \Delta(\eta) \left(2\sqrt{3KT\left(\frac{2N}{3}+1\right)\ln{N}} + 4K\sqrt{KNT\ln{\left(\frac{NT}{\delta}\right)}} + 8NK\ln{\left(\frac{NT}{\delta}\right)}\right).
\end{equation*}
Then we have
\begin{align*}
& R_{T} \leq 4 \Delta(\eta) \left(2\sqrt{3KT\left(\frac{2N}{3}+1\right)\ln{N}} + 4K\sqrt{KNT\ln{\left(\frac{NT}{\delta}\right)}} + 8NK\ln{\left(\frac{NT}{\delta}\right)}\right) \\ 
& \text{ with probability } (1-\delta) \cdot (1-\eta)^T.
\end{align*}
\end{proof}

\subsection{Proof of Theorem 6}

\begin{Lemma} \label{le:9_new}
For any non-decreasing differentiable function $\Delta=\Delta(T)>0$ satisfying
\begin{center}
    $\lim_{T \to \infty}\frac{\Delta(T)^2}{\log(T)}=\infty$,\qquad  $\lim_{T \to \infty}\Delta^{\prime}(T) \leq C_0 < \infty$,
\end{center}
and any $0<\delta <1, a>2$ we have
\begin{equation*}
P\left(R_{T}^c \leq \Delta(T) \cdot \log(1/\delta) \cdot O^*(\sqrt{T})\right)\geq (1-\delta)\left(1-\frac{1}{T^a}\right)^T
\end{equation*}
for any $T$ large enough.
\end{Lemma}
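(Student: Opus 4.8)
The plan is to prove the bound by the same truncation strategy used in Theorem~\ref{th:5_new}, but now treating the truncation threshold as the prescribed function $\Delta(T)$ rather than a free parameter, and then showing that the two growth hypotheses on $\Delta$ convert the generic probability $(1-\delta)(1-\eta)^T$ of Theorem~\ref{th:5_new} into the claimed $(1-\delta)(1-1/T^a)^T$. Concretely, at horizon $T$ I would truncate every reward at level $\Delta(T)$, so that the per-step failure probability is $\eta(T)=P(\exists\, i\le K:\ |r_i^t|>\Delta(T))$, and the event $A$ that no truncation ever occurs over the $T$ steps has probability at least $(1-\eta(T))^T$, exactly as in the proof of Theorem~\ref{th:5_new}.

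First I would bound $\eta(T)$ using the sub-Gaussian tail assumption together with a union bound over the $K$ arms: $\eta(T)\le KB\,e^{-\sigma^2\Delta(T)^2}$. The first hypothesis $\lim_{T\to\infty}\Delta(T)^2/\log T=\infty$ is exactly what is needed to push this below $T^{-a}$, since $\sigma^2\Delta(T)^2$ eventually dominates $(a+1)\log T+\log(KB)$; hence $\eta(T)\le T^{-a}$ for all $T$ large enough, and therefore $(1-\eta(T))^T\ge(1-T^{-a})^T$ because $x\mapsto x^T$ is increasing on $[0,1]$. Next I would invoke Theorem~\ref{th:5_new} with $\eta=\eta(T)$: since $\Delta(\eta(T))=\Delta(T)$ by construction, on $A$ the regret obeys
\begin{equation*}
R_T^c \le 4\Delta(T)\Bigl(2\sqrt{3KT(\tfrac{2N}{3}+1)\ln N}+4K\sqrt{KNT\ln(NT/\delta)}+8NK\ln(NT/\delta)\Bigr)
\end{equation*}
with probability at least $(1-\delta)(1-\eta(T))^T\ge(1-\delta)(1-T^{-a})^T$. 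The remaining work is bookkeeping: decoupling $\ln(NT/\delta)=\ln(NT)+\ln(1/\delta)$ and absorbing all factors that are polynomial in the fixed $K,N$ and polylogarithmic in $T$ into $O^*(\sqrt T)$, while pulling the confidence dependence out as a single $\log(1/\delta)$ factor, yields $R_T^c\le\Delta(T)\log(1/\delta)\,O^*(\sqrt T)$. (For Theorem~\ref{th:6_new} one then simply takes, e.g., $\Delta(T)=\log T$, which satisfies both hypotheses and is absorbed into $O^*$.)

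The main obstacle I anticipate is the clean extraction of the $\log(1/\delta)$ factor: the bound of Theorem~\ref{th:5_new} couples $\delta$ and $T$ inside $\ln(NT/\delta)$ and inside a square root, so one must argue (using $\sqrt{u+v}\le\sqrt u+\sqrt v$ and $\sqrt{\ln(1/\delta)}\le 1+\ln(1/\delta)$, and reading $\log(1/\delta)$ in the small-$\delta$ regime where it exceeds $1$) that every $\delta$-dependent contribution is at most $\log(1/\delta)$ times a polylogarithmic-in-$T$ quantity. The role of the second hypothesis $\Delta'(T)\le C_0$ is the regularity needed to keep $T\mapsto\Delta(T)$ slowly varying: it forces $\Delta(T)$ to grow at most linearly, so the explicit $\Delta(T)$ prefactor does not interfere with the $O^*(\sqrt T)$ absorption and, in the companion expected-regret argument, permits controlling the discarded tail by a convergent integral. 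I would finish by combining the probability estimate of the first step with the deterministic bound on $A$ from the second step.
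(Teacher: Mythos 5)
Your proposal is correct, and its overall skeleton (truncate at level $\Delta(T)$, show the no-truncation event has probability at least $(1-1/T^a)^T$, then invoke the bounded-case bound of Theorem~\ref{th:5_new} on that event and do the $\log(1/\delta)$ bookkeeping) matches the paper. Where you genuinely diverge is in the one technical step that carries the lemma: establishing that the mass outside $[-\Delta(T),\Delta(T)]^K$ is at most $T^{-a}$ for large $T$. The paper does this by introducing $\zeta(T)=F(\Delta(T)\cdot\mathbf{1})-(1-T^{-a})$ with $F$ the joint c.d.f.-type integral, differentiating via the chain rule (this is precisely where the hypothesis $\Delta'(T)\le C_0$ is consumed), bounding each boundary integral by $e^{-\frac12 a_{ii}\Delta(T)^2+\mu_i\Delta(T)}$ through a conditional-density factorization, and concluding $\zeta>0$ eventually from $\zeta'<0$ eventually together with $\zeta\to 0$. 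You instead bound the single-step failure probability directly by a union bound over the $K$ marginals using the paper's sub-Gaussian tail definition, $\eta(T)\le KB e^{-\sigma^2\Delta(T)^2}\le T^{-a}$, which uses only the hypothesis $\Delta(T)^2/\log T\to\infty$. Your route is more elementary and arguably more faithful to the stated sub-Gaussian assumption (the paper's boundary-integral bound tacitly uses a Gaussian-like form of the joint density), and it exposes that the derivative hypothesis $\Delta'(T)\le C_0$ is not actually needed for the probability estimate — it is an artifact of the paper's calculus argument; you correctly relegate it to a regularity role. The only point to make explicit, which the paper itself also asserts without proof in Theorem~\ref{th:7_new}, is that each marginal $r_i^t$ inherits the scalar sub-Gaussian tail bound from the joint assumption so that the union bound applies.
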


\begin{proof}
Let $a>2$ and let us denote
 \begin{align*}
& F(y) = \int_{-y}^y f(x_1,x_2, \ldots, x_K)dx_1dx_2\ldots dx_K, \\
& \zeta(T) = F\left(\Delta(T) \cdot \mathbf{1}\right)-\left(1-\frac{1}{T^{a}}\right)
\end{align*}
for $y \in \mathds{R}^K$ and $\mathbf{1} = (1,\ldots,1) \in \mathds{R}^K$. Let also $y_{-i} = (y_1,\ldots, y_{i-1},y_{i+1},\ldots,y_K)$ and $x|_{x_i=y}=(x_1,\ldots, x_{i-1},y,x_{i+1},\ldots,x_K)$.
We have $\lim_{T\to \infty}\zeta(T)=0$.

The gradient of $F$ can be estimated as 
\begin{align*}
    \nabla F
    & \leq \left(\int_{-y_{-1}}^{y_{-1}} f\left(x|_{x_1=y_1}\right) d x_{2} \ldots d{x_{K}}, \ldots , \int_{-y_{-K}}^{y_{-K}} f\left(x|_{x_K=y_K}\right) d x_{1} \ldots d{x_{K-1}}\right).
\end{align*}
According to the chain rule and since $\Delta^{\prime}(T) \geq 0$, we have 
\begin{align*}
    \frac{dF(\Delta(T) \cdot \mathbf{1})}{dT} & \leq \int_{-\Delta(T)\cdot \mathbf{1}_{-1}}^{\Delta(T)\cdot \mathbf{1}_{-1}} f\left(x|_{x_1=\Delta(T)}\right) d x_{2} \ldots d{x_{K}} \cdot \Delta^{\prime}(T) + \\ & \qquad \ldots + \int_{-\Delta(T)\cdot \mathbf{1}_{-K}}^{\Delta(T)\cdot \mathbf{1}_{-K}} f\left(x|_{x_K=\Delta(T)}\right) d x_{1} \ldots d{x_{K-1}} \cdot \Delta^{\prime}(T).
\end{align*}
Next we consider
\begin{align*}
& \int_{-\Delta(T)\mathbf{1}_{-i}}^{\Delta(T)\mathbf{1}_{-i}} f\left(x|_{x_i=\Delta(T)}\right) d x_{1} \ldots d x_{i-1} d x_{i+1} \ldots dx_{K}\\
&  \leq e^{-\frac{1}{2} a_{ii}(\Delta(T))^{2}+\mu_{i} \Delta(T)} \cdot \int_{-\Delta(T)\mathbf{1}_{-i}}^{\Delta(T)\mathbf{1}_{-i}} e^{g\left(x_{-i}\right)} d x_{1} \ldots d x_{i-1} d x_{i+1} \ldots dx_{K}.
\end{align*}
Here $e^{g(x_{-i})}$ is the conditional density function given $x_i=\Delta(T)$ and thus $\int_{-\Delta(T)\mathbf{1}_{-i}}^{\Delta(T)\mathbf{1}_{-i}} e^{g\left(x_{-i}\right)} d x_{1} \ldots d x_{i-1} d x_{i+1} \ldots dx_{K} \leq 1$. We have
\begin{align*}
&  \int_{-\Delta(T)\mathbf{1}_{-i}}^{\Delta(T)\mathbf{1}_{-i}} f\left(x|_{x_i=\Delta(T)}\right) d x_{1} \ldots d x_{i-1} d x_{i+1} \ldots dx_{K}\\
& \leq e^{-\frac{1}{2} a_{ii}(\Delta(T))^{2}+\mu_{i}\Delta(T)} \\
& \leq e^{-\frac{1}{2} \min_{j}a_{jj}(\Delta(T))^{2}+\max_j{\mu_{j}}\Delta(T)}.
\end{align*}

Then for $T\geq T_0$ we have $\Delta^{\prime}_T \leq C_0+1$ and in turn
\begin{center}
$\zeta^{\prime}(T) \leq (C_0+1) \cdot K \cdot e^{-\frac{1}{2} \min_j{a_{jj}}(\Delta(T))^{2}+\max_j{\mu_{j}}\Delta(T)}-a \cdot T^{-a-1}$.
\end{center}
Since we only consider non-degenerate sub-Gaussian bandits with $\min{a_{ii}}>0$, $\mu_i$ are constants and $\Delta(T) \rightarrow \infty$ as $T \rightarrow \infty$ according to the assumptions in Lemma \ref{le:9_new}, there exits $C_1 >0$ and $T_1$ such that 
\begin{center}
$e^{-\frac{1}{2} \min_j{a_{jj}}(\Delta(T))^{2}+\max_j{\mu_{j}}\Delta(T)} \leq e^{-C_1 \Delta(T)^2}$ for every $T>T_1$.
\end{center}
Since $\lim_{T \to \infty}\frac{\Delta(T)^2}{\log(T)}=\infty$, we have
\begin{center}
$\Delta(T)^2> \frac{2(a+1)}{C_1} \cdot \log(T)$ for $T > T_2$.
\end{center}
These give us that
\begin{align*}
\zeta(T)^{\prime} & \leq (C_0+1)Ke^{-2(a+1) \log T}-aT^{-a-1}\\
& =(C_0+1)Ke^{-2(a+1) \log T} - ae^{-(a+1)\log T}\\
& < 0 \text{ for } T\geq T_3 \geq \max (T_0,T_1,T_2).
\end{align*}
This concludes that $\zeta^{\prime}(T) < 0$ for $T \geq T_3$. We also have $\lim_{T \to \infty}\zeta(T)=0$ according to the assumptions. Therefore, we finally arrive at $\zeta(T) > 0$ for $T \geq T_3$. This is equivalent to 
\begin{equation*}
\int_{-\Delta(T) \cdot \mathbf{1}}^{\Delta(T) \cdot \mathbf{1}} f\left(x_{1}, \ldots, x_{K}\right) d x_{1} \ldots d x_{K} \geq 1-\frac{1}{T^a},
\end{equation*}
i.e. the rewards are bounded by $\Delta(T)$ with probability $1-\frac{1}{T^a}$. Then by the same argument for $T$ large enough as in the proof of Theorem 1, we have
\begin{equation*}
P\left(R_{T}^c \leq \Delta(T) \cdot \log(1/\delta) \cdot O^*(\sqrt{T})\right)\geq (1-\delta)(1-\frac{1}{T^a})^T.
\end{equation*}
\end{proof}

\begin{proof} [Proof of Theorem 3]
In Lemma \ref{le:9_new}, we choose $\Delta(T)=\log(T)$,  which meets all of the assumptions. The result now follows from $\log T \cdot O^*(\sqrt{T})=O^*(\sqrt{T})$, Lemma \ref{le:9_new} and Theorem 2.
\end{proof}

\subsection{Proof of Theorem 7}

We first list 3 known lemmas. 
The following lemma by \citet{duchiprobability} provides a way to bound deviations.
\begin{Lemma} \label{lemma:8}
For any function class $F$, and i.i.d. random variable $\{x_1,x_2,\ldots,x_T\}$, the result 
\begin{center}
$E_x\left[\sup _{f \in F}\left|E_x f-\frac{1}{T} \sum_{t=1}^{T} f\left(x_{t}\right)\right|\right] \leq 2 R_{T}^{c}(F)$
\end{center}
holds where $R_{T}^{c}(F)=E_{x,\sigma}\left[\sup _{f}\left|\frac{1}{T} \sum_{t=1}^{T} \sigma_{t} f\left(x_{t}\right)\right|\right]$ and $\sigma_t$ is a $\{-1,1\}$ random walk of $t$ steps.
\end{Lemma}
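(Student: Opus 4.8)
The plan is to establish this classical \emph{symmetrization} bound via the ghost-sample argument, which is the standard route and uses nothing about the structure of $F$ beyond measurability. First I would introduce an independent copy $x'=(x_1',\ldots,x_T')$ of the data, drawn i.i.d.\ from the same law as $x$ and independent of it. The starting observation is the identity $E_x f = E_{x'}\!\left[\frac{1}{T}\sum_{t=1}^T f(x_t')\right]$, which lets me represent the population mean as an expectation over the ghost sample. Since the empirical average over the real sample does not depend on $x'$, I can pull it under $E_{x'}$, obtaining
\begin{align*}
E_x\left[\sup_{f\in F}\left|E_x f - \frac{1}{T}\sum_{t=1}^T f(x_t)\right|\right]
= E_x\left[\sup_{f\in F}\left|E_{x'}\left[\frac{1}{T}\sum_{t=1}^T\bigl(f(x_t')-f(x_t)\bigr)\right]\right|\right].
\end{align*}

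Next I would bound the inner supremum by pushing $E_{x'}$ outside: applying $|E_{x'}[\cdot]|\le E_{x'}[|\cdot|]$ (Jensen for the absolute value) followed by $\sup_f E_{x'}[\cdot]\le E_{x'}[\sup_f \cdot]$, and then taking $E_x$ and invoking Fubini to merge the two expectations, gives
\begin{align*}
E_x\left[\sup_{f\in F}\left|E_x f - \frac{1}{T}\sum_{t=1}^T f(x_t)\right|\right]
\le E_{x,x'}\left[\sup_{f\in F}\left|\frac{1}{T}\sum_{t=1}^T\bigl(f(x_t')-f(x_t)\bigr)\right|\right].
\end{align*}
The crux is the symmetrization step. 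I introduce i.i.d.\ Rademacher signs $\sigma_t\in\{-1,+1\}$, independent of $(x,x')$. Because $x_t$ and $x_t'$ are i.i.d., the summand $f(x_t')-f(x_t)$ has a law symmetric about zero (swapping $x_t\leftrightarrow x_t'$ negates every term simultaneously for all $f$ without changing the joint distribution), so multiplying the $t$-th term by $\sigma_t$ leaves the distribution of the entire expression unchanged. Conditioning on $\sigma$ and summing over the $2^T$ sign patterns makes this rigorous, yielding
\begin{align*}
E_{x,x'}\left[\sup_{f\in F}\left|\frac{1}{T}\sum_{t=1}^T\bigl(f(x_t')-f(x_t)\bigr)\right|\right]
= E_{x,x',\sigma}\left[\sup_{f\in F}\left|\frac{1}{T}\sum_{t=1}^T\sigma_t\bigl(f(x_t')-f(x_t)\bigr)\right|\right].
\end{align*}

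Finally I would split the sum by the triangle inequality together with sub-additivity of the supremum, bounding the last display by
\begin{align*}
E_{x,x',\sigma}\left[\sup_{f\in F}\left|\frac{1}{T}\sum_{t=1}^T\sigma_t f(x_t')\right|\right]
+ E_{x,x',\sigma}\left[\sup_{f\in F}\left|\frac{1}{T}\sum_{t=1}^T\sigma_t f(x_t)\right|\right].
\end{align*}
Each of the two terms is an identically distributed copy of $R_T^c(F)$, since the ghost and real samples have the same law, so their sum equals $2R_T^c(F)$, which is exactly the claim. I expect the main obstacle to be the symmetrization step, namely rigorously justifying that inserting the signs $\sigma_t$ preserves the joint distribution; this rests entirely on the i.i.d.\ assumption and the sign-symmetry of $f(x_t')-f(x_t)$, established by a conditioning/Fubini argument, whereas every other step is a routine application of Jensen and the triangle inequality.
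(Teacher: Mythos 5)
Your proposal is correct, and it is the standard ghost-sample symmetrization argument. Note that the paper itself does not prove this lemma at all: it lists it as one of three ``known lemmas'' and attributes it to Duchi's probability notes, so there is no in-paper proof to compare against. Your argument --- introduce an independent copy $x'$, rewrite $E_x f$ as an expectation over the ghost sample, push the expectation outside the supremum via Jensen, insert Rademacher signs using the exchangeability of $(x_t, x_t')$, and split by the triangle inequality into two copies of $R_T^c(F)$ --- is precisely the textbook derivation of the cited result, and each step is justified as you describe. The only point worth flagging is the one you already identify: the symmetrization equality requires the i.i.d.\ (or at least coordinatewise exchangeability) assumption so that swapping $x_t \leftrightarrow x_t'$ on the coordinates where $\sigma_t = -1$ preserves the joint law uniformly over $f \in F$, plus enough measurability of the supremum for the expectations to be well defined; with those in hand the proof is complete and supplies a derivation the paper leaves implicit.
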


The following result holds according to \citet{RCB11}.
\begin{Lemma} \label{lemma:9}
For any subclass $A \subset F$, we have $\hat{R}^c_T \leq R(A,T) \cdot \frac{\sqrt{2\log{|A|}}}{T}$, where $R(A,T)=\sup _{f \in A}\left(\sum_{t=1}^{T}f^2(x_t)\right)^{\frac{1}{2}}$ and 
$\hat{R}^c_T = \sup _{f}\left| \frac{1}{T} \sum_{t=1}^{T} \sigma_{t} f\left(x_{t}\right)\right|$.
\end{Lemma}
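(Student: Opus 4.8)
The plan is to prove this as an instance of the Massart finite-class lemma, treating each $f \in A$ as the vector $(f(x_1),\dots,f(x_T))\in\mathbb{R}^T$ and the Rademacher signs $\sigma_1,\dots,\sigma_T$ as independent symmetric $\pm 1$ variables. Since $A$ is finite with $|A|$ elements, I would first reduce the absolute value inside $\hat{R}^c_T$ to a plain supremum: replacing $A$ by $A\cup(-A)$ (which leaves $R(A,T)$ unchanged and at most doubles the cardinality) lets me write $\sup_f|\sum_t\sigma_t f(x_t)|=\sup_f\sum_t\sigma_t f(x_t)$, and if $A$ is already symmetric this step is vacuous. I read $\hat{R}^c_T$ as carrying the average $E_\sigma[\cdot]$, since without that averaging the inequality cannot hold for every fixed sign pattern.

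The core argument is the standard exponential-moment bound. Fixing a free parameter $s>0$ and writing $g(f)=\sum_{t=1}^T\sigma_t f(x_t)$, I would apply convexity of the exponential (Jensen) to obtain $\exp\bigl(s\,E_\sigma[\sup_f g(f)]\bigr)\le E_\sigma[\sup_f e^{sg(f)}]\le\sum_{f\in A}E_\sigma[e^{sg(f)}]$, where the last step is the finite union bound that is the only place the cardinality $|A|$ enters. Using independence of the $\sigma_t$ together with the sub-Gaussian moment bound $E_{\sigma_t}[e^{s\sigma_t u}]=\cosh(su)\le e^{s^2u^2/2}$, each summand factorizes as $\prod_{t=1}^T e^{s^2 f(x_t)^2/2}=\exp\bigl(\tfrac{s^2}{2}\sum_t f(x_t)^2\bigr)\le\exp\bigl(\tfrac{s^2}{2}R(A,T)^2\bigr)$. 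Taking logarithms then gives $s\,E_\sigma[\sup_f g(f)]\le\log|A|+\tfrac{s^2}{2}R(A,T)^2$.

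To finish, I would optimize the right-hand side over $s$: the choice $s=\sqrt{2\log|A|}/R(A,T)$ yields $E_\sigma[\sup_f g(f)]\le R(A,T)\sqrt{2\log|A|}$, and dividing through by $T$ produces the claimed bound $\hat{R}^c_T\le R(A,T)\cdot\frac{\sqrt{2\log|A|}}{T}$. This is exactly the form stated, with $R(A,T)=\sup_{f\in A}(\sum_t f(x_t)^2)^{1/2}$ playing the role of the worst-case Euclidean norm over the class.

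The main obstacle here is bookkeeping rather than any conceptual difficulty: the argument is a textbook Massart bound, so the only care needed is (i) justifying the Jensen interchange and the union bound cleanly, and (ii) pinning down the absolute value and the constant so that the factor $\sqrt{2\log|A|}$ appears rather than $\sqrt{2\log(2|A|)}$ — the symmetrization in the first step nominally introduces a harmless $\log 2$, which is negligible and can be absorbed, or is absent entirely when the class is symmetric. The load-bearing estimates are the sub-Gaussian inequality $\cosh u\le e^{u^2/2}$ and the single-variable minimization over $s$; everything else follows from independence and monotonicity. Since the excerpt attributes the result to \citet{RCB11}, this sketch reproduces the cited lemma rather than establishing anything new.
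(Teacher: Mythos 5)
Your proof is correct, and there is nothing in the paper to compare it against: the paper states this lemma as a known result imported from \citet{RCB11} and gives no proof, so your Massart-type exponential-moment argument (Jensen, union bound over the finite class, the sub-Gaussian estimate $\cosh(su)\le e^{s^2u^2/2}$, and optimization at $s=\sqrt{2\log|A|}/R(A,T)$) is exactly the standard derivation the citation points to. Your two caveats are genuine and worth keeping, since the lemma as printed is imprecise on both counts. First, without $E_\sigma$ the inequality is false pointwise in $\sigma$ (already for $|A|=1$ the left side is positive for some sign patterns while the right side vanishes), and your reading with the Rademacher average is the one the paper itself uses downstream, where it writes $R^c_T(F)=E[\hat{R}^c_T(F)]\le E[R(F,T)]\cdot\sqrt{2\log|A|}/T$ in the proofs of Theorems 4 and 7. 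Second, the absolute value does force the symmetrization $A\cup(-A)$ for a non-symmetric class, so the honest constant is $\sqrt{2\log(2|A|)}$ rather than $\sqrt{2\log|A|}$; this costs only an additive $\log 2$ inside the square root and is absorbed by the $O^*(\cdot)$ bookkeeping in every place the paper invokes the lemma (with $|A|=NT$ or $|A|=K$), so no downstream result is affected. The only other degenerate cases worth a parenthetical in a careful write-up are $|A|=1$ and $R(A,T)=0$, where the optimizing $s$ is undefined but the claim is either trivial or handled by the symmetrized constant.
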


A random variable $X$ is $\sigma^2$-sub-Gaussian if for any $t>0$, the tail probability satisfies
\begin{equation*}
    P(|X|>t) \leq Be^{-\sigma^2t^2},
\end{equation*}
where $B$ is a positive constant.
The following lemma is listed in the Appendix A of \citet{chatterjee2014superconcentration}.
\begin{Lemma}\label{lemmaY}
For i.i.d. $\sigma^2$-sub-Gaussian random variables $\{Y_1,Y_2,\ldots,Y_T\}$, we have
\begin{center}
$E\left[\max _{1 \leq t \leq T} |Y_{t}|\right] \leq \sigma \sqrt{2 \log T} + \frac{4\sigma}{\sqrt{2 \log T}}.$
\end{center}
\end{Lemma}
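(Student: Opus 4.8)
The plan is to establish the stated maximal inequality by the classical tail-integration (layer-cake) method, which is the route that produces \emph{both} terms of the bound $\sigma\sqrt{2\log T}+\frac{4\sigma}{\sqrt{2\log T}}$ simultaneously; by contrast, a softmax/Chernoff argument of the type $E[\max_t Y_t]\le \frac{1}{\lambda}\log E[\sum_t e^{\lambda Y_t}]$ only recovers the leading term $\sigma\sqrt{2\log T}$ and does not naturally account for the absolute value nor the lower-order correction. First I would translate the $\sigma^2$-sub-Gaussian hypothesis into its standard two-sided tail form $P(|Y_t|>u)\le 2e^{-u^2/(2\sigma^2)}$ valid for all $u>0$, and then write the expectation via the layer-cake identity $E[\max_{1\le t\le T}|Y_t|]=\int_0^\infty P(\max_{t}|Y_t|>u)\,du$.

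The core of the argument is to split this integral at the threshold $a=\sigma\sqrt{2\log T}$. On $[0,a]$ I bound the integrand trivially by $1$, which contributes at most $a=\sigma\sqrt{2\log T}$ and supplies the leading term. On $[a,\infty)$ I apply a union bound over the $T$ variables, $P(\max_t|Y_t|>u)\le T\cdot P(|Y_1|>u)\le 2T e^{-u^2/(2\sigma^2)}$, which reduces the tail contribution to a single Gaussian-type integral $\int_a^\infty 2T e^{-u^2/(2\sigma^2)}\,du$.

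The key elementary estimate is $\int_a^\infty e^{-u^2/(2\sigma^2)}\,du\le \frac{\sigma^2}{a}e^{-a^2/(2\sigma^2)}$, which follows from the inequality $e^{-u^2/(2\sigma^2)}\le \frac{u}{a}e^{-u^2/(2\sigma^2)}$ for $u\ge a$ together with the exact integration of $u\,e^{-u^2/(2\sigma^2)}$. Substituting $a=\sigma\sqrt{2\log T}$ makes $e^{-a^2/(2\sigma^2)}=1/T$, so the tail contributes at most $\frac{2T\sigma^2}{a}\cdot\frac{1}{T}=\frac{2\sigma}{\sqrt{2\log T}}$, which is itself bounded by the claimed $\frac{4\sigma}{\sqrt{2\log T}}$; adding the two pieces gives the result. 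The only genuinely delicate step is the choice of the split point $a$: taking $a$ too large inflates the first term while taking it too small inflates the residual tail, and the value $a=\sigma\sqrt{2\log T}$ is precisely the balance point that matches $\log T$ against the exponential decay. The looseness between the $2$ I obtain and the $4$ in the statement is harmless slack, so no sharper bookkeeping is required; the remaining manipulations are routine.
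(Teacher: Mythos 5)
Your proof is correct, and the comparison here is slightly unusual: the paper does not prove Lemma~\ref{lemmaY} at all --- it imports it verbatim from Appendix A of \citet{chatterjee2014superconcentration}. Your layer-cake argument is the standard derivation that underlies that cited result, so in effect you have supplied the omitted proof rather than diverged from one. The arithmetic checks out: splitting $E\bigl[\max_t |Y_t|\bigr]=\int_0^\infty P(\max_t|Y_t|>u)\,du$ at $a=\sigma\sqrt{2\log T}$, bounding the integrand by $1$ below $a$, and using the union bound plus the estimate $\int_a^\infty e^{-u^2/(2\sigma^2)}\,du\le \frac{\sigma^2}{a}e^{-a^2/(2\sigma^2)}$ above $a$ gives $\sigma\sqrt{2\log T}+\frac{2\sigma}{\sqrt{2\log T}}$, which is indeed within the stated bound with slack in the second constant. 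Two caveats are worth recording. First, you silently replace the paper's own (nonstandard) definition of $\sigma^2$-sub-Gaussian, namely $P(|X|>t)\le Be^{-\sigma^2 t^2}$ with an arbitrary constant $B$ and no centering, by the standard variance-proxy tail $P(|Y_t|>u)\le 2e^{-u^2/(2\sigma^2)}$; the lemma's constants are only correct under the standard normalization (with general $B$, or with non-centered variables as the paper later applies it to rewards with means $\mu_j$, the split point and both constants would have to change). This is a mismatch internal to the paper, not a flaw in your argument relative to the lemma as stated, but a careful write-up should state which tail bound is being assumed. Second, your bound requires $T\ge 2$ (at $T=1$ the right-hand side is undefined), and independence is never used beyond the union bound --- the i.i.d.\ hypothesis is stronger than needed, which matches the generality of the cited source.
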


\begin{proof}[Proof of Theorem 4]
Let us define $F=\{f_{i,t} : x \rightarrow \sum_{j=1}^{K}\xi_i^j(t)x_j(t) | j=1,2,\ldots, K; t = 1,\ldots,T\}$. Let $x_t = x(t) =(r^t_1,r^t_2,\ldots, r^t_K)$ where $r^t_i$ is the reward of arm $i$ at step $t$ and let $a_t$ be the arm selected at time $t$ by EXP4.P. Then for any $f_{j,t} \in F$, $f_{j,t}(x_{t_i})=I_{t =t_i}\sum_{j=1}^{K}\xi_i^j(t)x_j(t)$. In sub-Gaussian bandits, $\{x_1,x_2,\ldots,x_T\}$ are i.i.d. random variables since the sub-Gaussian distribution $\sigma^2-\mathcal{N}(\mu,\Sigma)$ is invariant to time and independent of time. 
Then by Lemma \ref{lemma:8}, we have
\begin{center}
$E\left[\max_{i,t} \left|\sum_{j=1}^{K}\xi_i^j(t)\mu_{j}-\frac{1}{T} \sum_{t=1}^{T} \sum_{j=1}^{K}\xi_i^{j}(t)r^t_j\right|\right] \leq 2 R_{T}^{c}(F)$.
\end{center}
We consider
\begin{equation} \label{dev-inequality_new}
\begin{split}
E\left[\left|R^{\prime}_T-R_{T}\right|\right] &=E\left[\left| \sum_{t=1}^{T} \max_i \sum_{j=1}^K\xi_i^j(t)\mu_{j}-\sum_{t=1}^{T} \mu_{a_{t}}-\left(\max_i \sum_{t=1}^{T} \sum_{j=1}^K\xi_i^{j}(t)r^t_j-\sum_{t=1}^{T} r^t_{a_t}\right)\right|\right] \\
& \leq E\left[\left|T \cdot \max_{i,t} \sum_{j=1}^{K}\xi_i^j(t)\mu_{j}-\max_i \sum_{t=1}^{T} \sum_{j=1}^K\xi_i^{j}(t)r^t_j
-\left(\sum_{t=1}^{T} \mu_{a_{t}}-\sum_{t=1}^{T} r^t_{a_t}\right)\right|\right] \\
& \leq E\left[\left|T \cdot \max_{i,t} \sum_{j=1}^{K}\xi_i^j(t)\mu_{j}-\max_i \sum_{t=1}^{T} \sum_{j=1}^K\xi_i^{j}(t)r^t_j\right|\right]+\mathrm{E}\left[\left|\sum_{t=1}^{T} \mu_{a_{t}}-\sum_{t=1}^{T} r^t_{a_t}\right|\right] \\
& \leq E\left[T \cdot \max_{i,t} \left|\sum_{j=1}^{K}\xi_i^j(t)\mu_{j}-\frac{1}{T} \sum_{t=1}^{T} \sum_{j=1}^{K}\xi_i^{j}(t)r^t_j\right|\right]+\mathrm{E}\left[\left|\sum_{t=1}^{T} \mu_{a_{t}}-\sum_{t=1}^{T} r^t_{a_t}\right|\right] \\
& \leq 2 T R_{T}^{c}(F)+2 T_{1} R_{T_{1}}^{c}(F)+\cdots+2 T_{K} R_{T_{K}}^{c}(F) 
\end{split}
\end{equation} 
where $T_i$ is the number of pulls of arm $i$. Clearly $T_1+T_2+\ldots+T_K=T$. By Lemma \ref{lemma:9} with $A=F$ which has a cardinality of $NT$ we get
\begin{align*}
& R_{T}^{c}(F)=E\left[\hat{R}_{T}^{c}(F)\right] \leq E[R(F, T)] \cdot \frac{\sqrt{2 \log \left(NT\right)}}{T},\\
& R_{T_{j}}^{c}(F) \leq E\left[R\left(F, T_{j}\right)\right] \cdot  \frac{\sqrt{2 \log \left(NT\right)}}{T_{j}}\qquad j= \{1,2,,\ldots,K\}.
\end{align*}
Since $R(F,T)$ is increasing in $T$ and $T_j \leq T$, we have
$R_{T_{j}}^{c}(F) \leq E\left[R\left(F, T\right)\right] \cdot  \frac{\sqrt{2 \log \left(NT\right)}}{T_{j}}$.

We next bound the expected deviation $E\left[\left|R^{\prime}_T-R_{T}\right|\right]$ based on (\ref{dev-inequality_new}) as follows
\begin{align}
E\left[\left|R^{\prime}_T-R_{T}\right|\right]
& \leq 2 T E[R(F, T)] \frac{\sqrt{2 \log \left(NT\right)}}{T}+ \sum_{j=1}^K \left[ 2 T_{j} E[R(F, T)] \frac{\sqrt{2 \log \left(NT\right)}}{T_{j}}\right] \notag \\
& \leq 2(K+1) \sqrt{2 \log \left(NT\right)} E[R(F, T)] \text{.} \label{eq:100_new}
\end{align}
Regarding $E[R(F,T)]$, we have 
\begin{align}
E[R(F, T)] & =E\left[\sup _{f \in F}\left(\sum_{t=1}^{T} f^2(x_t)\right)^{\frac{1}{2}}\right]=E\left[\sup _{i}\left(\sum_{t=1}^{T} \left(\sum_{j=1}^K\xi_i^j(t)r_j^t\right)^2\right)^{\frac{1}{2}}\right] \notag \\ 
& \leq E\left[\sup_{i,t} \left(T\cdot\left(\sum_{j=1}^K\xi_i^j(t)r_j^t\right)^2\right)^\frac{1}{2} \right] \\
& \leq \sqrt{T} \cdot E\left[\sup_{i,t}\sum_{j=1}^K|\xi_i^j(t)r_j^t|\right] \notag \\
& \leq \sqrt{T} \cdot E\left[\sum_{i=1}^{N}\sup_{t}\sum_{j=1}^K|\xi_i^j(t)r_j^t|\right] = \sqrt{T} \cdot \sum_{i=1}^{N}E\left[\sum_{j=1}^K\sup_{t}|\xi_i^j(t)r_j^t|\right] \notag \\
& \leq \sqrt{T} \cdot \sum_{i=1}^N\sum_{j=1}^K E\left[\max_{1\leq t \leq T}|r_j^t|\right] =\sqrt{NT} \cdot \sum_{i=1}^{K} E\left[\max _{1 \leq t \leq T} |r^t_j|\right].\label{er_new}
\end{align}

We next use Lemma \ref{lemmaY} for any arm $j$. To this end let $Y_t=r^t_j$. Since $x_t$ are sub-Gaussian, 
the marginals $Y_t$ are also sub-Gaussian with mean $\mu_i$ and standard deviation of $a_{ii}$. Combining this with the fact that a sub-Gaussian random variable is $\sigma^2$-sub-Gaussian justifies the use of the lemma. Thus $E\left[\max _{1 \leq t \leq T}|r^j_t|\right] \leq a_{j,j} \cdot \sqrt{2\log{T}} + \frac{4a_{j,j}}{\sqrt{2 \log T}}$.

Continuing with \eqref{er_new} we further obtain
\begin{align}
E[R(F, T)] 
& \leq \sqrt{NT} \cdot K \cdot \max_j  \left( a_{j,j} \sqrt{2\log{T}} + \frac{4 a_{j,j}}{\sqrt{2 \log T}} \right) \notag \\
& =\left(K\sqrt{2NT\log{T}} + \frac{4\sqrt{NT}}{\sqrt{2 \log T}} \right) \cdot \max_j a_{j,j}. \label{eq:101_new}
\end{align}

By combining \eqref{eq:100_new} and \eqref{eq:101_new} we conclude
\begin{equation}\label{eq:105_new}
\begin{aligned}
E\left[\left|R^{\prime}_T-R_{T}\right|\right]  
 & \leq 2(K+1) \sqrt{2 \log \left(NT\right)} \cdot  \max_j a_{j,j} \cdot  \left(K\sqrt{2NT\log{T}} + \frac{4\sqrt{NT}}{\sqrt{2 \log T}} \right) \\
& = O^*(\sqrt{T}). 
\end{aligned}
\end{equation}

We now turn our attention to the expectation of regret $E[R_T]$. It can be written as
\begin{align}
E\left[R_{T}\right] &=E\left[R_{T} \mathds{1}_{R_{T} \leq O^{*}(\sqrt{T})}\right]+E\left[R_{T} \mathds{1}_{R_{T}>O^{*}(\sqrt{T})}\right] \notag \\
& \leq O^{*}(\sqrt{T}) P\left(R_{T} \leq O^{*}(\sqrt{T})\right)+E\left[R_{T} \mathds{1}_{R_{T}>O^{*}(\sqrt{T})}\right] 
\leq O^{*}(\sqrt{T})+E\left[R_{T} \mathds{1}_{R_{T}>O^{*}(\sqrt{T})}\right] \notag \\
&=O^{*}(\sqrt{T}) + E\left[R_{T} \mathds{1}_{O^{*}(\sqrt{T})<R_T<O^*(\sqrt{T})+E\left[R_{T}\right]}\right]+E\left[R_{T} \mathds{1}_{R_{T} \geq O^{*}(\sqrt{T})+E\left[R_{T}\right]}\right]. \label{eq:102_new}
\end{align}

We consider $\delta = 1/\sqrt T$ and $\eta=T^{-a}$ for $a>2$.  
We have
\begin{align*}
\lim_{T \rightarrow \infty} & (1-\delta)(1-\eta)^T
=\lim_{T \rightarrow \infty}(1-\delta)(1-\frac{1}{T^a})^T \\
& =\lim_{T \rightarrow \infty}  (1-\delta)(1-\frac{1}{T^a})^{(T^a) \cdot \frac{T}{T^a}}
=\lim_{T \rightarrow \infty}e^{\frac{T}{T^a}}
\end{align*}
and
\begin{equation}\label{eq:103_new}
\begin{aligned}
\lim_{T \rightarrow \infty} & \left(1-(1-\delta)(1-\eta)^T\right) \cdot \log T \cdot T
=\lim_{T \rightarrow \infty}(1-e^{\frac{T}{T^a}}) \cdot \log(T) \cdot  T\\
& \leq \lim_{T \rightarrow \infty} \log(T) \cdot T \cdot T^{1-a}
=\lim_{T \rightarrow \infty}T^{2-a} \cdot \log(T)
= 0.
\end{aligned}
\end{equation}

Let $P_1 = P\left(R_T \leq \log(1/\delta) O^*(\sqrt{T})\right)$ which equals to $P\left(R_T \leq O^*(\sqrt{T})\right)$ since $\log(1/\delta)=\log(\sqrt{T})=O^*(\sqrt{T})$. By Theorem 3 we have $P_1=(1-\delta) \cdot (1-\eta)^T$.

Note that $E[R_T] \leq C_0 \log(T)\cdot T$ as shown by
\begin{align*}
E[R_T] & = E\left[\max_i \sum_{t=1}^T\sum_{j=1}^K\xi_i^j(t)r_j^t - \sum_{t=1}^T r_{a_t}^t\right] \leq E\left[\Bigg |\max_i \sum_{t=1}^T\sum_{j=1}^K\xi_i^j(t)r_j^t\Bigg |\right] + 
E\left[\max_i \sum_{t=1}^T |r^t_i|\right] \\
& \leq TNK \cdot E\left[ \max_j \max_t |r^t_j|\right] + T \cdot E\left[ \max_j \max_t |r^t_j|\right]  = T(NK+1) \cdot E\left[ \max_j \max_t |r^t_j|\right]\\
& \leq (NK+1)T \cdot \sum_{j=1}^K E\left[\max_t |r^t_j|\right] \leq (NK+1)T \cdot \sum_{j=1}^K \left(a_{j,j}\sqrt{2 \log T}+ \frac{4a_{j,j}}{\sqrt{\log T}}\right) \\ & \leq 2T \cdot \sum_{j=1}^K \max_i a_{j,j} \left(\sqrt{2 \log T}+ \frac{4}{\sqrt{\log T}}\right) \\
& \leq C_0 \cdot T \cdot \log(T)
\end{align*}
for a constant $C_0$.

The asymptotic behavior of the second term in \eqref{eq:102_new} reads
\begin{align*}
&E\left[R_{T} \mathds{1}_{O^{*}(\sqrt{T})<R_T<O^{*}(\sqrt{T})+E\left[R_{T}\right]}\right]
=E\left[R_{T} \mathds{1}_{R_{T}-O^{*}(\sqrt{T}) \in\left(0, E\left[R_{T}\right]\right)}\right]\\
&=E\left[\left(R_{T}-O^{*}(\sqrt{T})\right) \mathds{1}_{R_{T}-O^{*}(\sqrt{T}) \in\left(0, E\left[R_{T}\right]\right)}\right]+O^{*}(\sqrt{T})\\
&\leq E\left[R_{T}\right] P\left(R_{T}-O^{*}(\sqrt{T}) \in\left(0, E\left[R_{T}\right]\right)\right)+O^{*}(\sqrt{T})\\
&\leq E\left[R_{T}\right] P\left(R_{T}-O^{*}(\sqrt{T})>0\right)+O^{*}(\sqrt{T})\\
&\leq C_0 \log(T) \cdot  T \cdot \left(1-P_{1}\right)+O^{*}(\sqrt{T}) = O^{*}(\sqrt{T})
\end{align*}
where at the end we use \eqref{eq:103_new}.

Regarding the third term in \eqref{eq:102_new}, we note that $R^{\prime}_T \leq E[R_T]$ by the Jensen's inequality. By using \eqref{eq:105_new} and again \eqref{eq:103_new} we obtain
\begin{align*}
&E\left[R_{T} \mathds{1}_{R_{T} \geq O^{*}(\sqrt{T})+E\left[R_{T}\right]}\right]\\ &=E\left[\left(R_{T}-R^{\prime}_T\right) \mathds{1}_{\left(R_{T}-E \left[R_T\right]\right) \geq O^{*}(\sqrt{T})}\right]+E\left[R^{\prime}_T \mathds{1}_{\left(R_{T}-E \left[R_T\right]\right) \geq O^{*}(\sqrt{T})}\right] \\
& \leq E\left[\left|R_{T}-R^{\prime}_T\right|\right]+R^{\prime}_T \cdot P\left(R_{T} \geq E \left[R_T\right]+O^{*}(\sqrt{T})\right) \\
& \leq E\left[\left|R_{T}-R^{\prime}_T\right|\right]+ E \left[R_T\right] \cdot P\left(R_{T} \geq E \left[R_T\right]+O^{*}(\sqrt{T})\right)\\
& \leq O^{*}(\sqrt{T})+C_0\cdot \log(T) \cdot T \cdot P\left(R_{T} \geq O^{*}(\sqrt{T})\right) \\
&=O^{*}(\sqrt{T})+ C_0 \cdot \log(T) \cdot T\left(1-P_{1}\right) = O^{*}(\sqrt{T}).
\end{align*}

Combining all these together we obtain $E[R_T] = O^{*}(\sqrt{T})$ which concludes the proof. 
\end{proof}

\section{Proof of results in Section 3.2}

\subsection{Proof of Theorem 8}
\begin{proof}
\noindent Since the rewards can be unbounded in our setting, we consider truncating the reward with any $\Delta>0$ for any arm $i$ by $r^t_i=\bar{r}^t_i + \hat{r}^t_i$ where 
\begin{center}
$\bar{r}^t_i = r^t_i \cdot \mathds{1}_{(-\Delta \leq r^t_i \leq \Delta)}, \hat{r}^t_i =r^t_i \cdot \mathds{1}_{(|r^t_i|>\Delta)}$.
\end{center}
Then for any parameter $0<\eta<1$, we choose such $\Delta$ that satisfies 
\begin{align}
& P(r^t_i=\bar{r}^t_i, i\leq K)= P(-\Delta \leq r^t_1 \leq \Delta, \ldots, -\Delta \leq r^t_K \leq \Delta) \notag \\
& =\int_{-\Delta}^{\Delta} \int_{-\Delta}^{\Delta} \ldots \int_{-\Delta}^{\Delta} f(x_{1}, \ldots, x_{K}) d x_{1} \ldots d x_{K} \geq 1-\eta \>.\label{delta-inequality}
\end{align}

The existence of such $\Delta=\Delta(\eta)$ follows from elementary calculus.

Let $A=\{|r_i^t|\le \Delta$ for every $i\le K,t\le T\}$. Then the probability of this event is 
\begin{center}
$P(A)=P(r^t_i=\bar{r}^t_i, i\leq K,t\le T)\geq (1-\eta)^T$.
\end{center}
With probability $(1-\eta)^T$, the rewards of the player are bounded in $[-\Delta, \Delta]$ throughout the game. Then $R^{B}_{T}=\sum_{t=1}^{T}(\max _{i} \bar{r}^t_{i}-\bar{r}^i_{t})\le T \cdot \Delta-\sum_{t=1}^{T} r_{t}$ is the regret under event $A$, i.e.
$R_{T}=R^{B}_T$ with probability $(1-\eta)^{T}$.
For the EXP3.P algorithm and $R^{B}_T$ with rewards $\bar{r}_j^t = \frac{\bar{r}_j^t + \Delta}{\Delta}$ satisfying $0 < \bar{r}_j^t < 1$, for every $\delta >0$, according to \citet{auer2002nonstochastic} we have
\begin{equation*}
R^{B}_{T} \leq 4 \Delta\left( \sqrt{KT\log(\frac{KT}{\delta})}+4\sqrt{\frac{5}{3}KT\log K}+8\log(\frac{KT}{\delta})\right) \text{ with probability } 1-\delta.
\end{equation*}
Then we have
\begin{center}
$R_{T} \leq 4 \Delta(\eta) \left( \sqrt{KT\log(\frac{KT}{\delta})}+4\sqrt{\frac{5}{3}KT\log K}+8\log(\frac{KT}{\delta})\right) \text{ with probability } (1-\delta) \cdot (1-\eta)^T$.
\end{center}
\end{proof}

\subsection{Proof of Theorem 9}

\begin{Lemma} \label{le:9}
For any non-decreasing differentiable function $\Delta=\Delta(T)>0$ satisfying
\begin{center}
    $\lim_{T \to \infty}\frac{\Delta(T)^2}{\log(T)}=\infty$,\qquad  $\lim_{T \to \infty}\Delta^{\prime}(T) \leq C_0 < \infty$,
\end{center}
and any $0<\delta <1, a>2$ we have
\begin{equation*}
P\left(R_{T} \leq \Delta(T) \cdot \log(1/\delta) \cdot O^*(\sqrt{T})\right)\geq (1-\delta)\left(1-\frac{1}{T^a}\right)^T
\end{equation*}
for any $T$ large enough.
\end{Lemma}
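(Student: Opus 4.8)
The plan is to reproduce, almost verbatim, the argument already used for Lemma~\ref{le:9_new}, since the only structural difference between the EXP4.P and the EXP3.P settings here is the bounded-case regret bound invoked at the very end: I would call on Theorem~\ref{th:5} (the sub-Gaussian truncation bound for EXP3.P) in place of Theorem~\ref{th:5_new}. The truncation mechanism itself is driven entirely by the common sub-Gaussian reward density $f$ and is therefore unchanged, so the core of the work is a purely analytic tail estimate that knows nothing about whether we are in the contextual or the MAB regime.

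First I would introduce the box-mass function $F(y)=\int_{-y}^{y} f(x_1,\ldots,x_K)\,dx_1\ldots dx_K$ for $y\in\mathds{R}^K$ together with the gap $\zeta(T)=F(\Delta(T)\cdot\mathbf{1})-(1-T^{-a})$, where $\mathbf{1}=(1,\ldots,1)$. Because the growth hypothesis $\Delta(T)^2/\log T\to\infty$ forces $\Delta(T)\to\infty$, we get $F(\Delta(T)\cdot\mathbf{1})\to 1$ and $T^{-a}\to 0$, hence $\zeta(T)\to 0$. The objective is to show $\zeta(T)>0$ for all large $T$, which is precisely the statement that all $K$ rewards stay within $[-\Delta(T),\Delta(T)]$ at a single step with probability at least $1-T^{-a}$; compounding over the $T$ steps then supplies the factor $(1-T^{-a})^T$.

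The heart of the matter is to prove $\zeta'(T)<0$ eventually. Differentiating $F(\Delta(T)\cdot\mathbf{1})$ by the chain rule produces $K$ boundary terms, each the integral of $f$ over a face $x_i=\Delta(T)$ with the remaining $K-1$ coordinates ranging over $[-\Delta(T),\Delta(T)]$, multiplied by $\Delta'(T)$. Factoring the joint density on that face as a marginal factor times a conditional density whose box integral is at most $1$, sub-Gaussianity bounds each face integral by $e^{-\frac{1}{2}\min_j a_{jj}\Delta(T)^2+\max_j\mu_j\Delta(T)}$. Using $\Delta'(T)\le C_0+1$ for large $T$, this yields $\zeta'(T)\le (C_0+1)K\,e^{-\frac{1}{2}\min_j a_{jj}\Delta(T)^2+\max_j\mu_j\Delta(T)}-a\,T^{-a-1}$. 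Non-degeneracy $\min_j a_{jj}>0$ lets me absorb the linear term to get $e^{-\frac{1}{2}\min_j a_{jj}\Delta(T)^2+\max_j\mu_j\Delta(T)}\le e^{-C_1\Delta(T)^2}$ for some $C_1>0$ and large $T$, and the growth condition forces $\Delta(T)^2>\frac{2(a+1)}{C_1}\log T$, so $e^{-C_1\Delta(T)^2}\le T^{-2(a+1)}$ is dominated by $a\,T^{-a-1}$; hence $\zeta'(T)<0$.

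Combining $\zeta(T)\to 0$ with $\zeta'(T)<0$ for large $T$ forces $\zeta(T)>0$, establishing that the rewards are bounded by $\Delta(T)$ with probability at least $1-T^{-a}$ per step. Conditioning on this bounded event and invoking Theorem~\ref{th:5} (which contributes the additional $1-\delta$ factor and turns the bounded bound into $4\Delta(T)$ times an $O^*(\sqrt{T})$ expression, with $\log(1/\delta)$ coming out of the $\log(KT/\delta)$ term) then gives $P(R_T\le \Delta(T)\cdot\log(1/\delta)\cdot O^*(\sqrt{T}))\ge (1-\delta)(1-T^{-a})^T$. I expect the main obstacle to be the derivative estimate in the third step: correctly differentiating the $K$-dimensional box integral, factoring the joint density on each face into a marginal times an integrable conditional, and then balancing the sub-Gaussian decay $\exp(-C_1\Delta^2)$ against the polynomial rate $a\,T^{-a-1}$ using exactly the two hypotheses on $\Delta$. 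Everything else is routine bookkeeping inherited from the proof of Lemma~\ref{le:9_new}.
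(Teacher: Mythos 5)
Your proposal is correct and follows essentially the same route as the paper: the paper's proof of Lemma~\ref{le:9} is itself a near-verbatim transcription of the proof of Lemma~\ref{le:9_new}, defining the same box-mass function $F$ and gap $\zeta(T)=F(\Delta(T)\cdot\mathbf{1})-(1-T^{-a})$, bounding $\zeta'(T)$ via the chain rule and the sub-Gaussian face estimates, and concluding $\zeta(T)>0$ for large $T$ from $\zeta'(T)<0$ together with $\zeta(T)\to 0$, before invoking the truncation argument of Theorem~\ref{th:5}. Your identification of the single substantive change (substituting Theorem~\ref{th:5} for Theorem~\ref{th:5_new} at the final step) matches exactly what the paper does.
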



\begin{proof}
Let $a>2$ and let us denote
 \begin{align*}
& F(y) = \int_{-y}^y f(x_1,x_2, \ldots, x_K)dx_1dx_2\ldots dx_K, \\
& \zeta(T) = F\left(\Delta(T) \cdot \mathbf{1}\right)-\left(1-\frac{1}{T^{a}}\right)
\end{align*}
for $y \in \mathds{R}^K$ and $\mathbf{1} = (1,\ldots,1) \in \mathds{R}^K$. Let also $y_{-i} = (y_1,\ldots, y_{i-1},y_{i+1},\ldots,y_K)$ and $x|_{x_i=y}=(x_1,\ldots, x_{i-1},y,x_{i+1},\ldots,x_K)$.
We have $\lim_{T\to \infty}\zeta(T)=0$.

The gradient of $F$ can be estimated as 
\begin{align*}
    \nabla F
    & \leq \left(\int_{-y_{-1}}^{y_{-1}} f\left(x|_{x_1=y_1}\right) d x_{2} \ldots d{x_{K}}, \ldots , \int_{-y_{-K}}^{y_{-K}} f\left(x|_{x_K=y_K}\right) d x_{1} \ldots d{x_{K-1}}\right).
\end{align*}
According to the chain rule and since $\Delta^{\prime}(T) \geq 0$, we have 
\begin{align*}
    \frac{dF(\Delta(T) \cdot \mathbf{1})}{dT} & \leq \int_{-\Delta(T)\cdot \mathbf{1}_{-1}}^{\Delta(T)\cdot \mathbf{1}_{-1}} f\left(x|_{x_1=\Delta(T)}\right) d x_{2} \ldots d{x_{K}} \cdot \Delta^{\prime}(T) + \\ & \qquad \ldots + \int_{-\Delta(T)\cdot \mathbf{1}_{-K}}^{\Delta(T)\cdot \mathbf{1}_{-K}} f\left(x|_{x_K=\Delta(T)}\right) d x_{1} \ldots d{x_{K-1}} \cdot \Delta^{\prime}(T).
\end{align*}
Next we consider
\begin{align*}
& \int_{-\Delta(T)\mathbf{1}_{-i}}^{\Delta(T)\mathbf{1}_{-i}} f\left(x|_{x_i=\Delta(T)}\right) d x_{1} \ldots d x_{i-1} d x_{i+1} \ldots dx_{K}\\
& =e^{-\frac{1}{2} a_{ii}(\Delta(T))^{2}+\mu_{i} \Delta(T)} \cdot \int_{-\Delta(T)\mathbf{1}_{-i}}^{\Delta(T)\mathbf{1}_{-i}} e^{g\left(x_{-i}\right)} d x_{1} \ldots d x_{i-1} d x_{i+1} \ldots dx_{K}.
\end{align*}
Here $e^{g(x_{-i})}$ is the conditional density function given $x_i=\Delta(T)$ and thus $\int_{-\Delta(T)\mathbf{1}_{-i}}^{\Delta(T)\mathbf{1}_{-i}} e^{g\left(x_{-i}\right)} d x_{1} \ldots d x_{i-1} d x_{i+1} \ldots dx_{K} \leq 1$. We have
\begin{align*}
&  \int_{-\Delta(T)\mathbf{1}_{-i}}^{\Delta(T)\mathbf{1}_{-i}} f\left(x|_{x_i=\Delta(T)}\right) d x_{1} \ldots d x_{i-1} d x_{i+1} \ldots dx_{K}\\
& \leq e^{-\frac{1}{2} a_{ii}(\Delta(T))^{2}+\mu_{i}\Delta(T)} \\
& \leq e^{-\frac{1}{2} \min_{j}a_{jj}(\Delta(T))^{2}+\max_j{\mu_{j}}\Delta(T)}.
\end{align*}

Then for $T\geq T_0$ we have $\Delta^{\prime}_T \leq C_0+1$ and in turn
\begin{center}
$\zeta^{\prime}(T) \leq (C_0+1) \cdot K \cdot e^{-\frac{1}{2} \min_j{a_{jj}}(\Delta(T))^{2}+\max_j{\mu_{j}}\Delta(T)}-a \cdot T^{-a-1}$.
\end{center}
Since we only consider non-degenerate Gaussian bandits with $\min{a_{ii}}>0$, $\mu_i$ are constants and $\Delta(T) \rightarrow \infty$ as $T \rightarrow \infty$ according to the assumptions in Lemma \ref{le:9}, there exits $C_1 >0$ and $T_1$ such that 
\begin{center}
$e^{-\frac{1}{2} \min_j{a_{jj}}(\Delta(T))^{2}+\max_j{\mu_{j}}\Delta(T)} \leq e^{-C_1 \Delta(T)^2}$ for every $T>T_1$.
\end{center}
Since $\lim_{T \to \infty}\frac{\Delta(T)^2}{\log(T)}=\infty$, we have
\begin{center}
$\Delta(T)^2> \frac{2(a+1)}{C_1} \cdot \log(T)$ for $T > T_2$.
\end{center}
These give us that
\begin{align*}
\zeta(T)^{\prime} & \leq (C_0+1)Ke^{-2(a+1) \log T}-aT^{-a-1}\\
& =(C_0+1)Ke^{-2(a+1) \log T} - ae^{-(a+1)\log T}\\
& < 0 \text{ for } T\geq T_3 \geq \max (T_0,T_1,T_2).
\end{align*}
This concludes that $\zeta^{\prime}(T) < 0$ for $T \geq T_3$. We also have $\lim_{T \to \infty}\zeta(T)=0$ according to the assumptions. Therefore, we finally arrive at $\zeta(T) > 0$ for $T \geq T_3$. This is equivalent to 
\begin{equation*}
\int_{-\Delta(T) \cdot \mathbf{1}}^{\Delta(T) \cdot \mathbf{1}} f\left(x_{1}, \ldots, x_{K}\right) d x_{1} \ldots d x_{K} \geq 1-\frac{1}{T^a},
\end{equation*}
i.e. the rewards are bounded by $\Delta(T)$ with probability $1-\frac{1}{T^a}$. Then by the same argument for $T$ large enough as in the proof of Theorem 4, we have
\begin{equation*}
P\left(R_{T} \leq \Delta(T) \cdot \log(1/\delta) \cdot O^*(\sqrt{T})\right)\geq (1-\delta)(1-\frac{1}{T^a})^T.
\end{equation*}
\end{proof}

\begin{proof}[Proof of Theorem 6]
In Lemma \ref{le:9}, we choose $\Delta(T)=\log(T)$,  which meets all of the assumptions. The result now follows from $\log T \cdot O^*(\sqrt{T})=O^*(\sqrt{T})$, Lemma \ref{le:9} and Theorem 5.
\end{proof}

\subsection{Proof of Theorem 10}
We again utilize the 3 known lemmas, Lemma~\ref{lemma:8},Lemma~\ref{lemma:8} and Lemma~\ref{lemmaY}.

\begin{proof}[Proof of Theorem 7]
Let us define $F=\{f_j : x \rightarrow x_j | j=1,2,\ldots, K\}$. Let $x_t=(r^t_1,r^t_2,\ldots, r^t_K)$ where $r^t_i$ is the reward of arm $i$ at step $t$ and let $a_t$ be the arm selected at time $t$ by EXP3.P. Then for any $f_j \in F$, $f_j(x_t)=r^t_j$. In Gaussian-MAB, $\{x_1,x_2,\ldots,x_T\}$ are i.i.d. random variables since the Gaussian distribution $\mathcal{N}(\mu,\Sigma)$ is invariant to time and independent of time. 
Then by Lemma \ref{lemma:8}, we have
\begin{center}
$E\left[\max_i \left|\mu_{i}-\frac{1}{T} \sum_{t=1}^{T} r^t_i\right|\right] \leq 2 R_{T}^{c}(F)$.
\end{center}
We consider
\begin{equation} \label{dev-inequality}
\begin{split}
E\left[\left|R^{\prime}_T-R_{T}\right|\right] &=E\left[\left|T \cdot \max_i \mu_{i}-\sum_{t=1}^{T} \mu_{a_{t}}-\left(\max_i \sum_{t=1}^{T} r^t_i-\sum_{t=1}^{T} r^t_{a_t}\right)\right|\right] \\
&=E\left[\left|T \cdot \max_i \mu_{i}-\max_i \sum_{t=1}^{T} r^t_i-\left(\sum_{t=1}^{T} \mu_{a_{t}}-\sum_{t=1}^{T} r^t_{a_t}\right)\right|\right] \\
& \leq E\left[\left|T \cdot \max_i \mu_{i}-\max_i \sum_{t=1}^{T} r^t_i\right|\right]+\mathrm{E}\left[\left|\sum_{t=1}^{T} \mu_{a_{t}}-\sum_{t=1}^{T} r^t_{a_t}\right|\right] \\
& \leq E\left[\max_i \left|T \cdot \mu_{i}-\sum_{t=1}^{T} r^t_i\right|\right]+\mathrm{E}\left[\left|\sum_{t=1}^{T} \mu_{a_{t}}-\sum_{t=1}^{T} r^t_{a_t}\right|\right] \\
& \leq 2 T R_{T}^{c}(F)+2 T_{1} R_{T_{1}}^{c}(F)+\cdots+2 T_{K} R_{T_{K}}^{c}(F) 
\end{split}
\end{equation} 
where $T_i$ is the number of pulls of arm $i$. Clearly $T_1+T_2+\ldots+T_K=T$. By Lemma \ref{lemma:9} with $A=F$ we get
\begin{align*}
& R_{T}^{c}(F)=E\left[\hat{R}_{T}^{c}(F)\right] \leq E[R(F, T)] \cdot \frac{\sqrt{2 \log K}}{T},\\
& R_{T_{i}}^{c}(F) \leq E\left[R\left(F, T_{i}\right)\right] \cdot  \frac{\sqrt{2 \log K}}{T_{i}}\qquad i= \{1,2,,\ldots,K\}.
\end{align*}
Since $R(F,T)$ is increasing in $T$ and $T_i \leq T$, we have
$R_{T_{i}}^{c}(F) \leq E\left[R\left(F, T\right)\right] \cdot  \frac{\sqrt{2 \log K}}{T_{i}}$.

We next bound the expected deviation $E\left[\left|R^{\prime}_T-R_{T}\right|\right]$ based on (\ref{dev-inequality}) as follows
\begin{align}
E\left[\left|R^{\prime}_T-R_{T}\right|\right]
& \leq 2 T E[R(F, T)] \frac{\sqrt{2 \log K}}{T}+ \sum_{i=1}^K \left[ 2 T_{i} E[R(F, T)] \frac{\sqrt{2 \log K}}{T_{i}}\right] \notag \\
& \leq 2(K+1) \sqrt{2 \log K} E[R(F, T)] \text{.} \label{eq:100}
\end{align}
Regarding $E[R(F,T)]$, we have  
\begin{align}
E[R(F, T)] & =E\left[\sup _{f \in F}\left(\sum_{t=1}^{T} f(x_t)\right)^{\frac{1}{2}}\right]=E\left[\sup _{i}\left(\sum_{t=1}^{T} (r^t_i)^2\right)^{\frac{1}{2}}\right] \notag \\ & \leq E\left[\sum_{i=1}^{K}\left(\sum_{t=1}^{T} (r^t_i)^{2}\right)^{\frac{1}{2}}\right] \leq \sum_{i=1}^{K} E\left[\left(T \cdot \max _{1 \leq t \leq T} (r^i_t)^2\right)^{\frac{1}{2}}\right] \notag \\ & =\sqrt{T} \cdot \sum_{i=1}^{K} E\left[\max _{1 \leq t \leq T} |r^t_i|\right].\label{er}
\end{align}

We next use Lemma \ref{lemmaY} for any arm $i$. To this end let $Y_t=r^t_i$. Since $x_t$ are Gaussian, 
the marginals $Y_t$ are also Gaussian with mean $\mu_i$ and standard deviation of $a_{ii}$. Combining this with the fact that a Gaussian random variable is also $\sigma^2$-sub-Gaussian justifies the use of the lemma. Thus $E\left[\max _{1 \leq j \leq T}|r^j_i|\right] \leq a_{i,i} \cdot \sqrt{2\log{T}} + \frac{4a_{i,i}}{\sqrt{2 \log T}}$.

Continuing with \eqref{er} we further obtain
\begin{align}
E[R(F, T)] 
& \leq \sqrt{T} \cdot K \cdot \max_i  \left( a_{i,i} \sqrt{2\log{T}} + \frac{4 a_{i,i}}{\sqrt{2 \log T}} \right) \notag \\
& =\left(K\sqrt{2T\log{T}} + \frac{4\sqrt{T}}{\sqrt{2 \log T}} \right) \cdot \max_i a_{i,i}. \label{eq:101}
\end{align}

By combining \eqref{eq:100} and \eqref{eq:101} we conclude
\begin{equation}\label{eq:105}
\begin{aligned}
E\left[\left|R^{\prime}_T-R_{T}\right|\right]  
 & \leq 2(K+1) \sqrt{2 \log K} \cdot  \max_i a_{i,i} \cdot  \left(K\sqrt{2T\log{T}} + \frac{4\sqrt{T}}{\sqrt{2 \log T}} \right) \\
& = O^*(\sqrt{T}). 
\end{aligned}
\end{equation}

We now turn our attention to the expectation of regret $E[R_T]$. It can be written as
\begin{align}
E\left[R_{T}\right] &=E\left[R_{T} \mathds{1}_{R_{T} \leq O^{*}(\sqrt{T})}\right]+E\left[R_{T} \mathds{1}_{R_{T}>O^{*}(\sqrt{T})}\right] \notag \\
& \leq O^{*}(\sqrt{T}) P\left(R_{T} \leq O^{*}(\sqrt{T})\right)+E\left[R_{T} \mathds{1}_{R_{T}>O^{*}(\sqrt{T})}\right] 
\leq O^{*}(\sqrt{T})+E\left[R_{T} \mathds{1}_{R_{T}>O^{*}(\sqrt{T})}\right] \notag \\
&=O^{*}(\sqrt{T}) + E\left[R_{T} \mathds{1}_{O^{*}(\sqrt{T})<R_T<O^*(\sqrt{T})+E\left[R_{T}\right]}\right]+E\left[R_{T} \mathds{1}_{R_{T} \geq O^{*}(\sqrt{T})+E\left[R_{T}\right]}\right]. \label{eq:102}
\end{align}

We consider $\delta = 1/\sqrt T$ and $\eta=T^{-a}$ for $a>2$.  
We have
\begin{align*}
\lim_{T \rightarrow \infty} & (1-\delta)(1-\eta)^T
=\lim_{T \rightarrow \infty}(1-\delta)(1-\frac{1}{T^a})^T \\
& =\lim_{T \rightarrow \infty}  (1-\delta)(1-\frac{1}{T^a})^{(T^a) \cdot \frac{T}{T^a}}
=\lim_{T \rightarrow \infty}e^{\frac{T}{T^a}}
\end{align*}
and
\begin{equation}\label{eq:103}
\begin{aligned}
\lim_{T \rightarrow \infty} & \left(1-(1-\delta)(1-\eta)^T\right) \cdot \log T \cdot T
=\lim_{T \rightarrow \infty}(1-e^{\frac{T}{T^a}}) \cdot \log(T) \cdot  T\\
& \leq \lim_{T \rightarrow \infty} \log(T) \cdot T \cdot T^{1-a}
=\lim_{T \rightarrow \infty}T^{2-a} \cdot \log(T)
= 0.
\end{aligned}
\end{equation}

Let $P_1 = P\left(R_T \leq \log(1/\delta) O^*(\sqrt{T})\right)$ which equals to $P\left(R_T \leq O^*(\sqrt{T})\right)$ since $\log(1/\delta)=\log(\sqrt{T})=O^*(\sqrt{T})$. By Theorem 6 we have $P_1=(1-\delta) \cdot (1-\eta)^T$.

Note that $E[R_T] \leq C_0 \log(T)\cdot T$ as shown by
\begin{align*}
E[R_T] & = E\left[\max_i \sum_{t=1}^T r_i^t - \sum_{t=1}^T r_{a_t}^t\right] \leq 2E\left[\max_i \sum_{t=1}^T |r^t_i|\right] 
\leq 2T \cdot E\left[ \max_i \max_t |r^t_i|\right]\\
& \leq 2T \cdot \sum_{i=1}^K E\left[\max_t |r^t_i|\right] \leq 2T \cdot \sum_{i=1}^K \left(a_{i,i}\sqrt{2 \log T}+ \frac{4a_{i,i}}{\sqrt{\log T}}\right) \\ & \leq 2T \cdot \sum_{i=1}^K \max_i a_{i,i} \left(\sqrt{2 \log T}+ \frac{4}{\sqrt{\log T}}\right) \\
& \le C_0 \cdot T \cdot \log(T)
\end{align*}
for a constant $C_0$.

The asymptotic behavior of the second term in \eqref{eq:102} reads
\begin{align*}
&E\left[R_{T} \mathds{1}_{O^{*}(\sqrt{T})<R_T<O^{*}(\sqrt{T})+E\left[R_{T}\right]}\right]
=E\left[R_{T} \mathds{1}_{R_{T}-O^{*}(\sqrt{T}) \in\left(0, E\left[R_{T}\right]\right)}\right]\\
&=E\left[\left(R_{T}-O^{*}(\sqrt{T})\right) \mathds{1}_{R_{T}-O^{*}(\sqrt{T}) \in\left(0, E\left[R_{T}\right]\right)}\right]+O^{*}(\sqrt{T})\\
&\leq E\left[R_{T}\right] P\left(R_{T}-O^{*}(\sqrt{T}) \in\left(0, E\left[R_{T}\right]\right)\right)+O^{*}(\sqrt{T})\\
&\leq E\left[R_{T}\right] P\left(R_{T}-O^{*}(\sqrt{T})>0\right)+O^{*}(\sqrt{T})\\
&\leq C_0 \log(T) \cdot  T \cdot \left(1-P_{1}\right)+O^{*}(\sqrt{T}) = O^{*}(\sqrt{T})
\end{align*}
where at the end we use \eqref{eq:103}.

Regarding the third term in \eqref{eq:102}, we note that $R^{\prime}_T \leq E[R_T]$ by the Jensen's inequality. By using \eqref{eq:105} and again \eqref{eq:103} we obtain
\begin{align*}
&E\left[R_{T} \mathds{1}_{R_{T} \geq O^{*}(\sqrt{T})+E\left[R_{T}\right]}\right]\\ &=E\left[\left(R_{T}-R^{\prime}_T\right) \mathds{1}_{\left(R_{T}-E \left[R_T\right]\right) \geq O^{*}(\sqrt{T})}\right]+E\left[R^{\prime}_T \mathds{1}_{\left(R_{T}-E \left[R_T\right]\right) \geq O^{*}(\sqrt{T})}\right] \\
& \leq E\left[\left|R_{T}-R^{\prime}_T\right|\right]+R^{\prime}_T \cdot P\left(R_{T} \geq E \left[R_T\right]+O^{*}(\sqrt{T})\right) \\
& \leq E\left[\left|R_{T}-R^{\prime}_T\right|\right]+ E \left[R_T\right] \cdot P\left(R_{T} \geq E \left[R_T\right]+O^{*}(\sqrt{T})\right)\\
& \leq O^{*}(\sqrt{T})+C_0\cdot \log(T) \cdot T \cdot P\left(R_{T} \geq O^{*}(\sqrt{T})\right) \\
&=O^{*}(\sqrt{T})+ C_0 \cdot \log(T) \cdot T\left(1-P_{1}\right) = O^{*}(\sqrt{T}).
\end{align*}

Combining all these together we obtain $E[R_T] = O^{*}(\sqrt{T})$ which concludes the proof. 
\end{proof}

\section{Proof of results in Section 3.3}
For brevity, we define $n=T-1$.

We start by showing the following proposition that is used in the proofs. 

\begin{Proposition} \label{prop:4}
Let $G(q,\mu), q$, and $\mu$ be defined as in Theorem 6. Then for any $q \ge 1/3$, there exists a $\mu$ that satisfies the constraint $G(q,\mu) < q$. 
\end{Proposition}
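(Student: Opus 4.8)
The plan is to exploit the fact that the constraint $G(q,\mu)<q$ is slackest in the limit $\mu\to 0$, where the two shifted Gaussians coincide, and then to pass from $\mu=0$ to a genuinely positive shift by continuity. First I would evaluate $G$ at $\mu=0$. Collapsing the shift makes both integrands proportional to a single density,
\[
\int\left|q e^{-x^2/2}-(1-q)e^{-x^2/2}\right|dx=|2q-1|\int e^{-x^2/2}\,dx,
\]
and the second term in the definition of $G$ gives the identical value, since it is obtained from the first by $q\mapsto 1-q$ and $|2(1-q)-1|=|2q-1|$. After accounting for the Gaussian normalization this yields $G(q,0)=|2q-1|$; the appearance of the threshold $1/3$ in the statement is precisely the point where $|2q-1|=q$, which confirms this is the intended normalization.

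Second, I would verify the scalar inequality $G(q,0)<q$ for $q>1/3$. This splits into $2q-1<q\Leftrightarrow q<1$ on $[1/2,1)$ and $1-2q<q\Leftrightarrow q>1/3$ on $(1/3,1/2)$, so $|2q-1|<q$ holds throughout $q\in(1/3,1)$.

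Third comes the continuity step. For each fixed $x$ the integrand of either term of $G$ is continuous in $\mu$ and is dominated by $q e^{-x^2/2}+(1-q)e^{-(x-\mu)^2/2}$, whose integral stays bounded for $\mu$ in any compact set; dominated convergence then shows each term, and hence their maximum $G(q,\cdot)$, is continuous. Since $G(q,0)<q$ strictly for $q>1/3$, there is an interval $(0,\mu_0)$ on which $G(q,\mu)<q$, and any $\mu$ in that interval — a strictly positive shift, as the construction underlying Theorem~\ref{Th:2} requires — establishes the claim.

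The step I expect to be the main obstacle is the boundary value $q=1/3$, where $G(1/3,0)=1/3=q$ and the continuity argument no longer delivers a strict inequality. In fact the triangle inequality $\int|f|\ge\left|\int f\right|$ forces $G(q,\mu)\ge|2q-1|$ for every $\mu$, with equality only at $\mu=0$, so at exactly $q=1/3$ no positive shift can push $G$ strictly below $q$. I would therefore treat $q=1/3$ as the delicate endpoint, either restricting the strict constraint to $q>1/3$ or extracting an explicit tail estimate for the crossing point $x^{*}=\tfrac{\mu}{2}+\tfrac1\mu\ln\tfrac{q}{1-q}$ to quantify how $G(q,\mu)-|2q-1|$ behaves as $\mu\to 0^{+}$; the remaining interior range $q\in(1/3,1)$ is handled cleanly by the evaluation-plus-continuity argument above.
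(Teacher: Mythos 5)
Your proof follows essentially the same route as the paper's: evaluate $G$ at $\mu=0$ to get $|1-2q|$ (the paper records exactly this boundary value, along with the limit $\max(q,1-q)$ as $\mu\to\infty$), check the scalar inequality $|2q-1|<q$, and invoke continuity of $G(q,\cdot)$ to push the strict inequality to some interval $(0,\mu_0]$ of positive shifts. Your dominated-convergence justification of continuity is, if anything, cleaner than the paper's, which first derives closed forms for $G_1$ and $G_2$ by splitting the integrals at the crossing point $g(\mu)=\frac{\mu}{2}-\frac{\log\frac{1-q}{q}}{\mu}$ and then reads off continuity from those expressions.

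Your endpoint analysis at $q=1/3$ is not merely a stylistic caveat; it identifies a genuine error in the paper. The paper's proof asserts ``Since $q\ge \frac{1}{3}$, then $|1-2q|<q$,'' which fails at $q=1/3$, where $|1-2q|=\frac{1}{3}=q$. Moreover, as you observe, the triangle inequality gives $G(q,\mu)\ge\left|\int\bigl(q f_0-(1-q)f_1\bigr)\right|=|2q-1|$, and for every $\mu>0$ the integrand $q f_0-(1-q)f_1$ changes sign (its log-ratio is affine in $x$ with slope $-\mu$), so the inequality is strict; hence $G(1/3,\mu)>\frac{1}{3}$ for all $\mu>0$, and no admissible $\mu$ exists at the endpoint. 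The proposition as stated is therefore false at $q=1/3$, and what both your argument and the paper's actually establish is the claim for $q>1/3$. Your suggestion to restrict the statement to $q>1/3$ is the correct fix.
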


\begin{proof} 
Let us denote
$G_1=\int\left|q f_{0}(x)-(1-q) f_{1}(x)\right| d x, G_2=\int\left|(1-q) f_{0}(x)-q f_{1}(x)\right| d x$.
Then we have
\begin{align*}
G_{1}(q, \mu)& =\int\left|q f_{0}(x)-(1-q) f_{1}(x)\right| d x\\
             &=\int\left(q f_{0}(x)-(1-q) f_{1}(x)\right) \mathds{1}_{q f_{0}(x)>(1-q) f_{1}(x)} d x\\
             & \indent +\int\left(-q f_{0}(x)+(1-q) f_{1}(x)\right) \mathds{1}_{q f_{0}(x)<(1-q) f_{1}(x)} d x\\
             & =\int\left(q f_{0}(x)-(1-q) f_{1}(x)\right) \mathds{1}_{x<g(\mu)}dx
              +\int\left(-q f_{0}(x)+(1-q) f_{1}(x)\right) \mathds{1}_{x>g(\mu)} d x\\
             & =\frac{1}{\sqrt{2 \pi}}\left[\int_{-\infty}^{g(\mu)}\left(q e^{-\frac{x^{2}}{2}}-(1-q) e^{-\frac{(x-\mu)^{2}}{2}}\right) d x + \int_{g(\mu)}^{\infty}\left(-q e^{-\frac{x^{2}}{2}}+(1-q) e^{-\frac{(x-\mu)^{2}}{2}}\right) d x\right]\\
             & =\frac{1}{\sqrt{2 \pi}}\left[q \int_{-g(\mu)}^{g(\mu)} e^{-\frac{x^{2}}{2}}-(1-q) \int_{-g(\mu)+\mu}^{g(\mu)-\mu} e^{-\frac{x^{2}}{2}}\right]
\end{align*}
where $g(\mu)=\frac{1}{2} \cdot \mu-\frac{\log(\frac{1-q}{q})}{\mu}$. Similarly we get
\begin{equation*}
G_2(q,\mu)=\frac{1}{\sqrt{2 \pi}}\left[(1-q) \int_{-g(\mu)}^{g(\mu)} e^{-\frac{x^{2}}{2}}-q \int_{-g(\mu)+\mu}^{g(\mu)-\mu} e^{-\frac{x^{2}}{2}}\right].
\end{equation*}
It is easy to establish continuity of $G_1(q,\mu)$ and $G_2(q,\mu)$ on $\left[0,\infty\right)$, as well as the continuity of $G(q,\mu)$. Indeed, we have 
\begin{equation*}
G(q,\mu)=
          \begin{cases}
            |1-2q| & \mu=0\\
            \max(q,1-q) &  \mu \to \infty \>.\\
          \end{cases}
\end{equation*}
Since $q\ge \frac{1}{3}$, then $|1-2q|<q$. From continuity of $G(q,\mu)$, there exists $\mu_0>0$ such that
$
    G(q,\mu)<q \text{ for any } \mu\leq \mu_0.
$
\end{proof}

\begin{proof}[Proof of Theorem 11]
\noindent As in Assumption 1, let the inferior arm set be $I$ and the superior one be $S$, respectively, $P(I)=q$ and $P(S)=1-q$. Arms in $I$ follow $f_0(x)=\mathcal{N}(0,1)$ and arms in $S$ follow $f_1(x)=\mathcal{N}(\mu,1)$ where $\mu>0$. According to Assumption 1, at the first step the player pulls an arm from either $I$ or $S$ and receives reward $y_1$. At time step $i>1$, the reward is $y_i$ and let $b_i$ represent a policy of the player. We can always define $b_i$ as
\begin{equation*}b_i=
\begin{cases}
   1 & \text{if the chosen arm at step $i$ is not in the same arm set as the initial arm},\\
   0 & \text{otherwise}.
\end{cases}
\end{equation*}
Let $a_i \in \left\{0,1\right\}$ be the actual arm played at step $i$. It suffices to only specify $a_i$ is in arm set $I$ ($a_i=0$) or $S$ ($a_i=1$) since the arms in $I$ and $S$ are identical. The connection between $a_i$ and $b_i$ is explicitly given by $b_i=|a_i-a_1|$.
By Assumption 1, it is easy to argue that $b_i=S^{\prime}_i(y_1,y_2,...,y_{i-1})$ for a set of functions $S^{\prime}_2,S^{\prime}_3,\ldots,S^{\prime}_n,S^{\prime}_{n+1}$. We proceed with the following lemma. 

\begin{Lemma}
\label{le:5}
Let the rewards of the arms in set $I$ follow any $L_1$ distribution $f_0(x)$ and in set $S$ follow any $L_1$ distribution $f_1(x)$ where the means satisfy $\mu(f_1)>\mu(f_0)$. Let $B$ be the number of arms played in the game in set $S$. Let us assume the player meets Assumption 1. Then no matter what strategy the player takes, we have
\begin{center}
$\left|\frac{E[B]-(1-q)\cdot (n+1)}{n+1}\right|\leq \epsilon$
\end{center}
where $\epsilon, T, f_0,f_1$ satisfy 
\begin{center}
$G(q,f_0,f_1)+(1-q)(n-1) \int\left|f_{0}(x)-f_{1}(x)\right|\leq\epsilon$,
\end{center}
\begin{center}
$G(q,f_0,f_1)=\max \left\{\int\left|q f_{0}(x)-(1-q) f_{1}(x)\right| d{x}, \int\left|(1-q) f_{0}(x)-q f_{1}(x)\right| d{x}\right\}$.
\end{center}
\end{Lemma}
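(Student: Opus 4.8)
The plan is to reduce the claim to a change-of-measure estimate between the two scenarios fixed by the initial pull, and then to control how fast these two measures can drift apart as the game proceeds.

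\textbf{Reduction.} Using the encoding already introduced, I would record the (binary) set-membership of the arm played at step $i$ as $a_i = a_1 \oplus b_i$, where $b_i = S'_i(y_1,\ldots,y_{i-1})$ is the deterministic policy and $b_1 = 0$. Writing $B = \sum_{i=1}^{n+1}\mathds{1}[a_i \in S]$ and conditioning on the two outcomes of the first pull (which has $P(a_1\in S)=1-q$), I would express
\[
E[B] = (1-q)\Big((n+1) - \sum_{i=2}^{n+1}\beta_i^{(1)}\Big) + q\sum_{i=2}^{n+1}\beta_i^{(0)},
\]
where $\beta_i^{(c)} = P(b_i = 1 \mid a_1 = c)$ and I have used $\beta_1^{(c)}=0$. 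This yields the exact identity
\[
E[B] - (1-q)(n+1) = \sum_{i=2}^{n+1}\big(q\,\beta_i^{(0)} - (1-q)\,\beta_i^{(1)}\big),
\]
so it suffices to bound the right-hand side by $\epsilon\,(n+1)$.

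\textbf{Change of measure.} Let $p_c^{(k)}(y_1,\ldots,y_k)$ be the joint density of the first $k$ rewards in the scenario $a_1 = c$. Since, conditionally on the past, the reward at step $j$ is an independent draw from $f_{a_j}=f_{c\oplus b_j}$, this density factorizes as $p_c^{(k)} = \prod_{j=1}^{k} f_{c\oplus b_j(y_1,\ldots,y_{j-1})}(y_j)$. Because $\{b_i=1\}$ is a fixed event in the reward history, I would write $q\,\beta_i^{(0)} - (1-q)\,\beta_i^{(1)} = \int_{\{b_i=1\}}\big(q\,p_0^{(i-1)} - (1-q)\,p_1^{(i-1)}\big)$ and bound its absolute value by $D_{i-1}:=\int\big|q\,p_0^{(i-1)} - (1-q)\,p_1^{(i-1)}\big|$.

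\textbf{Telescoping.} The core is to control the growth of $D_k$. For a fixed history the branch $b_{k+1}$ is determined, so integrating out only the last reward and using the elementary inequality $\int|A f_0 - C f_1| \le |A-C| + \min(A,C)\int|f_0-f_1|$ (with $A=q\,p_0^{(k)}$, $C=(1-q)\,p_1^{(k)}$, and the symmetric version when $b_{k+1}=1$), then integrating over the history, I would obtain $D_{k+1}\le D_k + (1-q)\int|f_0-f_1|$, where the factor $1-q$ comes from $\min(A,C)\le C$ together with $\int p_1^{(k)}=1$. With the base case $D_1 = \int|q f_0 - (1-q)f_1| = G_1 \le G(q,f_0,f_1)$, induction gives $D_k \le G + (k-1)(1-q)\int|f_0-f_1| \le G + (n-1)(1-q)\int|f_0-f_1| \le \epsilon$ for every $k\le n$. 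Summing the $n$ terms $D_1,\ldots,D_n$ and dividing by $n+1$ then yields the stated bound.

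The main obstacle is the telescoping step: one must make rigorous that the reward densities depend on the (arbitrary, history-dependent) strategy only through the deterministic branch $b_{k+1}$, so that the one-step drift in total variation is uniformly at most $(1-q)\int|f_0-f_1|$ no matter what the player does. This is precisely where the indistinguishability of arms within each set and the sequential structure of the game are used, and getting the correct coefficient (so the final per-step quantity is exactly $\epsilon$) requires care in choosing which of $A,C$ to dominate by.
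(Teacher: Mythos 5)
Your proof is correct and follows essentially the same route as the paper's: condition on the first pull, reduce the deviation $E[B]-(1-q)(n+1)$ to an $L^1$ estimate on the signed measure $q\prod_j f_{b_j} - (1-q)\prod_j f_{1-b_j}$, and peel off one reward coordinate at a time to show this distance starts at $G(q,f_0,f_1)$ and grows by at most $(1-q)\int|f_0-f_1|$ per step. The only organizational difference is that you split the deviation into per-step terms bounded by $D_1,\ldots,D_n$ and run a single telescoping recursion on $D_k$, whereas the paper bounds the whole sum by $(n+1)$ times the full-horizon distance $D_n$ and then performs one weighted peel followed by a separate induction for the unweighted product difference; both routes produce the identical final bound.
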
 

\begin{proof}
We have
\begin{equation*}
E[B]=\int\left(a_{1}+a_{2}+\cdots+a_{n+1}\right) f_{a_{1}}\left(y_{1}\right) f_{a_{2}}\left(y_{2}\right) \ldots f_{a_{n}}\left(y_{n}\right) d{y_{1}} d{y_{2}} \ldots d{y_{n}}.
\end{equation*}
If $a_1=0$, then $a_i=b_i$ and
\begin{equation*}
 \indent E\left[B | a_{1}=0\right]= \int\left(0+b_{2}\left(y_{1:1}\right)+\ldots+b_{n+1}\left(y_{1: n}\right)\right) f_{0}\left(y_{1}\right) f_{b_{2}}\left(y_{2}\right) \ldots f_{b_{n}}\left(y_{n}\right) d{y_{1}} d{y_{2}} \ldots d{y_{n}}.
 \end{equation*}
If $a_1=1$, then $1-a_i=b_i$ and 
\begin{align*}
\indent E\left[B | a_{1}=1\right]= \int\left(1+1-b_{2}\left(y_{1:1}\right)+\cdots+1-b_{n+1}\left(y_{1: n}\right)\right) f_{1}\left(y_{1}\right) \ldots f_{1-b_{n}}\left(y_{n}\right) d{y_{1}} d{y_{2}} \ldots d{y_{n}}.
\end{align*}
This gives us
\begin{equation*}
\begin{split}
 E[B]&= q\cdot E\left[B | a_{1}=0\right]+(1-q) \cdot  E\left[B | a_{1}=1\right] 
 \\&= (1-q)(n+1) \\ &  +\int\left(b_{2}+\cdots+b_{n+1}\right) \cdot \left(q \cdot f_{0}\left(y_{1}\right) \ldots f_{b_{n}}\left(y_{n}\right)-(1-q) \cdot f_{1}\left(y_{1}\right) \ldots f_{1-b_{n}}\left(y_{n}\right)\right) d{y_{1}} d{y_{2}} \ldots d{y_{n}}.
\end{split}
\end{equation*}
By defining $b_1=0$, we have
\begin{multline*}
E[B]= (1-q) \cdot (n+1) +
\\ \int\left(b_{2}+\cdots+b_{n+1}\right)\left(q \cdot f_{b_{1}}\left(y_{1}\right) \ldots f_{b_{n}}\left(y_{n}\right)-(1-q) \cdot f_{1-b_{1}}\left(y_{1}\right) \ldots f_{1-b_{n}}\left(y_{n}\right)\right) d{y_{1}} d{y_{2}} \ldots d{y_{n}}.
\end{multline*}

For any $1 \leq m\leq n$ we also derive
\begin{align}
\label{eq:a}
\begin{split}
& \int\left|\prod_{i=1}^{m} f_{b_{i}}\left(y_{i}\right)-\prod_{i=1}^{m} f_{1-b_{i}}\left(y_{i}\right)\right| d{y_{1}} d{y_{2}} \dots d{y_{m}} \\ 
& \leq \int \prod_{i=1}^{m-1} f_{b_{i}}\left(y_{i}\right)\left|f_{b_{m}}\left(y_{m}\right)-f_{1-b_{m}}\left(y_{n}\right)\right| d{y_{1}} d{y_{2}} \ldots d{y_{m}} + \\ &
\quad \int\left|\prod_{i=1}^{m-1} f_{b_{i}}\left(y_{i}\right)-\prod_{i=1}^{m-1} f_{1-b_{i}}\left(y_{i}\right)\right| f_{1-b_{m}}\left(y_{m}\right) d{y_{1}} d{y_{2}} \ldots d{y_{m}}
\\ & \leq \int\left|f_{0}(x)-f_{1}(x)\right| d{x}+\int\left|\prod_{i=1}^{m-1} f_{b_{i}}\left(y_{i}\right)-\prod_{i=1}^{m-1} f_{1-b_{i}}\left(y_{i}\right)\right| f_{1-b_{m}}\left(y_{m}\right) d{y_{1}} d{y_{2}} \dots d{y_{m}}
\\& = \int\left|f_{0}(x)-f_{1}(x)\right| d{x}+\int\left|\prod_{i=1}^{m-1} f_{b_{i}}\left(y_{i}\right)-\prod_{i=1}^{m-1} f_{1-b_{i}}\left(y_{i}\right)\right| d{y_{1}} d{y_{2}} \dots d{y_{m-1}}
\\ & \leq 2 \cdot \int\left|f_{0}(x)-f_{1}(x)\right| d{x}+\int\left|\prod_{i=1}^{m-2} f_{b_{i}}\left(y_{i}\right)-\prod_{i=1}^{m-2} f_{1-b_{i}}\left(y_{i}\right)\right| d{y_{1}} d{y_{2}} \dots d{y_{m-2}}
\\ & \leq m \int\left|f_{0}(x)-f_{1}(x)\right|.
\end{split}
\end{align}

This provides 
\begin{align*}
& \left|\frac{E[B]-(1-q) \cdot (n+1)}{n+1}\right|\\
& \leq \int\left|q \cdot \prod_{i=1}^{n} f_{b_{i}}\left(y_{i}\right)-(1-q) \cdot \prod_{i=1}^{n} f_{1-b_{i}}\left(y_{i}\right)\right| d{y_{1}} d{y_{2}} \dots d{y_{n}} \\
& \leq \int \prod_{i=1}^{n-1} f_{b_{i}}\left(y_{i}\right)\left|q \cdot f_{b_{n}}\left(y_{n}\right)-(1-q) \cdot f_{1-b_{n}}\left(y_{n}\right)\right| d{y_{1}} d{y_{2}} \ldots d{y_{n}} +\\ & \quad \int\left|(1-q) \cdot \prod_{i=1}^{n-1} f_{b_{i}}\left(y_{i}\right)-(1-q) \cdot \prod_{i=1}^{n-1} f_{1-b_{i}}\left(y_{i}\right)\right| f_{1-b_{n}}\left(y_{n}\right) d{y_{1}} d{y_{2}} \ldots d{y_{n}}
\\ & \leq \max\left\{\int\left|q \cdot f_{0}(x)-(1-q) \cdot f_{1}(x)\right| d{x},\int\left|(1-q) \cdot f_{0}(x)-q \cdot f_{1}(x)\right| d{x}\right\}+\\ &
\quad (1-q) \cdot \int\left|\prod_{i=1}^{n-1} f_{b_{i}}\left(y_{i}\right)-\prod_{i=1}^{n-1} f_{1-b_{i}}\left(y_{i}\right)\right| d{y_{1}} d{y_{2}} \dots d{y_{n-1}}
\\ & \leq \max\left\{\int\left|q \cdot f_{0}(x)-(1-q) \cdot f_{1}(x)\right| d{x},\int\left|(1-q) \cdot f_{0}(x)-q \cdot f_{1}(x)\right| d{x}\right\}+\\& \quad (1-q)\cdot (n-1)\cdot  \int\left|f_{0}(x)-f_{1}(x)\right|,
\end{align*}
where the last inequality follows from (\ref{eq:a}).The statement of the lemma now follows. 
\end{proof}
According to Proposition \ref{prop:4}, there is such $\mu$ satisfying the constraint $G(q,\mu)< q$. Note that $G(q,\mu)=G(q,f_0,f_1).$
Then we can choose $\epsilon$ to be any quantity such that
$
    G(q,\mu)<\epsilon<q.
$
Finally, there is $T$ satisfying $T\leq \frac{\epsilon-G(q,\mu)}{(1-q) \cdot \int\left|f_{0}(x)-f_{1}(x)\right|}+2$
that gives us 
\begin{center}
$G(q,\mu)+(1-q)(T-2) \int\left|f_{0}(x)-f_{1}(x)\right|\leq \epsilon.$
\end{center}

By choosing $\epsilon, T, \mu$ as above, by Lemma \ref{le:5}
we have
\begin{equation*}
    \left|\frac{E[B]-(1-q)\cdot T}{T}\right| < \epsilon,
\end{equation*}
which is equivalent to $E[B]<(1-q+\epsilon) \cdot T$. 
Therefore, regret $R^{\prime}_T$ satisfies, with $A$ being the number of arm pulls from $I$, inequality
\begin{align*}
    R^{\prime}_T=&\sum_t\max_k(\mu_k)-\sum_tE[y_t]
    =T\mu-\sum_tE[y_t] 
    =T  \mu-(E[B] \cdot \mu+E[A] \cdot 0)
                \\\geq&T  \mu-(1-q+\epsilon)\mu T
                =(q-\epsilon)\mu T.
\end{align*}
This yields
$
R^L_T = \inf\sup{R^{\prime}_T} \geq (q-\epsilon) \cdot \mu T.
$
\end{proof}

Theorem 12 follows from Theorem 11 and Proposition \ref{prop:4}.


\begin{proof}[Proof of Theorem 13]
The assumption here is the special case of Assumption 1 where there are two arms and $q=1/2$. Set $I$ follows $f_0$ and $S$ follows $f_1$ where $\mu(f_0)<\mu(f_1)$. 

In the same was as in the proof of Theorem 8 we obtain
\begin{center}
$R_L(T) \geq \left(\frac{1}{2}-\epsilon\right) \cdot T \cdot \mu$
\end{center}
under the constraint that $n/2 \cdot \int\left|f_0-f_1\right| = n/2 \cdot \text{TV}(f_0,f_1)<\epsilon$ where TV stands for total variation. Here we use $G(1/2,\mu)=1/2\cdot \text{TV}(f_0,f_1)$. Setting $\epsilon=1/4$ yields the statement.
\end{proof}

In the Gaussian case it turns out that $\epsilon =1/4$ yields the highest bound. 
For total variation of Gaussian variables $N(\mu_1,\sigma^2_1)$ and $N(\mu_2,\sigma^2_2)$, \citet{devroye2018total} show that
\begin{center}
$\operatorname{TV}\left(\mathcal{N}\left(\mu_{1}, \sigma_{1}^{2}\right), \mathcal{N}\left(\mu_{2}, \sigma_{2}^{2}\right)\right) \leq \frac{3\left|\sigma_{1}^{2}-\sigma_{2}^{2}\right|}{2 \sigma_{1}^{2}}+\frac{\left|\mu_{1}-\mu_{2}\right|}{2 \sigma_{1}},
$
\end{center}
which in our case yields $TV \leq \frac{\mu}{2}$.
From this we obtain
$\mu \cdot T \geq \epsilon$ and in turn $R^{L}_T \geq \epsilon \cdot (\frac{1}{2}-\epsilon)$. 
The maximum of the right-hand side is obtained at $\epsilon=\frac{1}{4}$. This justifies the choice of $\epsilon$ in the proof of Theorem 1.

\section{Contribution}

Our contributions are two-fold. On the one hand, our optimal regret holds for $T$ being large enough in unbounded bandits. On the other hand, the lower bound regret suggests a lower bound on $T$ to achieve sublinear but not necessarily optimal regret as a by-product. The question for any $T$ points a future direction.

\subsection{Upper Bounds}
\begin{figure}[ht]
\centering
\subfloat{\includegraphics[width=0.8\textwidth, keepaspectratio]{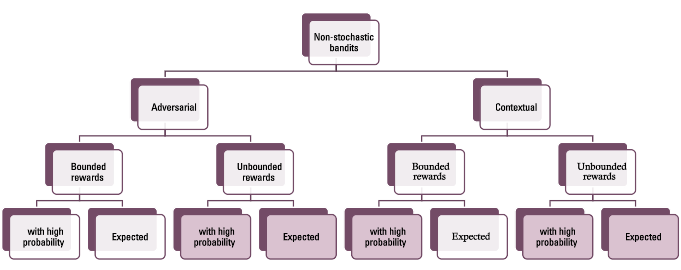}}
\caption{The framework of regret analysis in non-stochastic bandits.}
\label{fig:cont}
\end{figure}

As we can see in Figure~\ref{fig:cont}, the domain of regret analyses for non-stochastic regret bounds can fall into 8 sub-categories by taking all the possible combinations of $A$ and $B$ and $C$, where $A = \{Contextual, Adversarial\}, B = \{Bounded \, rewards, Unbounded \, rewards\}, C= \{High \, probability \, bound, Expected \, bound\}$ to name a few. The colored boxes in the leaf nodes correspond to the results in this paper and the remained boxes are already covered by the existing literature. For contextual bandits, establishing a high probability regret bound is non-trivial even for bounded rewards since regret in a contextual setting significantly differs from the one in the adversarial setting. To this end, we propose a brand new algorithm EXP4.P that incorporates EXP3.P in adversarial MAB with EXP4. The analysis for regret of EXP4.P in unbounded cases is quite general and can be extended to EXP3.P without too much effort. 

To conclude, the theoretical analyses regarding the upper bound fill the gap between the regret bound in~\citet{auer2002nonstochastic} and all others in Figure~\ref{fig:cont}.

\subsection{Lower Bounds}

\begin{table}[h]
  \caption{\small Boundary for $T$ as a function of $\mu$}
  \label{table-1}
  \centering
  \resizebox{0.6\textwidth}{!}{%
  \begin{tabular}{llllll}
    \toprule
    \cmidrule(r){2-4}
    $\mu$   &  $10^{-5}$ & $10^{-4}$ & $10^{-3}$ & $10^{-2}$ & $10^{-1}$ \\
    \midrule
    Upper bound for $T$ &  25001  & 2501 & 251 & 26 & 3.5 \\
    \bottomrule
  \end{tabular}%
  }
\end{table}

In view of unbounded bandits, the previous lower bound in~\citet{auer2002nonstochastic} does not hold since unboundedness apparently increases regret. The relationship between the lower bound and time horizon is listed in Table ~\ref{table-1} to facilitate the understanding of the lower bound. More precisely, Table \ref{table-1} provides the values of the relationship between $\mu$
and largest $T$ in the Gaussian case where the inferior arms are distributed based on the standard normal and the superior arms have mean $\mu>0$ and variance 1. As we can observe in the table, the maximum of $T$ for the lower bound to hold changes with instances. A small $\mu$ means the lower bound on regret of order $T$ holds for larger $T$. For example, there is no way to attain regret lower than $T\cdot 10^{-4}/4$ for any $1\le T\le 2501$. The function decreases very quickly. This coincides with the intuition since it would be difficult to distinguish between the optimal arm and the non-optimal ones given their rewards are close. A lower bound for large $T$ remains open.   

\end{document}